
\documentclass{article}%
\usepackage{iclr2017_conference,times}
\usepackage{amsmath}
\usepackage{amsthm}
\usepackage{amssymb}
\usepackage{titlesec}
\usepackage{xcolor}
\usepackage{hyperref}
\usepackage[normalem]{ulem}
\usepackage[pdftex]{graphicx}
\usepackage[tight]{subfigure}
\usepackage{caption}
\usepackage{amsfonts}%
\setcounter{MaxMatrixCols}{30}
\providecommand{\U}[1]{\protect\rule{.1in}{.1in}}
\iclrfinalcopy
\newtheoremstyle{thm}{1.0em}{1.0em}{\itshape}{}{\bfseries}{.}{.5em}{}
\theoremstyle{thm}

\newtheorem{proposition}{Proposition}
\newtheoremstyle{rem}{1.0em}{1.0em}{\normalfont}{}{\bfseries}{.}{.5em}{}
\theoremstyle{rem}

\newenvironment{proof}[1][Proof]{\par\vspace{1.0em}\noindent\textbf{#1. }}{\hfill $\Box$\par\vspace{1.0em}}

\ifx\pdfoutput\relax\let\pdfoutput=\undefined\fi
\newcount\msipdfoutput
\ifx\pdfoutput\undefined\else
\ifcase\pdfoutput\else
\msipdfoutput=1
\ifx\paperwidth\undefined\else
\ifdim\paperheight=0pt\relax\else\pdfpageheight\paperheight\fi
\ifdim\paperwidth=0pt\relax\else\pdfpagewidth\paperwidth\fi
\fi\fi\fi
\begin{document}

\title{An Information-Theoretic Framework for Fast and Robust Unsupervised Learning
via Neural Population Infomax}
\author{Wentao Huang \& Kechen Zhang \\
Department of Biomedical Engineering\\
Johns Hopkins University School of Medicine\\
Baltimore, MD 21205, USA \\
\texttt{\{whuang21,kzhang4\}@jhmi.edu}\\
}
\maketitle

\begin{abstract}
A framework is presented for unsupervised learning of representations based on
infomax principle for large-scale neural populations. We use an asymptotic
approximation to the Shannon's mutual information for a large neural
population to demonstrate that a good initial approximation to the global
information-theoretic optimum can be obtained by a hierarchical infomax
method. Starting from the initial solution, an efficient algorithm based on
gradient descent of the final objective function is proposed to learn
representations from the input datasets, and the method works for complete,
overcomplete, and undercomplete bases. As confirmed by numerical experiments,
our method is robust and highly efficient for extracting salient features from
input datasets. Compared with the main existing methods, our algorithm has a
distinct advantage in both the training speed and the robustness of
unsupervised representation learning. Furthermore, the proposed method is easily extended to the supervised or unsupervised model for training deep structure networks.

\end{abstract}

\section{Introduction}

\label{Introduction}

How to discover the unknown structures in data is a key task for machine
learning. Learning good representations from observed data is important
because a clearer description may help reveal the underlying structures.
Representation learning has drawn considerable attention in recent years
\citep{Bengio(2013-representation)}. One category of algorithms for
unsupervised learning of representations is based on probabilistic models
\citep{Lewicki(2000-learning),Hinton(2006-reducing),Lee(2008-IP-sparse)}, such
as maximum likelihood (ML) estimation, maximum a posteriori (MAP) probability
estimation, and related methods. Another category of algorithms is based on
reconstruction error or generative criterion
\citep{Olshausen(1996-emergence),Aharon(2006-k),Vincent(2010-stacked),Mairal(2010-online),Goodfellow(2014-IP-generative)},
and the objective functions usually involve squared errors with additional
constraints. Sometimes the reconstruction error or generative criterion may
also have a probabilistic interpretation \citep{Olshausen(1997-sparse),Vincent(2010-stacked)}.

Shannon's information theory is a powerful tool for description of stochastic
systems and could be utilized to provide a characterization for good
representations \citep{Vincent(2010-stacked)}. However, computational
difficulties associated with Shannon's mutual information (MI)
\citep{Shannon(1948-mathematical)} have hindered its wider applications. The
Monte Carlo (MC) sampling \citep{Yarrow(2012-fisher)} is a convergent method
for estimating MI with arbitrary accuracy, but its computational inefficiency
makes it unsuitable for difficult optimization problems especially in the
cases of high-dimensional input stimuli and large population networks. Bell
and Sejnowski \citep{Bell(1995-information),Bell(1997-independent)} have
directly applied the infomax approach \citep{Linsker(1988-self)} to
independent component analysis (ICA) of data with independent non-Gaussian
components assuming additive noise, but their method requires that the number
of outputs be equal to the number of inputs. The extensions of ICA to
overcomplete or undercomplete bases incur increased algorithm complexity and
difficulty in learning of parameters \citep{Lewicki(2000-learning),Kreutz-Delgado(2003-dictionary),Karklin(2011-IP-efficient)}.

Since Shannon MI is closely related to ML and MAP
\citep{Huang(2016-information)}, the algorithms of representation learning
based on probabilistic models should be amenable to information-theoretic
treatment. Representation learning based on reconstruction error could be
accommodated also by information theory, because the inverse of Fisher
information (FI) is the Cram\'{e}r-Rao lower bound on the mean square decoding
error of any unbiased decoder \citep{Rao(1945-information)}. Hence minimizing
the reconstruction error potentially maximizes a lower bound on the MI \citep{Vincent(2010-stacked)}.

Related problems arise also in neuroscience. It has long been suggested that
the real nervous systems might approach an information-theoretic optimum for
neural coding and computation
\citep{Barlow(1961-possible),Atick(1992-could),Borst(1999-information)}.
However, in the cerebral cortex, the number of neurons is huge, with about
$10^{5}$ neurons under a square millimeter of cortical surface
\citep{Carlo(2013-structural)}. It has often been computationally intractable
to precisely characterize information coding and processing in large neural populations.

To address all these issues, we present a framework for unsupervised learning
of representations in a large-scale nonlinear feedforward model based on
infomax principle with realistic biological constraints such as neuron models
with Poisson spikes. First we adopt an objective function based on an
asymptotic formula in the large population limit for the MI between the
stimuli and the neural population responses \citep{Huang(2016-information)}.
Since the objective function is usually nonconvex, choosing a good initial
value is very important for its optimization. Starting from an initial value,
we use a hierarchical infomax approach to quickly find a tentative global
optimal solution for each layer by analytic methods. Finally, a fast
convergence learning rule is used for optimizing the final objective function
based on the tentative optimal solution. Our algorithm is robust and can learn
complete, overcomplete or undercomplete basis vectors quickly from different
datasets. Experimental results showed that the convergence rate of our method
was significantly faster than other existing methods, often by an order of
magnitude. More importantly, the number of output units processed by our
method can be very large, much larger than the number of inputs. As far as we
know, no existing model can easily deal with this situation.

\section{Methods}

\subsection{Approximation of Mutual Information for Neural Populations}

\label{Approximation}

Suppose the input $\mathbf{x}$ is a $K$-dimensional vector, $\mathbf{x}%
=(x_{1},\cdots,x_{K})^{T}$, the outputs of $N$ neurons are denoted by a
vector, $\mathbf{r}=(r_{1},\cdots,r_{N})^{T}$, where we assume $N$ is large,
generally $N\gg K$. We denote random variables by upper case letters, e.g.,
random variables $X$ and $R$, in contrast to their vector values $\mathbf{x}$
and $\mathbf{r}$. The MI between $X$ and $R$ is defined by
$I(X;R)=\left\langle \ln\frac{p(\mathbf{x}|\mathbf{r})}{p(\mathbf{x}%
)}\right\rangle _{\mathbf{r},\mathbf{x}}$, where $\left\langle \cdot
\right\rangle _{\mathbf{r},\mathbf{x}}$ denotes the expectation with respect
to the probability density function (PDF) $p(\mathbf{r},\mathbf{x})$.

Our goal is to maxmize MI $I(X;R)$ by finding the optimal PDF $p(\mathbf{r}%
|\mathbf{x})$ under some constraint conditions, assuming that $p(\mathbf{r}%
|\mathbf{x})$ is characterized by a noise model and activation functions
$f(\mathbf{x};{\boldsymbol{\theta}}_{n})$ with parameters ${\boldsymbol{\theta
}}_{n}$ for the $n$-th neuron ($n=1,\cdots,N$). In other words, we optimize
$p(\mathbf{r}|\mathbf{x})$ by solving for the optimal parameters
${\boldsymbol{\theta}}_{n}$. Unfortunately, it is intractable in most cases to
solve for the optimal parameters that maximizes $I(X;R)$. However, if
$p(\mathbf{x})$ and $p(\mathbf{r}|\mathbf{x})$ are twice continuously
differentiable for almost every $\mathbf{x}\in%
\mathbb{R}
^{K}$, then for large $N$ we can use an asymptotic formula to approximate the
true value of $I(X;R)$ with high accuracy \citep{Huang(2016-information)}:
\begin{equation}
I(X;R)\simeq I_{G}=\frac{1}{2}\left\langle \ln\left(  \det\left(
\frac{\mathbf{G}(\mathbf{x})}{2\pi e}\right)  \right)  \right\rangle
_{\mathbf{x}}+H(X)\text{,} \label{Ia}%
\end{equation}
where $\det\left(  \cdot\right)  $ denotes the matrix determinant and
$H(X)=-\left\langle \ln p(\mathbf{x})\right\rangle _{\mathbf{x}}$ is the
stimulus entropy,
\begin{align}
&  \mathbf{G}(\mathbf{x})=\mathbf{J}(\mathbf{x})+\mathbf{P}\left(
\mathbf{x}\right)  \text{,}\label{Gx}\\
&  \mathbf{J}(\mathbf{x})=-\left\langle \frac{\partial^{2}\ln p\left(
\mathbf{r}|\mathbf{x}\right)  }{\partial\mathbf{x}\partial\mathbf{x}^{T}%
}\right\rangle _{\mathbf{r}|\mathbf{x}}\text{,}\label{Jx}\\
&  \mathbf{P}(\mathbf{x})=-\frac{\partial^{2}\ln p\left(  \mathbf{x}\right)
}{\partial\mathbf{x}\partial\mathbf{x}^{T}}\text{.} \label{Px}%
\end{align}
Assuming independent noises in neuronal responses, we have $p(\mathbf{r}%
|\mathbf{x})=\prod_{n=1}^{N}p(r_{n}|\mathbf{x};\boldsymbol{\theta}_{n})$, and
the Fisher information matrix becomes $\mathbf{J}(\mathbf{x})\approx
N\sum_{k=1}^{K_{1}}\alpha_{k}\mathbf{S}(\mathbf{x};{\boldsymbol{\theta}}_{k}%
)$, where $\mathbf{S}(\mathbf{x};\boldsymbol{\theta}_{k})=\left\langle
\frac{\partial\ln p(r|\mathbf{x};\boldsymbol{\theta}_{k})}{\partial\mathbf{x}%
}\frac{\partial\ln p(r|\mathbf{x};\boldsymbol{\theta}_{k})}{\partial
\mathbf{x}^{T}}\right\rangle _{r|\mathbf{x}}$ and $\alpha_{k}>0$
($k=1,\cdots,K_{1}$) is the population{ }density of parameter
${\boldsymbol{\theta}}_{k}$, with $\sum_{k=1}^{K_{1}}\alpha_{k}=1$, and $1\leq
K_{1}\leq N$ (see Appendix \ref{formulas} for details). Since the cerebral
cortex usually forms functional column structures and each column is composed
of neurons with the same properties \citep{Hubel(1962-receptive)}, the
positive integer $K_{1}$\ can be regarded as the number of distinct classes in
the neural population.

Therefore, given the activation function $f(\mathbf{x};{\boldsymbol{\theta}%
}_{k})$, our goal becomes to find the optimal population distribution density
$\alpha_{k}$ of parameter vector ${\boldsymbol{\theta}}_{k}$ so that the MI
between the stimulus $\mathbf{x}$ and the response $\mathbf{r}$ is maximized.
By Eq. (\ref{Ia}), our optimization problem can be stated as follows:
\begin{align}
&  \mathsf{minimize}\mathrm{\;}Q_{G}[\left\{  \alpha_{k}\right\}  ]=-\frac
{1}{2}\left\langle \ln\left(  \det\left(  \mathbf{G}(\mathbf{x})\right)
\right)  \right\rangle _{\mathbf{x}}\text{,}\label{minQa}\\
&  \mathsf{subject\;to\;}\sum_{k=1}^{K_{1}}\alpha_{k}=1\text{, }\alpha
_{k}>0\text{, }\forall k=1,\cdots,K_{1}\text{.} \label{Consa}%
\end{align}

Since $Q_{G}[\left\{  \alpha_{k}\right\}  ]$ is a convex function of $\left\{
\alpha_{k}\right\}  $ \citep{Huang(2016-information)}, we can readily find the
optimal solution for small $K$ by efficient numerical methods. For large $K$,
however, finding an optimal solution by numerical methods becomes intractable.
In the following we will propose an alternative approach to this problem.
Instead of directly solving for the density distribution $\left\{  \alpha
_{k}\right\}  $, we optimize the parameters $\left\{  \alpha_{k}\right\}  $
and $\left\{  \boldsymbol{\theta}_{k}\right\}  $ simultaneously under a
hierarchical infomax framework.

\subsection{Hierarchical Infomax}

\label{HieInfomax}

For clarity, we consider neuron model with Poisson spikes although our method
is easily applicable to other noise models. The activation function
$f(\mathbf{x};{\boldsymbol{\theta}}_{n})$ is generally a nonlinear function,
such as sigmoid and rectified linear unit (ReLU)
\citep{Nair(2010-IP-rectified)}. We assume that the nonlinear function for the
\textit{n}-th neuron has the following form: $f(\mathbf{x};{\boldsymbol{\theta
}}_{n})=\tilde{f}(y_{n};{\boldsymbol{\tilde{\theta}}}_{n})$, where
\begin{equation}
y_{n}=\mathbf{w}_{n}^{T}\mathbf{x}\text{.} \label{y=wx}%
\end{equation}
with $\mathbf{w}_{n}$ being a $K$-dimensional weights vector, $\tilde{f}%
(y_{n};{\boldsymbol{\tilde{\theta}}}_{n})$ is a nonlinear function,
${\boldsymbol{\theta}}_{n}=(\mathbf{w}_{n}^{T},{\boldsymbol{\tilde{\theta}}%
}_{n}^{T})^{T}$ and ${\boldsymbol{\tilde{\theta}}}_{n}$ are the parameter
vectors ($n=1,\cdots,N$).

In general, it is very difficult to find the optimal parameters,
${\boldsymbol{\theta}}_{n}$, $n=1,\cdots,N$, for the following reasons. First,
the number of output neurons $N$ is very large, usually $N\gg K$. Second, the
activation function $f(\mathbf{x};{\boldsymbol{\theta}}_{n})$ is a nonlinear
function, which usually leads to a nonconvex optimization problem. For
nonconvex optimization problems, the selection of initial values often has a
great influence on the final optimization results. Our approach meets these
challenges by making better use of the large number of neurons and by finding
good initial values by a hierarchical infomax method.

We divide the nonlinear transformation into two stages, mapping first from
$\mathbf{x}$ to $y_{n}$ ($n=1,\cdots,N$), and then from $y_{n}$ to $\tilde
{f}(y_{n};{\boldsymbol{\tilde{\theta}}}_{n})$, where $y_{n}$\ can be regarded
as the membrane potential of the \textit{n}-th neuron, and $\tilde{f}%
(y_{n};{\boldsymbol{\tilde{\theta}}}_{n})$ as its firing rate. As with the
real neurons, we assume that the membrane potential is corrupted by noise:
\begin{equation}
\breve{Y}_{n}=Y_{n}+Z_{n}\text{,} \label{Yn}%
\end{equation}
where $Z_{n}\sim{\mathcal{N}\left(  0\text{,\thinspace}\sigma^{2}\right)  }$
is a normal distribution with mean $0$ and variance $\sigma^{2}$. Then the
mean membrane potential of the \textit{k}-th class subpopulation with
$N_{k}=N\alpha_{k}$\ neurons is given by
\begin{align}
\bar{Y}_{k}  &  =\frac{1}{N_{k}}\sum_{n=1}^{N_{k}}\breve{Y}_{k_{n}}=Y_{k}%
+\bar{Z}_{k}\text{\textbf{, }}k=1,\cdots,K_{1}\text{,}\label{vyk}\\
\bar{Z}_{k}  &  \sim{\mathcal{N(}0,\,N_{k}^{-1}\sigma^{2})}\text{.} \label{Z-}%
\end{align}
Define vectors $\mathbf{\breve{y}}=(\breve{y}_{1},\cdots,\breve{y}_{N})^{T}$,
$\mathbf{\bar{y}}=(\bar{y}_{1},\cdots,\bar{y}_{K_{1}})^{T}$ and $\mathbf{y}%
=(y_{1},\cdots,y_{K_{1}})^{T}$, where $y_{k}=\mathbf{w}_{k}^{T}\mathbf{x}$
($k=1,\cdots,K_{1}$). Notice that $\breve{y}_{n}$ ($n=1,\cdots,N$)\ is also
divided into $K_{1}$\ classes, the same as for $r_{n}$. If we assume
$f(\mathbf{x};{\boldsymbol{\theta}}_{k})=\tilde{f}(\bar{y}_{k}%
;{\boldsymbol{\tilde{\theta}}}_{k})$, i.e. assuming an additive Gaussian noise
for $y_{n}$ (see Eq. \ref{vyk}), then the random variables $X$, $Y$,
$\breve{Y}$, $\bar{Y}$ and $R$ form a Markov chain, denoted by $X\rightarrow
Y\rightarrow\breve{Y}\rightarrow\bar{Y}\rightarrow R$ (see Figure~\ref{Fig0}),
and we have the following proposition (see Appendix
\ref{Proof of Proposition 1}).

\begin{proposition}
\label{Proposition 1} With the random variables $X$, $Y$, $\breve{Y}$,
$\bar{Y}$, $R$ and Markov chain $X\rightarrow Y\rightarrow\breve{Y}%
\rightarrow\bar{Y}\rightarrow R$, the following equations hold,%
\begin{align}
I(X;R)  &  =I(Y;R)\leq I(\breve{Y};R)\leq I(\bar{Y};R)\text{,}%
\label{proposition_1}\\
I(X;R)  &  \leq I(X;\bar{Y})=I(X;\breve{Y})\leq I(X;Y)\text{,}
\label{proposition_2}%
\end{align}
and for large $N_{k}$ ($k=1,\cdots,K_{1}$),%
\begin{align}
I(\breve{Y};R)  &  \simeq I(\bar{Y};R)\simeq I(Y;R)=I(X;R)\text{,}%
\label{proposition_3a}\\
I(X;Y)  &  \simeq I(X;\bar{Y})=I(X;\breve{Y})\text{.} \label{proposition_3b}%
\end{align}

\end{proposition}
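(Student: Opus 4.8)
The plan is to deduce the whole proposition from three standard facts about Shannon information: (a) the data processing inequality (DPI) --- for any Markov chain $A\to B\to C$, $I(A;C)\le I(A;B)$ and $I(A;C)\le I(B;C)$; (b) if $B=g(A)$ is a deterministic map then $B\to A\to C$ is also a Markov chain, so $I(B;C)\le I(A;C)$, and if in addition $A\to B\to C$ holds then $I(A;C)=I(B;C)$; and (c) if $T(A)$ is a sufficient statistic for $C$ in the family $\{p(a\mid c)\}$ then $C\to T(A)\to A$, hence $I(C;A)=I(C;T(A))$. I would first verify the asserted Markov structure $X\to Y\to\breve Y\to\bar Y\to R$: $Y=(\mathbf w_k^T\mathbf x)_{k\le K_1}$ is a deterministic function of $\mathbf x$; $\breve Y=Y+Z$ with $Z$ independent of $(X,Y)$ by Eq.~(\ref{Yn}); $\bar Y$ is the per-class average (a deterministic linear map) of $\breve Y$ by Eq.~(\ref{vyk}); and the modeling assumption $f(\mathbf x;\boldsymbol\theta_k)=\tilde f(\bar y_k;\tilde{\boldsymbol\theta}_k)$ together with conditionally independent Poisson noise makes $R$ depend on all upstream variables only through $\bar Y$.

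For (\ref{proposition_1}): $I(X;R)=I(Y;R)$ follows from DPI applied to $X\to Y\to R$ (giving ``$\le$'') and to $Y\to X\to R$, which is Markov because $Y$ is a function of $X$ (giving ``$\ge$''). The remaining bounds $I(Y;R)\le I(\breve Y;R)\le I(\bar Y;R)$ are DPI applied to the sub-chains $Y\to\breve Y\to R$ and $\breve Y\to\bar Y\to R$; I would also remark that the second of these is in fact an equality, since $\bar Y=h(\breve Y)$ supplies the reverse inequality by (b). For (\ref{proposition_2}): $I(X;R)\le I(X;\bar Y)$ is DPI on $X\to\bar Y\to R$; $I(X;\breve Y)\le I(X;Y)$ is DPI on $X\to Y\to\breve Y$; and the central equality $I(X;\bar Y)=I(X;\breve Y)$ is the crucial non-DPI step: writing the Gaussian likelihood $p(\breve y\mid\mathbf x)\propto\exp(-\frac{1}{2\sigma^2}\sum_{k,n}(\breve y_{k_n}-\mathbf w_k^T\mathbf x)^2)$ and using $\sum_n(\breve y_{k_n}-\mathbf w_k^T\mathbf x)^2=\sum_n(\breve y_{k_n}-\bar y_k)^2+N_k(\bar y_k-\mathbf w_k^T\mathbf x)^2$, the Fisher--Neyman factorization shows that $\bar Y$ is a sufficient statistic for $X$, so by (c) $I(X;\breve Y)=I(X;\bar Y)$ (equivalently $X\to\bar Y\to\breve Y$ is Markov, and DPI in both directions gives equality).

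Finally, for the asymptotics (\ref{proposition_3a})--(\ref{proposition_3b}) I would quantify the non-trivial gaps in the above inequalities as conditional mutual informations and let $N_k\to\infty$. Using the chain rule together with the Markov relations already established, $I(X;Y)-I(X;\bar Y)=I(X;Y\mid\bar Y)$ and $I(\bar Y;R)-I(Y;R)=I(\bar Y;R\mid Y)$, with the $\breve Y$ terms squeezed between $I(Y;R)=I(X;R)$ and $I(\bar Y;R)$ by the inequalities already proved. Since $\bar Z_k\sim\mathcal N(0,N_k^{-1}\sigma^2)$ by Eq.~(\ref{Z-}), we have $\bar Y\to Y$ in $L^2$ at rate $O(N_k^{-1})$, so $\bar Y$ becomes an asymptotically noiseless copy of $Y$; the continuity of mutual information in this vanishing-noise limit drives each of these conditional informations to $0$, which yields all the ``$\simeq$'' statements. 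I expect this continuity step to be the main obstacle, because ``$\simeq$'' is only heuristic unless one supplies an explicit estimate --- for instance bounding $I(X;Y\mid\bar Y)$ by the expected KL divergence between the noisy and noiseless conditional laws, or invoking lower semicontinuity of MI plus the squeeze from (\ref{proposition_1})--(\ref{proposition_2}); under the same smoothness hypotheses that justify Eq.~(\ref{Ia}) this produces an $O(N_k^{-1})$ error, matching the paper's large-$N$ regime. Everything else is bookkeeping with DPI and sufficiency.
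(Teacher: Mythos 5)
Your proposal is correct and follows essentially the same route as the paper: the data-processing inequality for both chains of inequalities, sufficiency of the per-class average $\bar{Y}$ for the Gaussian observations to get $I(X;\bar{Y})=I(X;\breve{Y})$, the fact that $Y$ is a deterministic function of $X$ (with $R$ depending on $X$ only through $Y$) to get $I(X;R)=I(Y;R)$, and the vanishing of $\bar{Z}_k\sim\mathcal{N}(0,N_k^{-1}\sigma^2)$ for the asymptotic equalities, where your conditional-mutual-information gaps are exactly the expected log-ratio correction terms the paper writes down. If anything, you are more careful than the paper at two points: the explicit Fisher--Neyman factorization (the paper's literal identity $p(\bar{\mathbf{y}}|\mathbf{x})=p(\breve{\mathbf{y}}|\mathbf{x})$ is dimensionally sloppy shorthand for sufficiency), and the observation that $I(\breve{Y};R)=I(\bar{Y};R)$ holds exactly since $\bar{Y}$ is a deterministic function of $\breve{Y}$.
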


\subfiglabelskip=0pt \begin{figure}[t]
\vskip -0.2in \centering
\includegraphics[width= .8\linewidth]{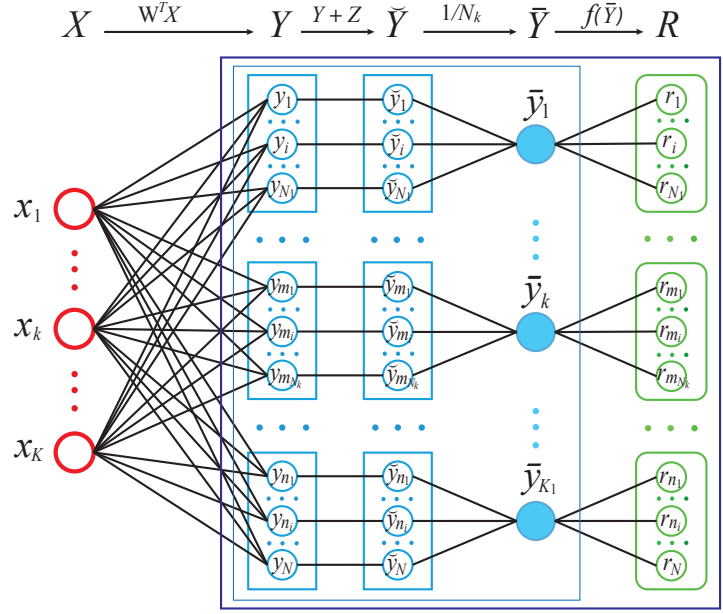} \hspace{0pt}
\vskip -0.15in\caption{A neural network interpretaton for random variables
$X$, $Y$, $\breve{Y}$,$\bar{Y}$, $R$.}%
\label{Fig0}%
\end{figure}

A major advantage of incorporating membrane noise is that it facilitates
finding the optimal solution by using the infomax principle. Moreover, the
optimal solution obtained this way is more robust; that is, it discourages
overfitting and has a strong ability to resist distortion. With vanishing
noise $\sigma^{2}\rightarrow0$, we have $\bar{Y}_{k}\rightarrow Y_{k}$,
$\tilde{f}(\bar{y}_{k};{\boldsymbol{\tilde{\theta}}}_{k})\simeq\tilde{f}%
(y_{k};{\boldsymbol{\tilde{\theta}}}_{k})=f(\mathbf{x};{\boldsymbol{\theta}%
}_{k})$, so that Eqs.~(\ref{proposition_3a}) and (\ref{proposition_3b}) hold
as in the case of large $N_{k}$.

To optimize MI $I(Y;R)$, the probability distribution of random variable
$Y$,\ $p(\mathbf{y})$, needs to be determined, i.e. maximizing $I(Y;R)$ about
$p(\mathbf{y})$\ under some constraints should yield an optimal distribution:
$p^{\ast}(\mathbf{y})=\arg\max_{p(\mathbf{y})}I(Y;R)$. Let $\mathcal{C}%
=\max_{p(\mathbf{y})}I\left(  Y;R\right)  $ be the channel capacity of neural
population coding, and we always have $I(X;R)\leq\mathcal{C}$
\citep{Huang(2016-information)}. To find a suitable linear transformation from
$X$ to $Y$ that is compatible with this distribution $p^{\ast}(\mathbf{y})$, a
reasonable choice is to maximize $I(X;\breve{Y})$ $\left(  \leq I(X;Y)\right)
$, where $\breve{Y}$\ is a noise-corrupted version of $Y$. This implies
minimum information loss in the first transformation step. However, there may
exist many transformations from $X$ to $\breve{Y}$ that maximize
$I(X;\breve{Y})$ (see Appendix \ref{1st}). Ideally, if we can find a
transformation that maximizes both $I(X;\breve{Y})$ and $I(Y;R)$
simultaneously, then $I(X;R)$ reaches its maximum value: $I(X;R)=\max
_{p(\mathbf{y})}I\left(  Y;R\right)  =\mathcal{C}$.

From the discussion above we see that maximizing $I(X;R)$ can be divided into
two steps, namely, maximizing $I(X;\breve{Y})$ and maximizing $I(Y;R)$. The
optimal solutions of $\max I(X;\breve{Y})$ and $\max I(Y;R)$ will provide a
good initial approximation that tend to be very close to the optimal solution
of $\max I(X;R)$.

Similarly, we can extend this method to multilayer neural population{
}networks. For example, a two-layer network with outputs $R^{(1)}$ and
$R^{(2)}${ }form a Markov chain, $X\rightarrow\tilde{R}^{(1)}\rightarrow
R^{(1)}\rightarrow\bar{R}^{(1)}\rightarrow R^{(2)}$, where random variable
$\tilde{R}^{(1)}$ is similar to $Y$, random variable $R^{(1)}$ is similar to
$\breve{Y}$, and $\bar{R}^{(1)}$ is similar to $\bar{Y}$ in the above. Then we
can show that the optimal solution of $\max I(X;R^{(2)})$ can be approximated
by the solutions of $\max I(X;R^{(1)})$ and $\max I(\tilde{R}^{(1)};R^{(2)})$,
with $I(\tilde{R}^{(1)};R^{(2)})\simeq I(\bar{R}^{(1)};R^{(2)})$.

More generally, consider a highly nonlinear feedforward neural network that
maps the input $\mathbf{x}$ to output $\mathbf{z}$, with $\mathbf{z}%
=F(\mathbf{x;}{\boldsymbol{\theta}})=h_{L}\circ\cdots\circ h_{1}\left(
\mathbf{x}\right)  $, where $h_{l}$ ($l=1,\cdots,L$) is a linear or nonlinear
function \citep{Montufar(2014-IP-number)}. We aim to find the optimal
parameter ${\boldsymbol{\theta}}$ by maximizing $I\left(  X;Z\right)  $. It is
usually difficult to solve the optimization problem when there are many local
extrema for $F(\mathbf{x;}{\boldsymbol{\theta}})$. However, if each function
$h_{l}$\ is easy to optimize, then we can use the hierarchical infomax method
described above to get a good initial approximation to its global optimization
solution, and go from there to find the final optimal solution. This
information-theoretic consideration from the neural population coding point of
view may help explain why deep structure networks with unsupervised
pre-training have a powerful ability for learning representations.

\subsection{The Objective Function}

\label{ObjFun}

The optimization processes for maximizing $I(X;\breve{Y})$ and maximizing
$I(Y;R)$\ are discussed in detail in Appendix \ref{A_HieOpt}. First, by
maximizing $I(X;\breve{Y})$ (see Appendix \ref{1st} for details), we can get
the optimal weight parameter $\mathbf{w}_{k}$ ($k=1,\cdots,K_{1}$, see Eq.
\ref{y=wx})\ and its population{ }density $\alpha_{k}$ (see Eq. \ref{Consa})
which satisfy%
\begin{align}
&  \mathbf{W}={\left[  \mathbf{w}_{1},\cdots,\mathbf{w}_{K_{1}}\right]  }%
={a}\mathbf{U}_{0}{\boldsymbol{\Sigma}_{0}^{-1/2}\mathbf{C}}\text{,}%
\label{W_0}\\
&  \alpha_{1}=\cdots=\alpha_{K_{1}}=K_{1}^{-1}\text{,} \label{A_0}%
\end{align}
where $a=\sqrt{{K_{1}}K_{0}^{-1}}$, ${\mathbf{C}}=[\mathbf{c}_{1}%
,\cdots,\mathbf{c}_{K_{1}}]\in%
\mathbb{R}
^{K_{0}\times K_{1}}$, ${\mathbf{CC}}^{T}=\mathbf{I}_{K_{0}}$, $\mathbf{I}%
_{K_{0}}$ is a $K_{0}\times K_{0}$ identity matrix with integer $K_{0}%
\in\left[  1,K\right]  $, the diagonal matrix $\mathbf{\Sigma}_{0}\in%
\mathbb{R}
^{K_{0}\times K_{0}}$ and matrix $\mathbf{U}_{0}\in%
\mathbb{R}
^{K\times K_{0}}$ are given in (\ref{sigma0}) and (\ref{U0}), with $K_{0}$
given by Eq. (\ref{K0}). Matrices $\mathbf{\Sigma}_{0}$ and $\mathbf{U}_{0}%
$\ can be obtained by $\mathbf{\Sigma}$ and $\mathbf{U}$ with $\mathbf{U}%
_{0}^{T}\mathbf{U}_{0}=\mathbf{I}_{K_{0}}$ and $\mathbf{U}_{0}%
\mathbf{\mathbf{\Sigma}}_{0}\mathbf{U}_{0}^{T}\approx\mathbf{U\mathbf{\Sigma
}U}^{T}\approx\left\langle \mathbf{xx}^{T}\right\rangle _{\mathbf{x}}$ (see
Eq. \ref{Sig}). The optimal weight parameter $\mathbf{w}_{k}$ (\ref{W_0})
means that the input variable $\mathbf{x}$ must first undergo a whitening-like
transformation $\mathbf{\hat{x}}=\mathbf{\Sigma}_{0}^{-1/2}\mathbf{U}_{0}%
^{T}\mathbf{x}$, and then goes through the transformation $\mathbf{y}%
=a{\mathbf{C}}^{T}\mathbf{\hat{x}}$, with matrix ${\mathbf{C}}$ to be
optimized below. Note that weight matrix $\mathbf{W}$ satisfies
$\mathrm{rank}(\mathbf{W})=\min(K_{0},K_{1})$, which is a low rank matrix, and its low dimensionality helps reduce overfitting during training (see Appendix \ref{1st}).

By maximizing $I\left(  Y;R\right)  $ (see Appendix \ref{2nd}), we further
solve the the optimal parameters ${\boldsymbol{\tilde{\theta}}}_{k}$ for
the nonlinear functions $\tilde{f}(y_{k};{\boldsymbol{\tilde{\theta}}}_{k})$,
$k=1,\cdots,K_{1}$. Finally, the objective function for our optimization
problem (Eqs.~\ref{minQa} and \ref{Consa}) turns into (see Appendix
\ref{final} for details):%
\begin{align}
&  \mathsf{minimize}\;\;{Q\left[  {\mathbf{C}}\right]  =}-\frac{1}%
{2}\left\langle \ln\left(  \det\left(  {\mathbf{C\boldsymbol{\hat{\Phi}}C}%
}^{T}\right)  \right)  \right\rangle _{\mathbf{\hat{x}}}\text{,}\label{QC_0}\\
&  \mathsf{subject\;to}\;{\mathbf{CC}}^{T}=\mathbf{I}_{K_{0}}\text{,}%
\label{CC_0}%
\end{align}
where $\boldsymbol{\hat{\Phi}}={{\mathrm{diag}}}\left(  \phi(\hat{y}_{1}%
)^{2},\cdots,\phi(\hat{y}_{K_{1}})^{2}\right)  $, $\phi(\hat{y}_{k}%
)=a^{-1}\left\vert \partial g_{k}(\hat{y}_{k})/\partial\hat{y}_{k}\right\vert
$ ($k=1,\cdots,K_{1}$), $g_{k}(\hat{y}_{k})=2\sqrt{\tilde{f}(\hat{y}%
_{k};{\boldsymbol{\tilde{\theta}}}_{k})}$, $\hat{y}_{k}=a^{-1}y_{k}%
=\mathbf{c}_{k}^{T}\mathbf{\hat{x}}$, and $\mathbf{\hat{x}}=\mathbf{\Sigma
}_{0}^{-1/2}\mathbf{U}_{0}^{T}\mathbf{x}$. We apply the gradient descent
method to optimize the objective function, with the gradient of $Q{\left[
{\mathbf{C}}\right]  }$ given by:%
\begin{equation}
\frac{dQ{\left[  {\mathbf{C}}\right]  }}{d{\mathbf{C}}}=-\left\langle \left(
{\mathbf{C\boldsymbol{\hat{\Phi}}C}}^{T}\right)  ^{-1}\mathbf{C}%
\boldsymbol{\hat{\Phi}}+\mathbf{\hat{x}}\boldsymbol{\omega}^{T}\right\rangle
_{\mathbf{\hat{x}}}\text{,}\label{dQC_0}%
\end{equation}
where $\boldsymbol{\omega}=\left(  \omega_{1}{,}\cdots,\omega{_{K_{1}}%
}\right)  ^{T}$, $\omega_{k}=\phi(\hat{y}_{k})\phi^{\prime}(\hat{y}%
_{k})\mathbf{c}_{k}^{T}\left(  {\mathbf{C\boldsymbol{\hat{\Phi}}C}}%
^{T}\right)  ^{-1}\mathbf{c}_{k}$, $k=1,\cdots,K_{1}$.

When ${K_{0}}=K_{1}$ (or ${K_{0}}>K_{1}$), the objective function $Q{\left[
{\mathbf{C}}\right]  }$ can be reduced to a simpler form, and its gradient is
also easy to compute (see Appendix \ref{Alg.1}). However, when ${K_{0}}<K_{1}%
$, it is computationally expensive to update $\mathbf{C}$ by applying the
gradient of $Q{\left[  {\mathbf{C}}\right]  }$ directly, since it requires
matrix inversion for every $\mathbf{\hat{x}}$. We use another objective
function $\hat{Q}{\left[  {\mathbf{C}}\right]  }$ (see Eq. \ref{obj2}) which
is an approximation to $Q{\left[  {\mathbf{C}}\right]  }$, but its gradient is
easier to compute (see Appendix \ref{Alg.2}). The function $\hat{Q}{\left[
{\mathbf{C}}\right]  }$ is the approximation of $Q{\left[  {\mathbf{C}%
}\right]  }$, ideally they have the same optimal solution for the parameter
$\mathbf{C}$.

Usually, for optimizing the objective in Eq.~\ref{QC_0}, the orthogonality
constraint (Eq. \ref{CC_0}) is unnecessary. However, this orthogonality
constraint can accelerate the convergence rate if we employ it for the initial
iteration to update $\mathbf{C}$ (see Appendix \ref{SupExp}).

\section{Experimental Results}

\label{results}

We have applied our methods to the natural images from Olshausen's image
dataset \citep{Olshausen(1996-emergence)} and the images of handwritten digits
from MNIST dataset \citep{LeCun(1998-gradient)} using Matlab 2016a on a
computer with 12 Intel CPU cores (2.4 GHz). The gray level of each raw image
was normalized to the range of $0$ to $1$. $M\ $image patches with size
$w\times w=K$ for training were randomly sampled from the images. We used the
Poisson neuron model with a modified sigmoidal tuning function $\tilde
{f}(y;{\boldsymbol{\tilde{\theta}}})=\frac{1}{4\left(  1+\exp\left(  -\beta
y-b\right)  \right)  ^{2}}$, with $g(y)=2\sqrt{\tilde{f}(y;{\boldsymbol{\tilde
{\theta}}})}=\frac{1}{1+\exp\left(  -\beta y-b\right)  }$, where
${\boldsymbol{\tilde{\theta}}}=\left(  \beta,b\right)  ^{T}$. We obtained the
initial values (see Appendix \ref{2nd}): $b_{0}=0$ and $\beta_{0}%
\approx1.81\sqrt{{K_{1}}K_{0}^{-1}}$. For our experiments, we set
$\beta=0.5\beta_{0}$ for iteration epoch $t=1,\cdots,t_{0}$ and $\beta
=\beta_{0}$ for $t=t_{0}+1,\cdots,t_{\max}$, where $t_{0}=50$.

Firstly, we tested the case of $K=K_{0}=K_{1}=144$ and randomly sampled
$M=10^{5}$ image patches with size $12\times12$ from the Olshausen's natural
images, assuming that $N=10^{6}$ neurons were divided into $K_{1}%
=144$\ classes and $\epsilon=1$ (see Eq. \ref{K0} in Appendix). The input
patches were preprocessed by the ZCA whitening filters (see Eq. \ref{xv}). To
test our algorithms, we chose the batch size to be equal to the number of
training samples $M$, although we could also choose a smaller batch size. We
updated the matrix $\mathbf{C}$ from a random start, and set parameters
$t_{\max}=300$, ${v}_{1}=0.4$, and $\tau=0.8$ for all experiments.

In this case, the optimal solution $\mathbf{C}$ looked similar to the optimal
solution of IICA \citep{Bell(1997-independent)}. We also compared with the
fast ICA algorithm (FICA) \citep{Hyvaerinen(1999-fast)}, which is faster than
IICA. We also tested the restricted Boltzmann machine (RBM)
\citep{Hinton(2006-fast)} for a unsupervised learning of representations, and
found that it could not easily learn Gabor-like filters from Olshausen's image
dataset as trained by contrastive divergence. However, an improved method by
adding a sparsity constraint on the output units, e.g., sparse RBM (SRBM)
\citep{Lee(2008-IP-sparse)} or sparse autoencoder
\citep{Hinton(2010-practical)}, could attain Gabor-like filters from this
dataset. Similar results with Gabor-like filters were also reproduced by the
denoising autoencoders \citep{Vincent(2010-stacked)}, which method requires a
careful choice of parameters, such as noise level, learning rate, and batch size.

In order to compare our methods, i.e. Algorithm 1 (Alg.1, see Appendix
\ref{Alg.1}) and Algorithm 2 (Alg.2, see Appendix \ref{Alg.2}), with other
methods, i.e. IICA, FICA and SRBM, we implemented these algorithms using the
same initial weights and the same training data set (i.e. $10^{5}$ image
patches preprocessed by the ZCA whitening filters). To get a good result by
IICA, we must carefully select the parameters; we set the batch size as $50$,
the initial learning rate as $0.01$, and final learning rate as $0.0001$, with
an exponential decay with the epoch of iterations. IICA tends to have a faster
convergence rate for a bigger batch size but it may become harder to escape
local minima. For FICA, we chose the nonlinearity function $f(u)=\log\cosh(u)$
as contrast function, and for SRBM, we set the sparseness control constant $p$
as $0.01$ and $0.03$. The number of epoches for iterations was set to $300$
for all algorithms. Figure~\ref{Fig1} shows the filters learned by our methods
and other methods. Each filter in Figure~\ref{Fig1a} corresponds to a column
vector of matrix ${\mathbf{\check{C}}}$ (see Eq. \ref{Cv}), where each vector
for display is normalized by $\mathbf{\check{c}}_{k}\leftarrow\mathbf{\check
{c}}_{k}/\max(|\check{c}_{1,k}|,\cdots,|\check{c}_{K,k}|)$, $k=1,\cdots,K_{1}%
$. The results in Figures~\ref{Fig1a}, \ref{Fig1b} and \ref{Fig1c} look very
similar to one another, and slightly different from the results in
Figure~\ref{Fig1d} and \ref{Fig1e}. There are no Gabor-like filters in
Figure~\ref{Fig1f}, which corresponds to SRBM with $p=0.03$.

\subfiglabelskip=0pt \begin{figure}[th]
\vskip -0.15in \centering
\subfigure[]{\label{Fig1a}
\includegraphics[width= .315\linewidth]{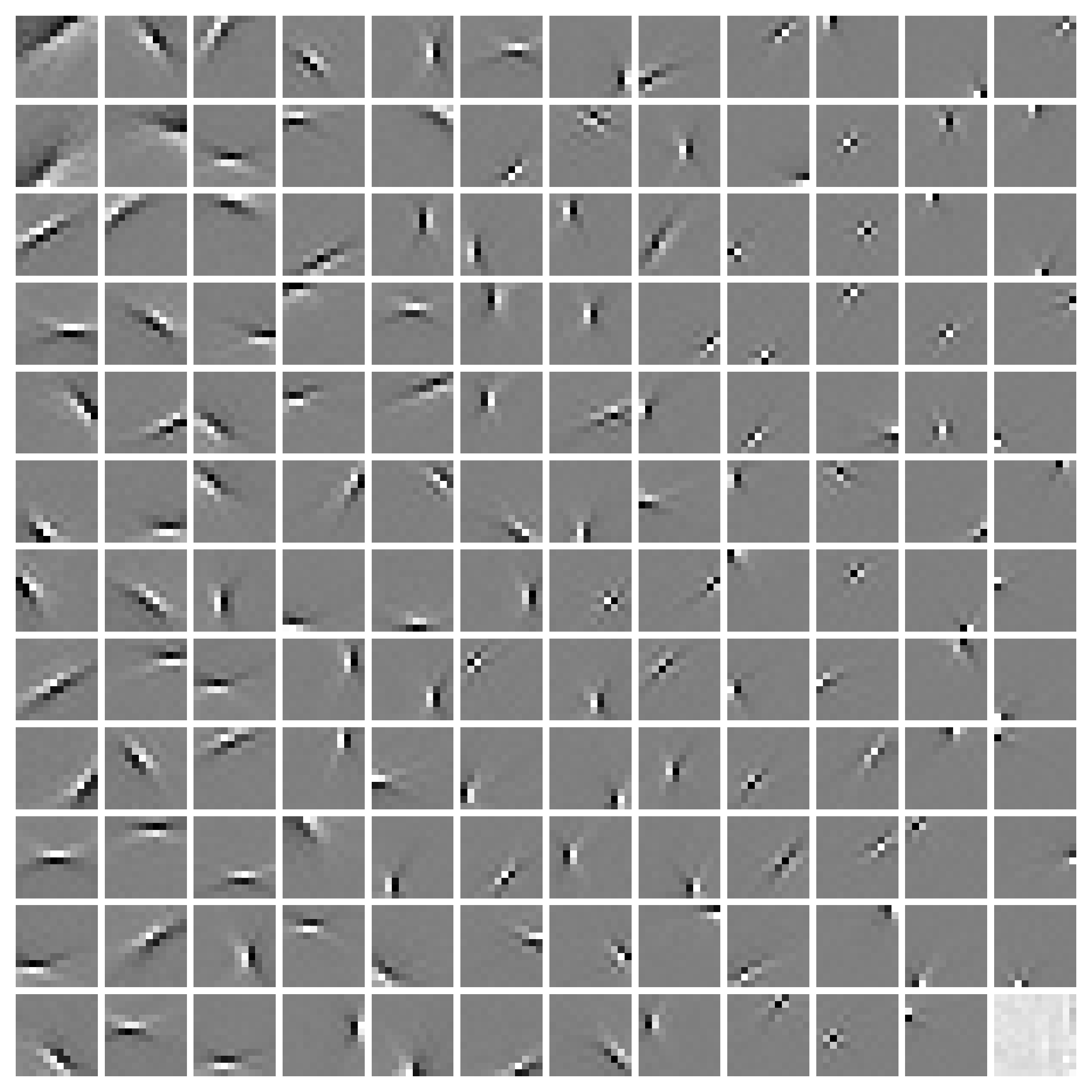}} \hspace{0pt}
\subfigure[]{\label{Fig1b}
\includegraphics[width= .315\linewidth]{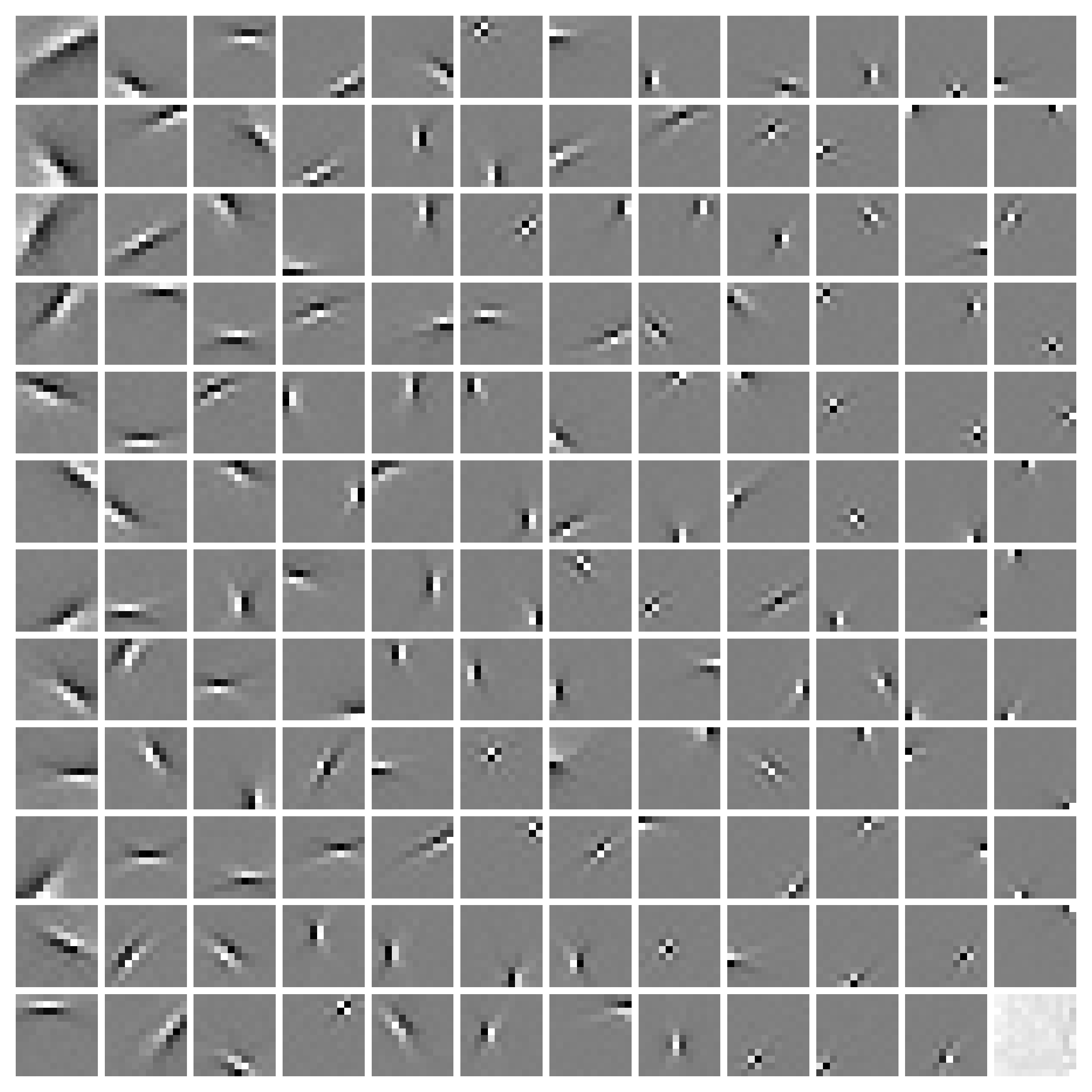}} \hspace{0pt}
\subfigure[]{\label{Fig1c}
\includegraphics[width= .315\linewidth]{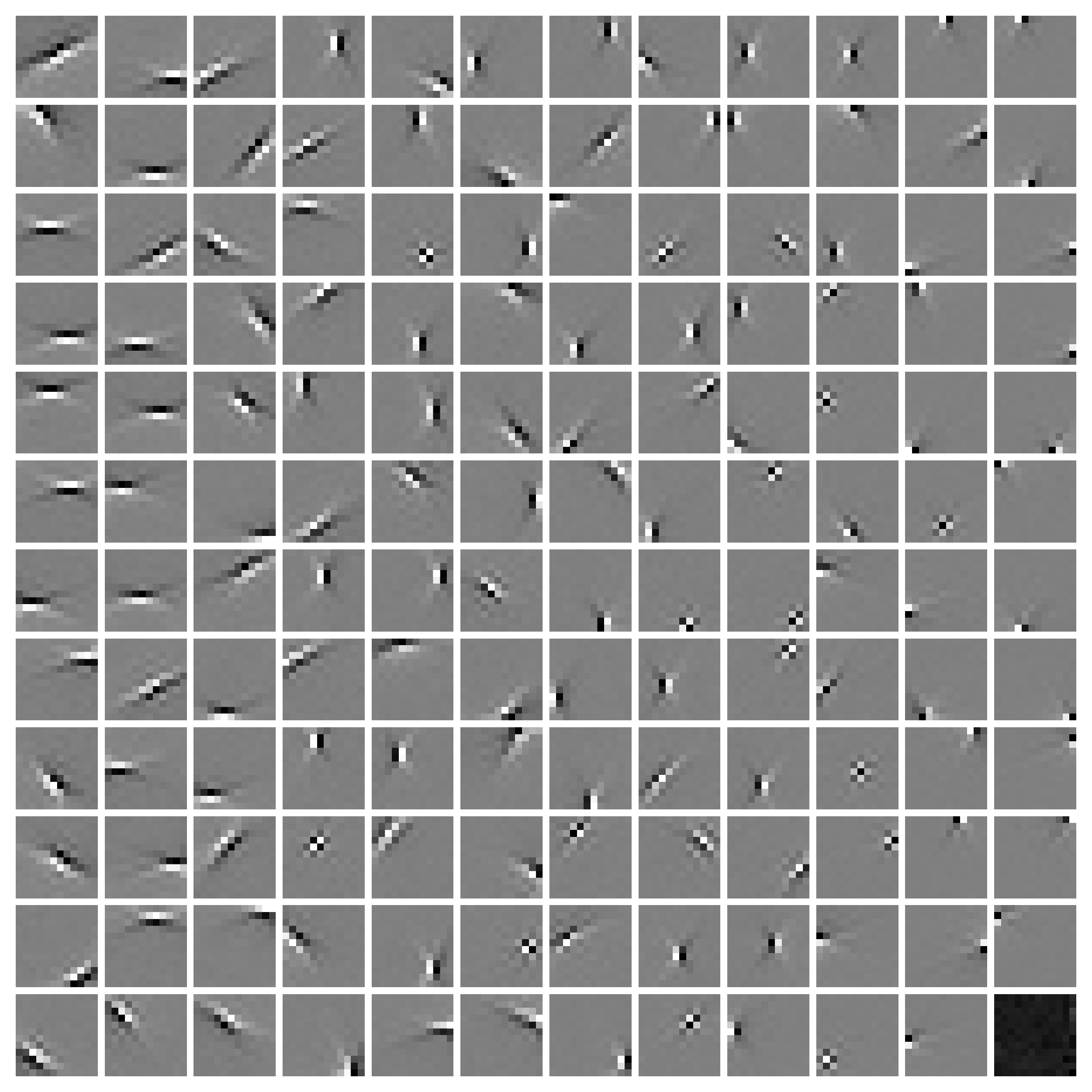}} \hspace{0pt}
\subfigure[]{\label{Fig1d}
\includegraphics[width= .315\linewidth]{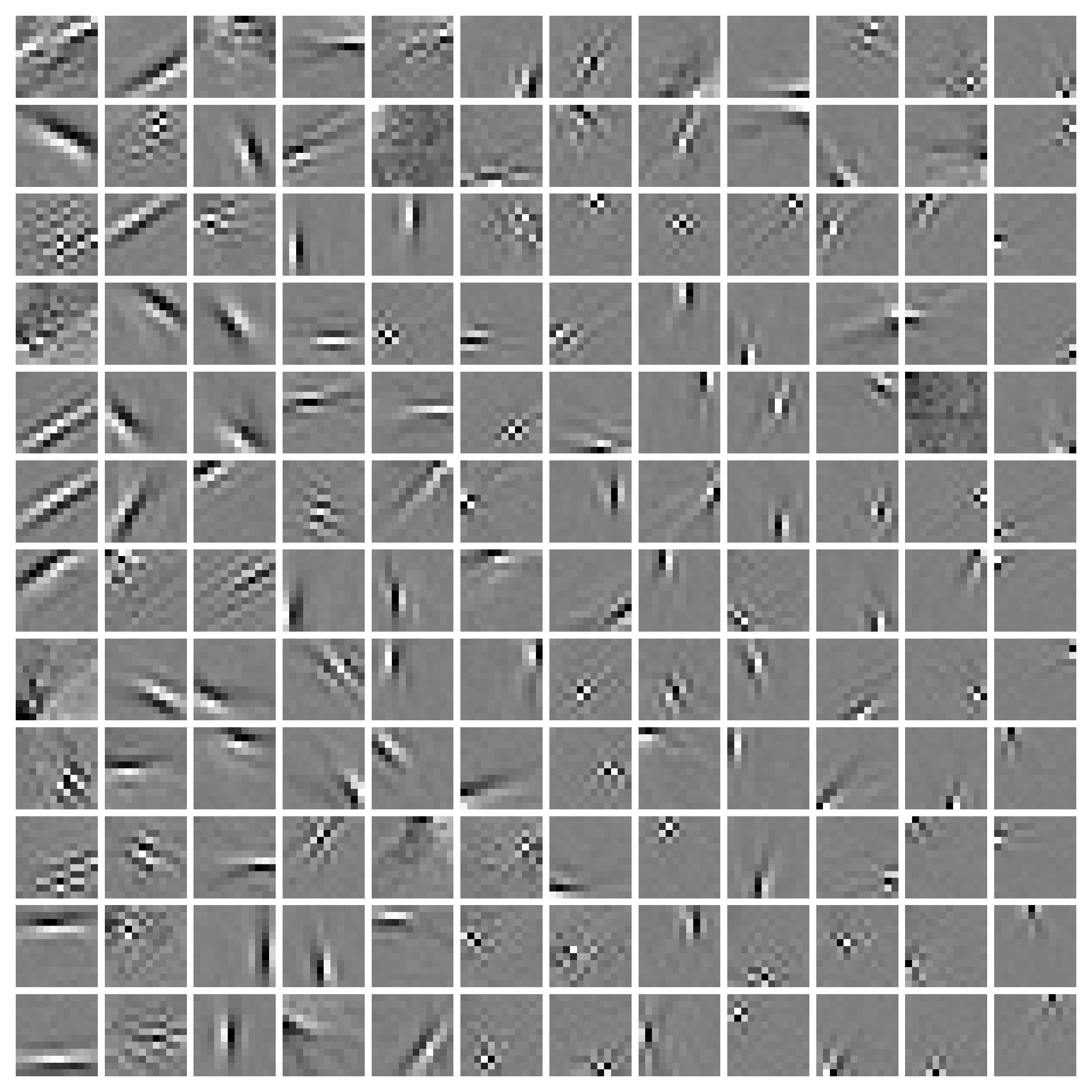}} \hspace{0pt}
\subfigure[]{\label{Fig1e}
\includegraphics[width= .315\linewidth]{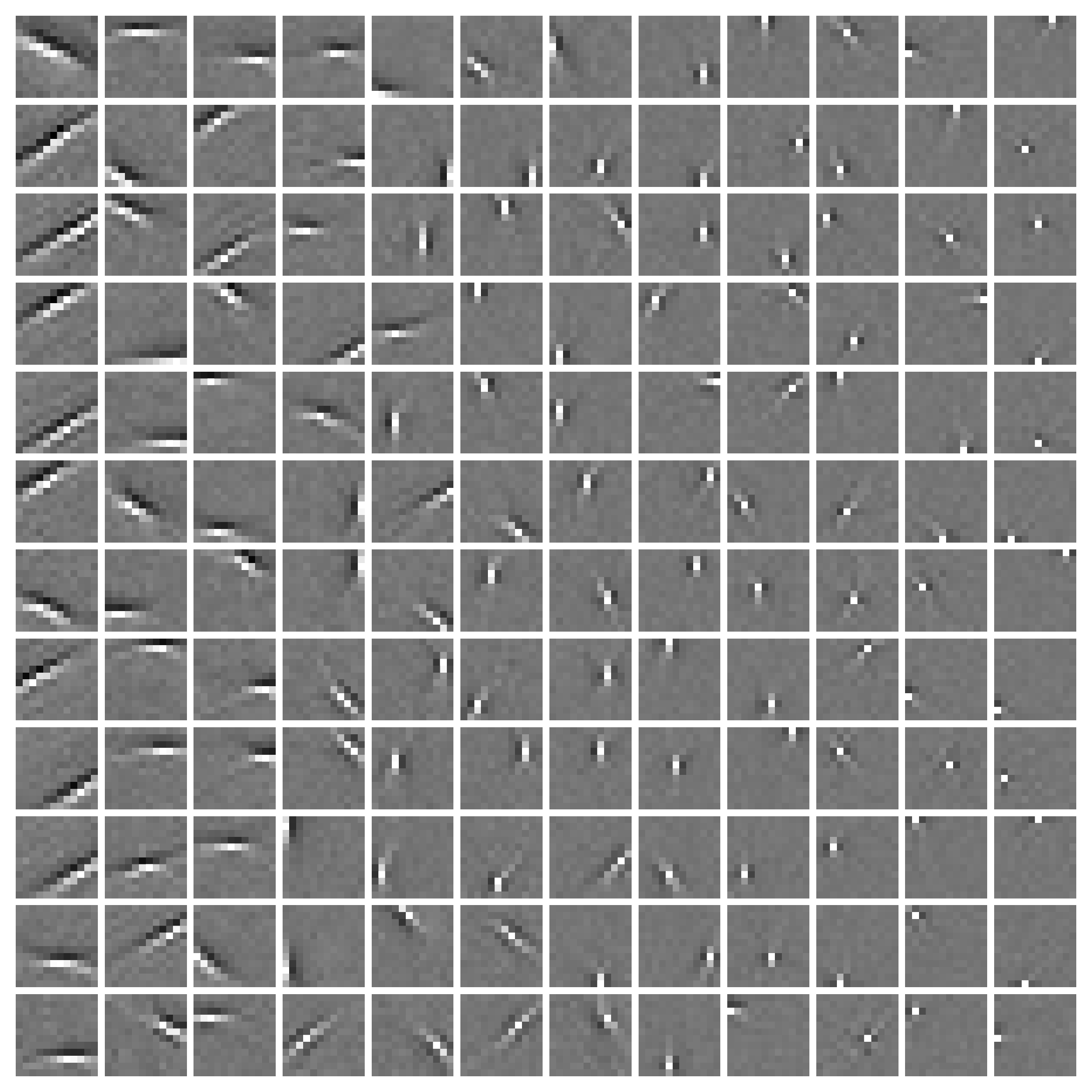}} \hspace{0pt}
\subfigure[]{\label{Fig1f}
\includegraphics[width= .315\linewidth]{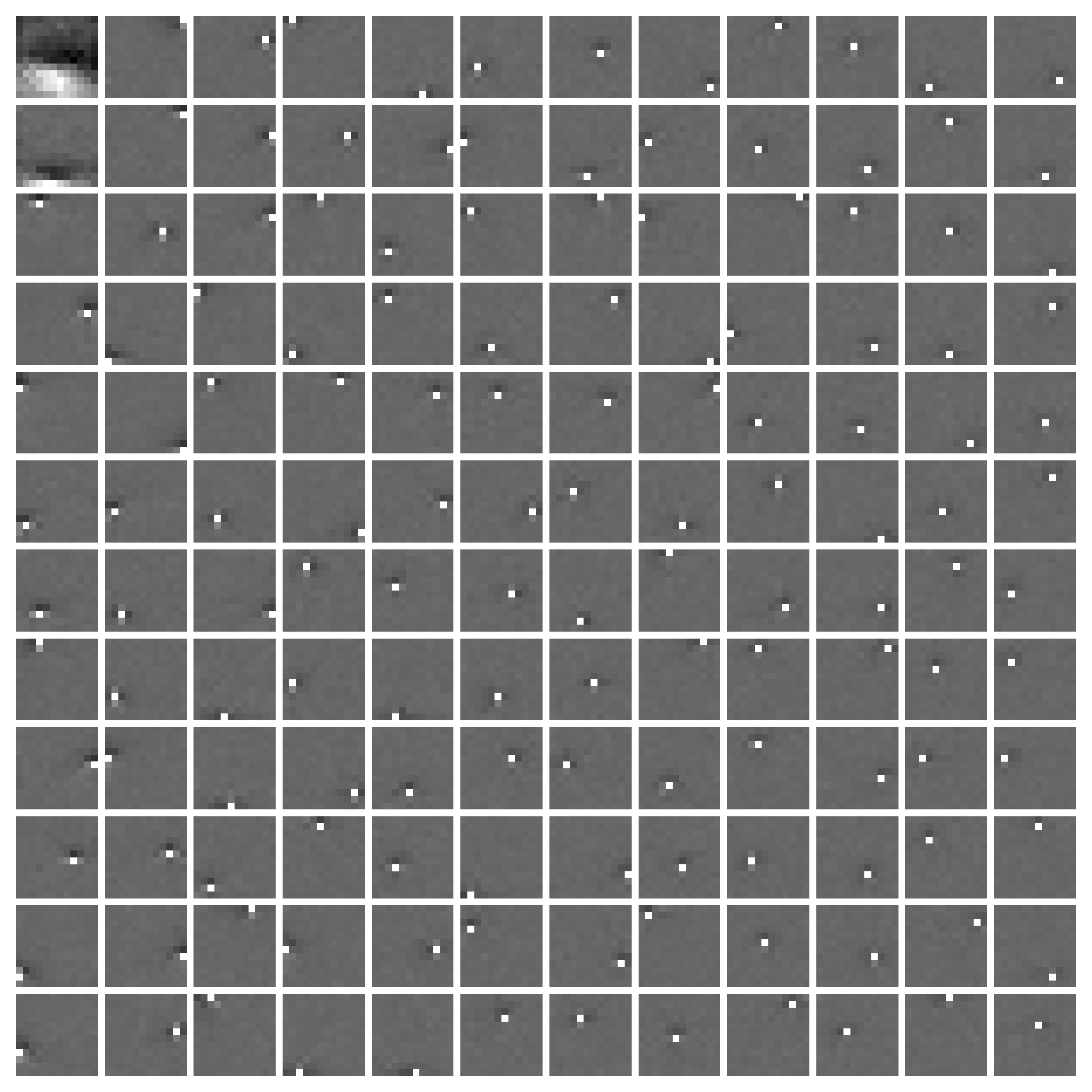}} \hspace{0pt}
\vskip -0.15 in\caption{Comparison of filters obtained from ${10^{5}}$ natural
image patches of size 12$\times$12 by our methods (Alg.1 and Alg.2) and other
methods. The number of output filters was $K_{1}=144$. (\textbf{a}): Alg.1.
(\textbf{b}): Alg.2. (\textbf{c}): IICA. (\textbf{d}): FICA. (\textbf{e}):
SRBM ($p=0.01$). (\textbf{f}): SRBM ($p=0.03$).}%
\label{Fig1}%
\end{figure}
\subfiglabelskip=0pt \begin{figure}[ptbh]
\vskip -0.2in \centering
\subfigure[]{\label{Fig2a}
\includegraphics[width= .315\linewidth]{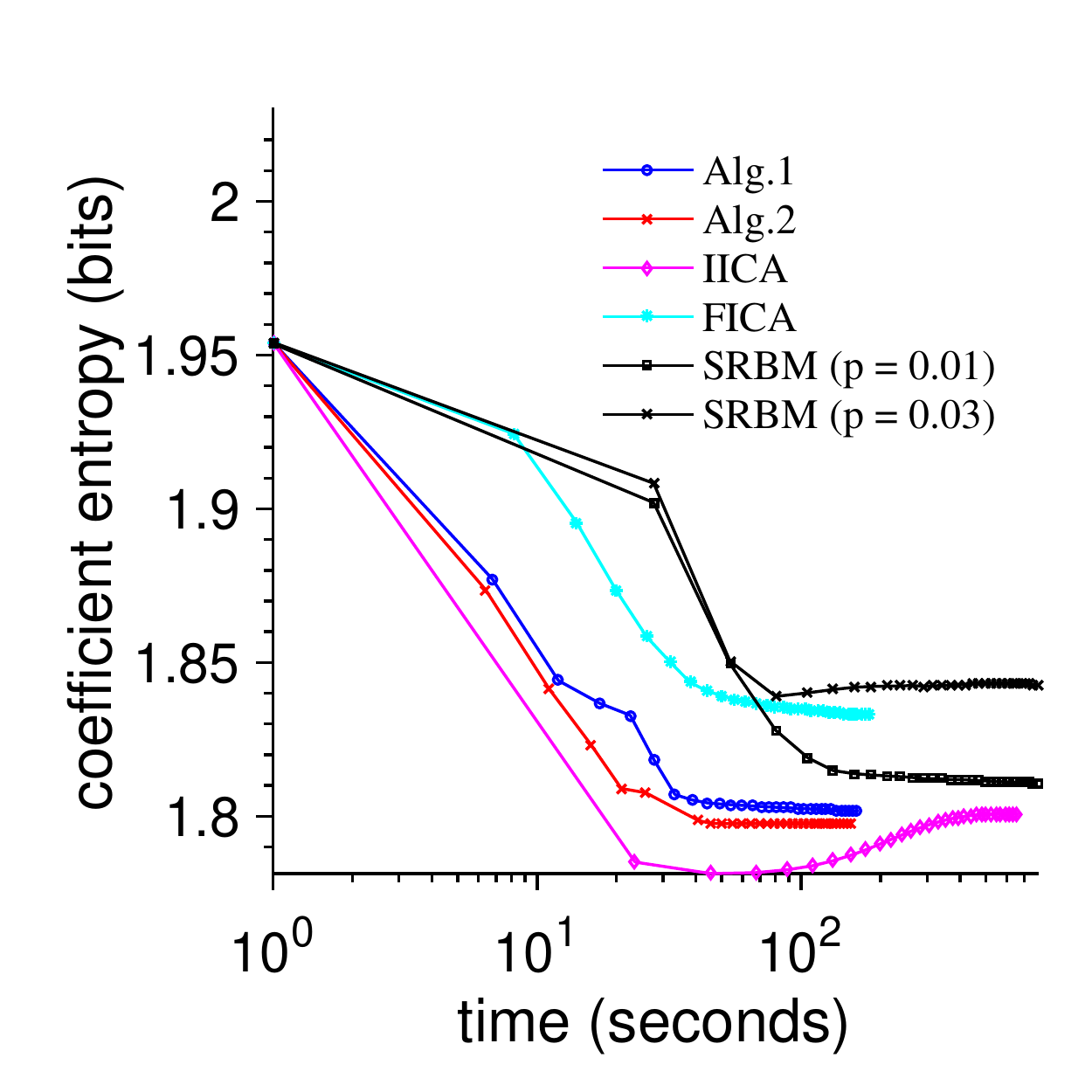}} \hspace{0pt}
\subfigure[]{\label{Fig2b}
\includegraphics[width= .315\linewidth]{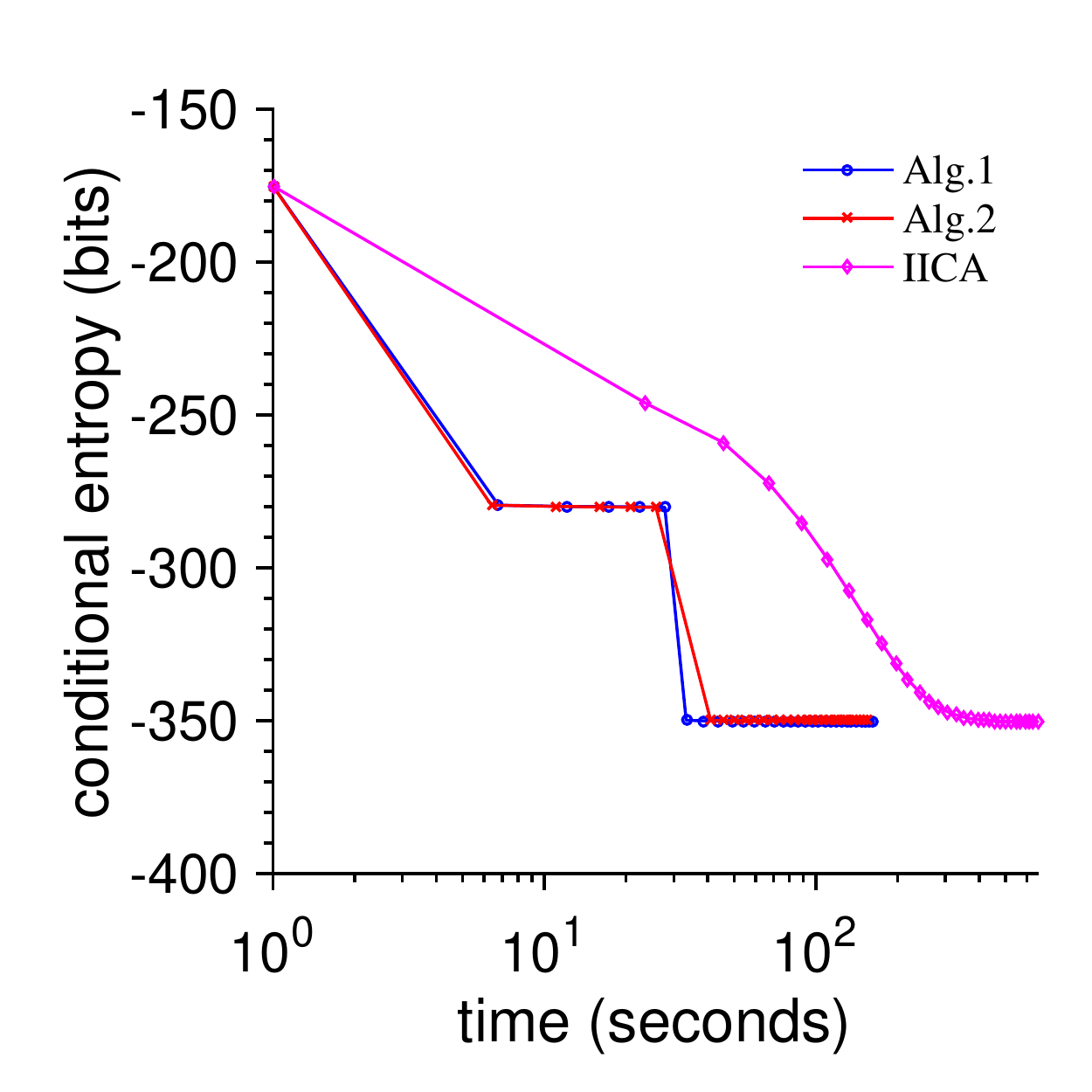}} \hspace{0pt}
\subfigure[]{\label{Fig2c}
\includegraphics[width= .315\linewidth]{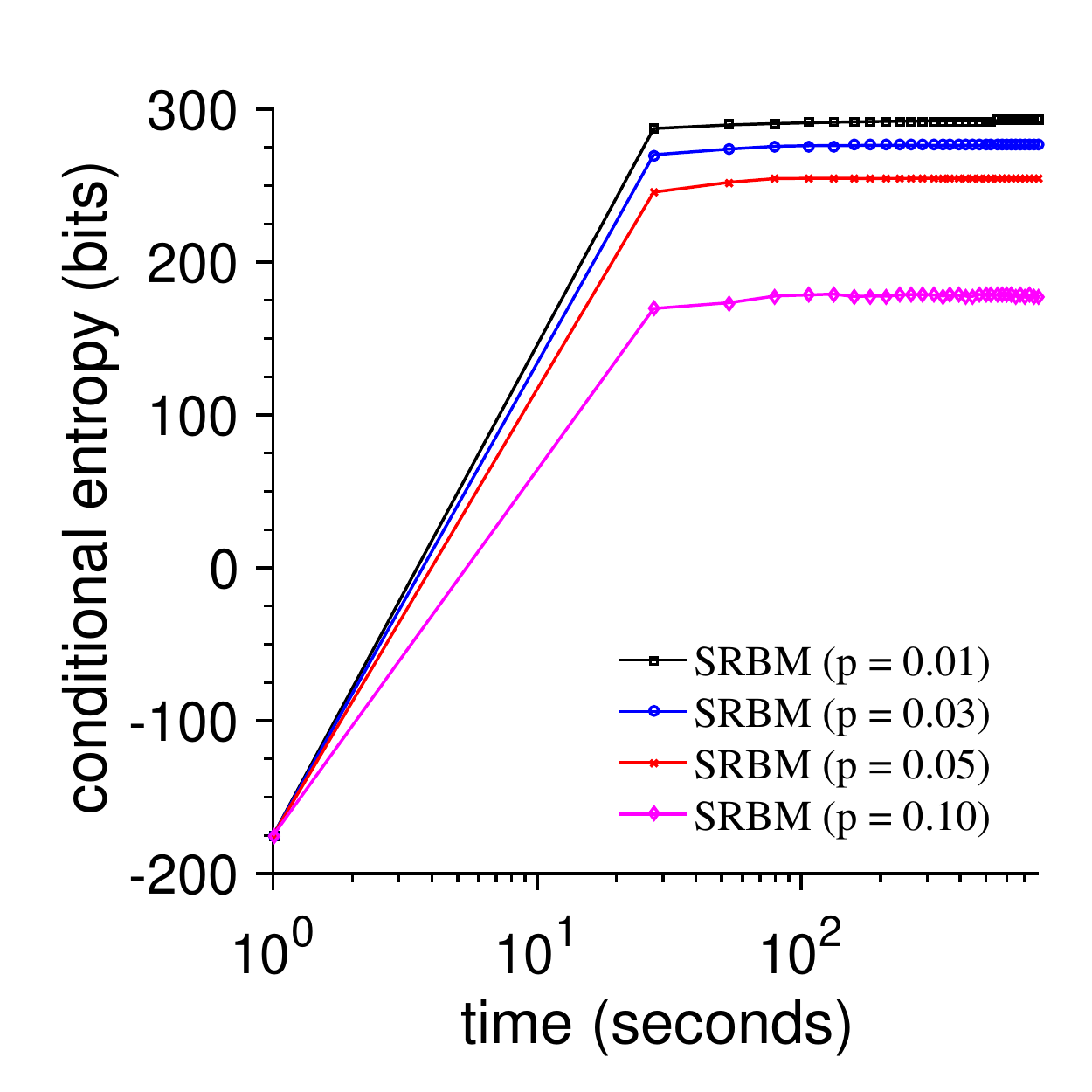}} \hspace{0pt}
\vskip -0.15in\caption{Comparison of quantization effects and convergence rate
by coefficient entropy (see \ref{h}) and conditional entropy (see \ref{h1})
corresponding to training results (filters) shown in Figure 2. The coefficient
entropy (panel \textbf{a}) and conditional entropy (panel \textbf{b} and
\textbf{c}) are shown as a function of training time on a logarithmic scale.
All experiments run on the same machine using Matlab. Here we sampled once
every $10$ epoches out of a total of $300$ epoches. We set epoch number
$t_{0}=50$ for Alg.1 and Alg.2 and the start time to $1$ second.}%
\label{Fig2}%
\end{figure}

Figure~\ref{Fig2} shows how the coefficient entropy (CFE) (see Eq. \ref{h})
and the conditional entropy (CDE) (see Eq. \ref{h1}) varied with training
time. We calculated CFE and CDE by sampling once every $10$ epoches from a
total of $300$ epoches. These results show that our algorithms had a fast
convergence rate towards stable solutions while having CFE and CDE values
similar to the algorithm of IICA, which converged much more slowly. Here the
values of CFE and CDE should be as small as possible for a good representation
learned from the same data set. Here we set epoch number $t_{0}=50$ in our
algorithms (see Alg.1 and Alg.2), and the start time was set to $1$ second.
This explains the step seen in Figure~\ref{Fig2} (b) for Alg.1 and Alg.2 since
the parameter $\beta$\ was updated when epoch number $t=t_{0}$. FICA had a
convergence rate close to our algorithms but had a big CFE, which is reflected
by the quality of the filter results in Figure~\ref{Fig1}. The convergence
rate and CFE for SRBM were close to IICA, but SRBM had a much bigger CDE than
IICA, which implies that the information had a greater loss when passing
through the system optimized by SRBM than by IICA or our methods.

From Figure~\ref{Fig2c} we see that the CDE (or MI $I(X;R)$, see Eq. \ref{IXR}
and \ref{h1}) decreases (or increases) with the increase of the value of the
sparseness control constant $p$. Note that a smaller $p$ means sparser
outputs. Hence, in this sense, increasing sparsity may result in sacrificing
some information. On the other hand, a weak sparsity constraint may lead to
failure of learning Gabor-like filters (see Figure~\ref{Fig1f}), and
increasing sparsity has an advantage in reducing the impact of noise in many
practical cases. Similar situation also occurs in sparse coding
\citep{Olshausen(1997-sparse)}, which provides a class of algorithms for
learning overcomplete dictionary representations of the input signals.
However, its training is time consuming due to its expensive computational
cost, although many new training algorithms have emerged
\citep[e.g.][]{Aharon(2006-k),Elad(2006-image),Lee(2006-IP-efficient),Mairal(2010-online)}.
See Appendix \ref{SupExp} for additional experimental results.

\section{Conclusions}

In this paper, we have presented a framework for unsupervised learning of
representations via information maximization for neural populations.
Information theory is a powerful tool for machine learning and it also
provides a benchmark of optimization principle for neural information
processing in nervous systems. Our framework is based on an asymptotic
approximation to MI for a large-scale neural population. To optimize the
infomax objective, we first use hierarchical infomax to obtain a good
approximation to the global optimal solution. Analytical solutions of the
hierarchical infomax are further improved by a fast convergence algorithm
based on gradient descent. This method allows us to optimize highly nonlinear
neural networks via hierarchical optimization using infomax principle.

From the viewpoint of information theory, the unsupervised pre-training for
deep learning \citep{Hinton(2006-reducing),Bengio(2007-greedy)} may be
reinterpreted as a process of hierarchical infomax, which might help explain
why unsupervised pre-training helps deep learning \citep{Erhan(2010-why)}. In
our framework, a pre-whitening step can emerge naturally by the hierarchical
infomax, which might also explain why a pre-whitening step is useful for
training in many learning algorithms \citep{Coates(2011-IP-analysis),Bengio(2012-deep)}.

Our model naturally incorporates a considerable degree of biological realism.
It allows the optimization of a large-scale neural population with noisy
spiking neurons while taking into account of multiple biological constraints,
such as membrane noise, limited energy, and bounded connection weights. We
employ a technique to attain a low-rank weight matrix for optimization, so as
to reduce the influence of noise and discourage overfitting during training.
In our model, many parameters are optimized, including the population density
of parameters, filter weight vectors, and parameters for nonlinear tuning
functions. Optimizing all these model parameters could not be easily done by
many other methods.

Our experimental results suggest that our method for unsupervised learning of
representations has obvious advantages in its training speed and robustness
over the main existing methods. Our model has a nonlinear feedforward
structure and is convenient for fast learning and inference. This simple and
flexible framework for unsupervised learning of presentations should be
readily extended to training deep structure networks. In future work, it would
interesting to use our method to train deep structure networks with either
unsupervised or supervised learning.

\section*{Acknowledgments}

\phantomsection\addcontentsline{toc}{section}{Acknowledgments}

We thank Prof. Honglak Lee for sharing Matlab code for algorithm comparison,
Prof. Shan Tan for discussions and comments and Kai Liu for helping draw
Figure 1. Supported by grant NIH-NIDCD R01 DC013698.

\noindent

\bibliographystyle{apalike2}
\bibliography{Ref_ICLR121816}

\appendix
\renewcommand{\theequation}{A.\arabic{equation}} \setcounter{equation}{0}
\renewcommand\thesubsection{A.\arabic{subsection}} \renewcommand\thesubsubsection{A.\arabic{subsection}.\arabic{subsubsection}}

\section*{Appendix}

\phantomsection\addcontentsline{toc}{section}{Appendix} \label{Appendix}

\subsection{Formulas for Approximation of Mutual Information}

\label{formulas}

It follows from $I(X;R)=\left\langle \ln\frac{p(\mathbf{x}|\mathbf{r}%
)}{p(\mathbf{x})}\right\rangle _{\mathbf{r},\mathbf{x}}$ and Eq. (\ref{Ia})
that the conditional entropy should read:
\begin{equation}
H(X|R)=-\left\langle \ln p(\mathbf{x}|\mathbf{r})\right\rangle _{\mathbf{r}%
,\mathbf{x}}\simeq-\frac{1}{2}\left\langle \ln\left(  \det\left(
\frac{\mathbf{G}(\mathbf{x})}{2\pi e}\right)  \right)  \right\rangle
_{\mathbf{x}}\text{.} \label{HXR}%
\end{equation}

The Fisher information matrix $\mathbf{J}(\mathbf{x})$ (see Eq. \ref{Jx}),
which is symmetric and positive semidefinite, can be written also as
\begin{equation}
\mathbf{J}(\mathbf{x})=\left\langle \frac{\partial\ln p(\mathbf{r}%
|\mathbf{x})}{\partial\mathbf{x}}\frac{\partial\ln p(\mathbf{r}|\mathbf{x}%
)}{\partial\mathbf{x}^{T}}\right\rangle _{\mathbf{r}|\mathbf{x}}\text{.}
\label{Jx1}%
\end{equation}

If we suppose $p(\mathbf{r}|\mathbf{x})$ is conditional independent, namely,
$p(\mathbf{r}|\mathbf{x})=\prod_{n=1}^{N}p(r_{n}|\mathbf{x};\boldsymbol{\theta
}_{n})$, then we have \citep[see ][]{Huang(2016-information)}
\begin{align}
&  \mathbf{J}(\mathbf{x})=N\int_{{\Theta}}p({\boldsymbol{\theta}}%
)\mathbf{S}(\mathbf{x};{\boldsymbol{\theta}})d{\boldsymbol{\theta}}%
\text{,}\label{Ppn1.1}\\
&  \mathbf{S}(\mathbf{x};\boldsymbol{\theta})=\left\langle \frac{\partial\ln
p(r|\mathbf{x};\boldsymbol{\theta})}{\partial\mathbf{x}}\frac{\partial\ln
p(r|\mathbf{x};\boldsymbol{\theta})}{\partial\mathbf{x}^{T}}\right\rangle
_{r|\mathbf{x}}\text{,} \label{Ppn1.2}%
\end{align}
where $p(\boldsymbol{\theta})$ is the population{ }density function of
parameter ${\boldsymbol{\theta}}$,
\begin{equation}
p(\boldsymbol{\theta})=\frac{1}{N}\sum_{n=1}^{N}\delta(\boldsymbol{\theta
}-\boldsymbol{\theta}_{n})\text{,} \label{p(theta)}%
\end{equation}
and $\delta(\cdot)$\ denotes the Dirac delta function. It can be proved that
the approximation function of MI $I_{G}[p({\boldsymbol{\theta}})]$ (Eq.
\ref{Ia}) is concave about $p({\boldsymbol{\theta}})$
\citep{Huang(2016-information)}. In Eq. (\ref{Ppn1.1}), we can approximate the
continuous integral by a discrete summation for numerical computation,%
\begin{equation}
\mathbf{J}(\mathbf{x})\approx N\sum_{k=1}^{K_{1}}\alpha_{k}\mathbf{S}%
(\mathbf{x};{\boldsymbol{\theta}}_{k})\text{,} \label{Ja}%
\end{equation}
where $\sum_{k=1}^{K_{1}}\alpha_{k}=1$, $\alpha_{k}>0$, $k=1,\cdots,K_{1}$,
$1\leq K_{1}\leq N$.

For Poisson neuron model, by Eq. (\ref{Ppn1.2}) we have
\citep[see ][]{Huang(2016-information)}
\begin{align}
{p(r|\mathbf{x}};{{\boldsymbol{\theta}})}  &  ={\dfrac{f(\mathbf{x}%
;{\boldsymbol{\theta}})^{r}}{r!}\exp\left(  -f(\mathbf{x};{\boldsymbol{\theta
}})\right)  }\text{,}\label{PoissNeuron.1}\\
{\mathbf{S}(\mathbf{x}};{{\boldsymbol{\theta}})}  &  ={\dfrac{1}%
{f(\mathbf{x};{\boldsymbol{\theta}})}\dfrac{\partial f(\mathbf{x}%
;{\boldsymbol{\theta}})}{\partial\mathbf{x}}\dfrac{\partial f(\mathbf{x}%
;{\boldsymbol{\theta}})}{\partial\mathbf{x}^{T}}}\nonumber\\
&  ={\dfrac{\partial g(\mathbf{x};{\boldsymbol{\theta}})}{\partial\mathbf{x}%
}\dfrac{\partial g(\mathbf{x};{\boldsymbol{\theta}})}{\partial\mathbf{x}^{T}}%
}\text{,} \label{PoissNeuron.2}%
\end{align}
where $f(\mathbf{x};{\boldsymbol{\theta}})\geq0$ is the activation function
(mean response) of neuron and
\begin{equation}
{g(\mathbf{x}};{{\boldsymbol{\theta}})}=2\sqrt{f(\mathbf{x}%
;{\boldsymbol{\theta}})}\text{.} \label{g(x)}%
\end{equation}
Similarly, for Gaussian noise model, we have%
\begin{align}
&  {p(r|\mathbf{x}};{{\boldsymbol{\theta}})}={\dfrac{1}{\sigma\sqrt{2\pi}}%
\exp\left(  -\dfrac{\left(  r-f(\mathbf{x};{\boldsymbol{\theta}})\right)
^{2}}{2\sigma^{2}}\right)  }\text{,}\label{GaussNeuron.1}\\
&  {\mathbf{S}(\mathbf{x}};{{\boldsymbol{\theta}})}={\dfrac{1}{\sigma^{2}%
}\dfrac{\partial f(\mathbf{x};{\boldsymbol{\theta}})}{\partial\mathbf{x}%
}\dfrac{\partial f(\mathbf{x};{\boldsymbol{\theta}})}{\partial\mathbf{x}^{T}}%
}\text{,} \label{GaussNeuron.2}%
\end{align}
where $\sigma>0$ denotes the standard deviation of noise.

Sometimes we do not know the specific form of $p(\mathbf{x})$ and only know
$M$ samples, $\mathbf{x}_{1}$, $\cdots$, $\mathbf{x}_{M}$, which are
independent and identically distributed (i.i.d.) samples drawn from the
distribution $p(\mathbf{x})$. Then we can use the empirical average to
approximate the integral in Eq. (\ref{Ia}):%
\begin{equation}
I_{G}\approx\frac{1}{2}\sum_{m=1}^{M}\ln\left(  \det\left(  \mathbf{G}%
(\mathbf{x}_{m})\right)  \right)  +H\left(  X\right)  \text{.} \label{Ia_1}%
\end{equation}

\subsection{Proof of Proposition \ref{Proposition 1}}

\label{Proof of Proposition 1}

\begin{proof}
It follows from the data-processing inequality \citep{Cover(2006-BK-elements)}
that%
\begin{align}
I(X;R)  &  \leq I(Y;R)\leq I(\breve{Y};R)\leq I(\bar{Y};R)\text{,
}\label{Ixr1}\\
I(X;R)  &  \leq I(X;\bar{Y})\leq I(X;\breve{Y})\leq I(X;Y)\text{.}
\label{Ixr2}%
\end{align}

Since
\begin{equation}
{p(\bar{y}_{k}|\mathbf{x}{)}}=p({\breve{y}_{k_{1}},\cdots,\breve{y}_{k_{N_{k}%
}}}|\mathbf{x)}={\mathcal{N}(\mathbf{w}_{k}^{T}\mathbf{x},\,N_{k}^{-1}%
\sigma^{2})}\text{, }k=1,\cdots,K_{1}\text{,} \label{proof_1}%
\end{equation}
we have%
\begin{align}
&  {p(\mathbf{\bar{y}}|\mathbf{x}{)}}=p(\mathbf{\breve{y}}|\mathbf{x)}%
\text{,}\label{proof_2}\\
&  {p(\mathbf{\bar{y}}{)}}=p(\mathbf{\breve{y})}\text{,}\label{proof_2a}\\
&  I(X;\bar{Y})=I(X;\breve{Y})\text{.} \label{proof_3}%
\end{align}
Hence, by (\ref{Ixr2}) and (\ref{proof_3}), expression (\ref{proposition_2}) holds.

On the other hand, when $N_{k}$ is large, from Eq. (\ref{Z-}) we know that the
distribution of $\bar{Z}_{k}$, namely, ${\mathcal{N}\left(  0\text{,\thinspace
}N_{k}^{-1}\sigma^{2}\right)  }$,${\ }$approaches a Dirac delta function
$\delta(\bar{z}_{k})$. Then by (\ref{y=wx}) and (\ref{vyk}) we have $p\left(
\mathbf{r}|\mathbf{\bar{y}}\right)  \simeq{p(\mathbf{r}|}${$\mathbf{y}$}${{)}%
}=p\left(  \mathbf{r}|\mathbf{x}\right)  $ and%
\begin{align}
I(X;R)  &  =I\left(  Y;R\right)  -\left\langle \ln\frac{p\left(
\mathbf{r}|\mathbf{y}\right)  }{p\left(  \mathbf{r}|\mathbf{x}\right)
}\right\rangle _{\mathbf{r},\mathbf{x}}=I\left(  Y;R\right)  \text{,}%
\label{Ixr3}\\
I\left(  Y;R\right)   &  =I(\bar{Y};R)-\left\langle \ln\frac{p\left(
\mathbf{r}|\mathbf{\bar{y}}\right)  }{p\left(  \mathbf{r}|\mathbf{y}\right)
}\right\rangle _{\mathbf{r},\mathbf{y},\mathbf{\bar{y}}}\simeq I(\bar
{Y};R)\text{,}\label{Ixr3a}\\
I\left(  Y;R\right)   &  =I(\breve{Y};R)-\left\langle \ln\frac{p\left(
\mathbf{r}|\mathbf{\breve{y}}\right)  }{p\left(  \mathbf{r}|\mathbf{y}\right)
}\right\rangle _{\mathbf{r},\mathbf{y},\mathbf{\breve{y}}}\simeq I(\breve
{Y};R)\text{,}\label{Ixr3b}\\
I(X;Y)  &  =I(X;\bar{Y})-\left\langle \ln\frac{p\left(  \mathbf{x}%
|\mathbf{\bar{y}}\right)  }{p\left(  \mathbf{x}|\mathbf{y}\right)
}\right\rangle _{\mathbf{x},\mathbf{y},\mathbf{\bar{y}}}\simeq I(X;\bar
{Y})\text{.} \label{Ixr3c}%
\end{align}
It follows from (\ref{Ixr1}) and (\ref{Ixr3}) that (\ref{proposition_1})
holds. Combining (\ref{proposition_1}), (\ref{proposition_2}) and
(\ref{Ixr3a})--(\ref{Ixr3c}), \ we immediately get (\ref{proposition_3a}) and
(\ref{proposition_3b}). This completes the proof of \textbf{Proposition
\ref{Proposition 1}}.
\end{proof}

\subsection{Hierarchical Optimization for Maximizing $I(X;R)$}

\label{A_HieOpt}

In the following, we will discuss the optimization procedure for maximizing
$I(X;R)$ in two stages: maximizing $I(X;\breve{Y})$ and maximizing $I(Y;R)$.

\subsubsection{The 1st Stage}

\label{1st}

In the first stage, our goal is to maximize the MI $I(X;\breve{Y})$ and get
the optimal parameters\textbf{ }$\mathbf{w}_{k}$ ($k=1,\cdots,K_{1}$). Assume
that the stimulus $\mathbf{x}$ has zero mean (if not, let $\mathbf{x}%
\leftarrow\mathbf{x}-\left\langle \mathbf{x}\right\rangle _{\mathbf{x}}$) and
covariance matrix $\boldsymbol{\Sigma}_{\mathbf{x}}$. It follows from
eigendecomposition that
\begin{equation}
\boldsymbol{\Sigma}_{\mathbf{x}}=\left\langle \mathbf{xx}^{T}\right\rangle
_{\mathbf{x}}\approx\frac{1}{M-1}\mathbf{XX}^{T}=\mathbf{U}\boldsymbol{\Sigma
}\mathbf{U}^{T}\text{,} \label{Sig}%
\end{equation}
where $\mathbf{X}=[\mathbf{x}_{1}$\textrm{,\thinspace}$\cdots$%
\textrm{,\thinspace}$\mathbf{x}_{M}]$, $\mathbf{U}=[\mathbf{u}_{1}%
,\cdots,\mathbf{u}_{K}]\in%
\mathbb{R}
^{K\times K}$ is an unitary orthogonal matrix and $\boldsymbol{\Sigma
}={\mathrm{diag}}\left(  \sigma_{1}^{2},\cdots,\sigma_{K}^{2}\right)  $ is a
positive diagonal matrix with $\sigma_{1}\geq\cdots\geq\sigma_{K}>0$. Define
\begin{align}
&  \mathbf{\tilde{x}}=\boldsymbol{\Sigma}^{-1/2}\mathbf{U}^{T}\mathbf{x}%
\text{,}\label{x1}\\
&  \mathbf{\tilde{w}}_{k}=\boldsymbol{\Sigma}{^{1/2}\mathbf{U}^{T}}%
\mathbf{w}_{k}\text{,}\label{x1a}\\
&  y_{k}=\mathbf{\tilde{w}}_{k}^{T}\mathbf{\tilde{x}}\text{,} \label{x1b}%
\end{align}
where $k=1,\cdots,K_{1}$. The covariance matrix of $\mathbf{\tilde{x}}$ is
given by
\begin{equation}
\boldsymbol{\Sigma}_{\mathbf{\tilde{x}}}=\left\langle \mathbf{\tilde{x}%
\tilde{x}}^{T}\right\rangle _{\mathbf{\tilde{x}}}\approx\mathbf{I}_{K}\text{,}
\label{sigI}%
\end{equation}
and $\mathbf{I}_{K}$ is a $K\times K$ identity matrix. From (\ref{Ia}) and
(\ref{GaussNeuron.2}) we have $I(X;\breve{Y})=I(\tilde{X};\breve{Y})$ and
\begin{align}
&  I(\tilde{X};\breve{Y})\simeq I_{G}^{\prime}=\frac{1}{2}\ln\left(
\det\left(  \frac{\mathbf{\tilde{G}}}{2\pi e}\right)  \right)  +H(\tilde
{X})\text{,}\label{Ia1}\\
&  \mathbf{\tilde{G}}\approx{N\sigma^{-2}\sum_{k=1}^{K_{1}}\alpha_{k}%
}\mathbf{\tilde{w}}_{k}\mathbf{\tilde{w}}_{k}^{T}+\mathbf{I}_{K}\text{.}
\label{G1}%
\end{align}
The following approximations are useful
\citep[see ][]{Huang(2016-information)}:
\begin{align}
p(\mathbf{\tilde{x}})  &  \approx{\mathcal{N}\left(  0,\mathbf{I}_{K}\right)
}\text{, }\label{Px1}\\
\mathbf{P}(\mathbf{\tilde{x}})  &  =-\frac{\partial^{2}\ln p\left(
\mathbf{\tilde{x}}\right)  }{\partial\mathbf{\tilde{x}}\partial\mathbf{\tilde
{x}}^{T}}\approx\mathbf{I}_{K}\text{.} \label{Px1a}%
\end{align}
By the central limit theorem, the distribution of random variable $\tilde{X}$
is closer to a normal distribution than the distribution of the original
random variable $X$. On the other hand, the PCA models assume multivariate
gaussian data whereas the ICA models assume multivariate non-gaussian data.
Hence by a PCA-like whitening transformation (\ref{x1}) we can use the
approximation (\ref{Px1a}) with the Laplace's method of asymptotic expansion,
which only requires that the peak be close to its mean while random variable
$\tilde{X}$\ needs not be exactly Gaussian.

Without any constraints on the Gaussian channel of neural populations,
especially the peak firing rates, the capacity of this channel may grow
indefinitely: $I(\tilde{X};\breve{Y})\rightarrow\infty$. The most common
constraint on the neural populations is an energy or power constraint which
can also be regarded as a signal-to-noise ratio (SNR) constraint. The SNR for
the output $\breve{y}_{n}$ of the \textit{n}-th neuron is given by%
\begin{equation}
\mathrm{SNR}_{n}=\frac{1}{\sigma^{2}}\left\langle \left(  \mathbf{w}_{n}%
^{T}\mathbf{x}\right)  ^{2}\right\rangle _{\mathbf{x}}\approx\frac{1}%
{\sigma^{2}}\mathbf{\tilde{w}}_{n}^{T}\mathbf{\tilde{w}}_{n}\text{,
}n=1,\cdots,N\text{.} \label{SNRn}%
\end{equation}
We require that%
\begin{equation}
\frac{1}{N}{\sum_{n=1}^{N}}\mathrm{SNR}_{n}\approx\frac{1}{\sigma^{2}}%
{\sum_{k=1}^{K_{1}}\alpha_{k}}\mathbf{\tilde{w}}_{k}^{T}\mathbf{\tilde{w}}%
_{k}\leq\rho\text{,} \label{cons.1}%
\end{equation}
where $\rho$ is a positive constant. Then by Eq. (\ref{Ia1}), (\ref{G1}) and
(\ref{cons.1}), we have the following optimization problem:
\begin{align}
&  \mathsf{minimize}\mathrm{\;}{Q_{G}^{\prime}[\mathbf{\hat{W}}]}=-\frac{1}%
{2}{\ln\left(  \det\left(  N\sigma^{-2}\mathbf{\hat{W}\hat{W}}^{T}%
+\mathbf{I}_{K}\right)  \right)  }\text{,}\label{minQa1}\\
&  \mathsf{subject\;to\;}{h}=\mathrm{Tr}\left(  \mathbf{\hat{W}\hat{W}}%
^{T}\right)  -E\leq0\text{,} \label{cons.h}%
\end{align}
where $\mathrm{Tr}\left(  \cdot\right)  $ denotes matrix trace and
\begin{align}
&  \mathbf{\hat{W}}=\mathbf{\tilde{W}A}^{1/2}={\boldsymbol{\Sigma}%
^{1/2}\mathbf{U}^{T}}\mathbf{WA}^{1/2}={\left[  \mathbf{\hat{w}}_{1}%
,\cdots,\mathbf{\hat{w}}_{K_{1}}\right]  }\text{,}\label{WA1}\\
&  {\mathbf{A}}={\mathrm{diag}\left(  \alpha_{1},\cdots,\alpha_{K_{1}}\right)
}\text{,}\label{A0}\\
&  \mathbf{W}={\left[  \mathbf{w}_{1},\cdots,\mathbf{w}_{K_{1}}\right]
}\text{,}\label{W}\\
&  \mathbf{\tilde{W}}={\left[  \mathbf{\tilde{w}}_{1},\cdots,\mathbf{\tilde
{w}}_{K_{1}}\right]  }\text{,}\label{Wt}\\
&  E=\rho\sigma^{2}\text{.} \label{E}%
\end{align}
Here $E$ is a constant that does not affect the final optimal solution so we
set $E=1$. Then we obtain an optimal solution as follows:
\begin{align}
\mathbf{W}  &  ={a}\mathbf{U}_{0}{\boldsymbol{\Sigma}_{0}^{-1/2}\mathbf{V}%
_{0}^{T}}\text{, }\label{W0}\\
\mathbf{A}  &  =K_{1}^{-1}{\mathbf{I}}_{K_{1}}\text{, }\label{A}\\
a  &  =\sqrt{E{K_{1}}K_{0}^{-1}}=\sqrt{{K_{1}}K_{0}^{-1}}\text{,}\label{a}\\
\boldsymbol{\Sigma}_{0}  &  ={\mathrm{diag}}\left(  \sigma_{1}^{2}%
,\cdots,\sigma_{K_{0}}^{2}\right)  \text{,}\label{sigma0}\\
\mathbf{U}_{0}  &  =\mathbf{U}\left(  \text{:},1\text{:}K_{0}\right)  \in%
\mathbb{R}
^{K\times K_{0}}\text{,}\label{U0}\\
\mathbf{V}_{0}  &  =\mathbf{V}\left(  \text{:},1\text{:}K_{0}\right)  \in%
\mathbb{R}
^{K_{1}\times K_{0}}\text{,} \label{V0}%
\end{align}
where $\mathbf{V}=[\mathbf{v}_{1},\cdots,\mathbf{v}_{K_{1}}]$ is an
$K_{1}\times K_{1}$\ unitary orthogonal matrix, parameter $K_{0}$ represents
the size of the reduced dimension ($1\leq K_{0}\leq K$), and its value will be
determined below. Now the optimal parameters $\mathbf{w}_{n}$ ($n=1,\cdots,N$)
are clustered into $K_{1}$\ classes (see Eq. \ref{Ja}) and obey an uniform
discrete distribution (see also Eq. \ref{alpha1} in Appendix \ref{2nd}).

When ${K}=K_{0}={K}_{1}$, the optimal solution of $\mathbf{W}$ in
Eq.~(\ref{W0}) is a whitening-like filter. When ${\mathbf{V=I}}_{K}$, the
optimal matrix $\mathbf{W}$ is the principal component analysis (PCA)
whitening filters. In the symmetrical case with ${\mathbf{V=U}}$, the optimal
matrix $\mathbf{W}$ becomes a zero component analysis (ZCA) whitening filter.
If $K<{K}_{1}$, this case leads to an overcomplete solution, whereas when
$K_{0}\leq{K}_{1}<K$, the undercomplete solution arises. Since $K_{0}\leq
K_{1}$ and $K_{0}\leq K$,$\ {Q_{G}^{\prime}}$ achieves its minimum when
$K_{0}=K$. However, in practice other factors may prevent it from reaching
this minimum. For example, consider the average of squared weights,
\begin{equation}
\varsigma={\sum_{k=1}^{K_{1}}\alpha_{k}}\left\Vert \mathbf{w}_{k}\right\Vert
^{2}=\mathrm{Tr}\left(  \mathbf{WAW}^{T}\right)  =\frac{E}{K_{0}}\sum
_{k=1}^{K_{0}}{\sigma_{k}^{-2}}\text{,} \label{consw}%
\end{equation}
where $\left\Vert \mathbf{\cdot}\right\Vert $ denotes the Frobenius norm. The
value of $\varsigma$ is extremely large when any ${\sigma_{k}\ }$becomes
vanishingly small. For real neurons these weights of connection are not
allowed to be too large. Hence we impose a limitation on the weights:
$\varsigma\leq E_{1}$, where $E_{1}$ is a positive constant. This yields
another constraint on the objective function,
\begin{equation}
\tilde{h}=\frac{E}{K_{0}}\sum_{k=1}^{K_{0}}{\sigma_{k}^{-2}-E_{1}\leq
0}\text{.} \label{consw1}%
\end{equation}
From (\ref{cons.h}) and (\ref{consw1}) we get the optimal $K_{0}=\arg
\max_{\tilde{K}_{0}}\left(  E\tilde{K}_{0}^{-1}\sum_{k=1}^{\tilde{K}_{0}%
}{\sigma_{k}^{-2}}\right)  $. By this constraint, small values of ${\sigma
_{k}^{2}}$ will often result in $K_{0}<K$ and a low-rank matrix $\mathbf{W}$
(Eq. \ref{W0}).

On the other hand, the low-rank matrix $\mathbf{W}$ can filter out the noise
of stimulus $\mathbf{x}$. Consider the transformation $\mathbf{Y}%
=\mathbf{W}^{T}\mathbf{X}$ with $\mathbf{X}=[\mathbf{x}_{1}$%
\textrm{,\thinspace}$\cdots$\textrm{,\thinspace}$\mathbf{x}_{M}]$
and\ $\mathbf{Y}=[\mathbf{y}_{1}$\textrm{,\thinspace\thinspace}$\cdots
$\textrm{,\thinspace}$\mathbf{y}_{M}]$ for $M$ samples. It follows from the
singular value decomposition (SVD) of $\mathbf{X}$ that
\begin{equation}
\mathbf{X}=\mathbf{US\tilde{V}}^{T}\text{,} \label{Xsvd}%
\end{equation}
where $\mathbf{U}$ is given in (\ref{Sig}), $\mathbf{\tilde{V}}$ is a $M\times
M$ unitary orthogonal matrix, $\mathbf{S}\ $is a $K\times M$ diagonal matrix
with non-negative real numbers on the diagonal, $S_{k,k}=\sqrt{M-1}\sigma_{k}$
($k=1,\cdots,K$, $K\leq M$), and $\mathbf{SS}^{T}=(M-1){\boldsymbol{\Sigma}}$.
Let
\begin{equation}
\mathbf{\breve{X}}=\sqrt{M-1}\mathbf{U}_{0}{\boldsymbol{\Sigma}}_{0}%
^{1/2}\mathbf{\tilde{V}}_{0}^{T}\approx\mathbf{X}\text{,} \label{X(}%
\end{equation}
where $\mathbf{\tilde{V}}_{0}=\mathbf{\tilde{V}}\left(  \text{:}%
,1\text{:}K_{0}\right)  \in%
\mathbb{R}
^{M\times K_{0}}$, ${\boldsymbol{\Sigma}}_{0}$ and $\mathbf{U}_{0}$ are given
in (\ref{sigma0}) and (\ref{U0}), respectively. Then
\begin{equation}
\mathbf{Y}=\mathbf{W}^{T}\mathbf{X}={a\mathbf{V}_{0}\boldsymbol{\Sigma}%
_{0}^{-1/2}}\mathbf{U}_{0}^{T}\mathbf{US\tilde{V}}^{T}=\mathbf{W}%
^{T}\mathbf{\breve{X}}=a\sqrt{M-1}{\mathbf{V}_{0}}\mathbf{\tilde{V}}_{0}%
^{T}\text{,} \label{Y1}%
\end{equation}
where $\mathbf{\breve{X}}$\ can be regarded as a denoised version of
$\mathbf{X}$. The determination of the effective rank $K_{0}\leq K$ of the
matrix $\mathbf{\breve{X}}$ by using singular values is based on various
criteria \citep{Konstantinides(1988-statistical)}. Here we choose $K_{0}$ as
follows:
\begin{equation}
K_{0}=\arg\min_{K_{0}^{\prime}}\left(  \sqrt{\frac{\sum_{k=1}^{K_{0}^{\prime}%
}{\sigma_{k}^{2}}}{\sum_{k=1}^{K}{\sigma_{k}^{2}}}}\geq\epsilon\right)
\text{,} \label{K0}%
\end{equation}
where $\epsilon\ $is a positive constant ($0<\epsilon\leq1$).

Another advantage of a low-rank matrix $\mathbf{W}$ is that it can
significantly reduce overfitting for learning neural population{ }parameters.
In practice, the constraint (\ref{consw}) is equivalent to a weight-decay
regularization term used in many other optimization problems
\citep{Cortes(1995-support), Hinton(2010-practical)}, which can reduce
overfitting to the training data. To prevent the neural networks from
overfitting, \cite{Srivastava(2014-dropout)}\ presented a technique to
randomly drop units from the neural network during training, which may in fact
be regarded as an attempt to reduce the rank of the weight matrix because the
dropout can result in a sparser weights (lower rank matrix). This means that
the update is only concerned with keeping the more important components, which
is similar to first performing a denoising process by the SVD low rank approximation.

In this stage, we have obtained the optimal parameter $\mathbf{W}$ (see
\ref{W0}). The optimal value of matrix ${\mathbf{V}_{0}}$ can also be
determined, as shown in Appendix \ref{final}.

\subsubsection{The 2nd Stage}

\label{2nd}

For this stage, our goal is to maximize the MI $I(Y;R)$ and get the optimal
parameters ${\boldsymbol{\tilde{\theta}}}_{k}$, $k=1,\cdots,K_{1}$. Here the
input is $\mathbf{y}=(y_{1},\cdots,y_{K_{1}})^{T}$ and the output
$\mathbf{r}=(r_{1},\cdots,r_{N})^{T}$ is also clustered into $K_{1}$ classes.
The responses of $N_{k}$ neurons in the $k$-th subpopulation obey a Poisson
distribution with mean $\tilde{f}(\mathbf{e}_{k}^{T}\mathbf{y}%
;{\boldsymbol{\tilde{\theta}}}_{k})$, where $\mathbf{e}_{k}$ is a unit vector
with $1$ in the $k$-th\ element and $y_{k}=\mathbf{e}_{k}^{T}\mathbf{y}$. By
(\ref{x1}) and (\ref{x1b}), we have
\begin{align}
&  \left\langle y_{k}\right\rangle _{y_{k}}=0\text{, }\label{yv0}\\
&  \sigma_{y_{k}}^{2}=\left\langle y_{k}^{2}\right\rangle _{y_{k}}=\left\Vert
\mathbf{\tilde{w}}_{k}\right\Vert ^{2}\text{.} \label{sigyk2}%
\end{align}
Then for large $N$, by (\ref{Ia})--(\ref{Px}) and (\ref{Px1}) we can use the
following approximation,
\begin{equation}
I(Y;R)\simeq\breve{I}_{F}=\frac{1}{2}\left\langle \ln\left(  \det\left(
\frac{\mathbf{\breve{J}}(\mathbf{y})}{2\pi e}\right)  \right)  \right\rangle
_{\mathbf{y}}+H(Y)\text{,} \label{Ia2}%
\end{equation}
where
\begin{align}
&  \mathbf{\breve{J}}(\mathbf{y})={{\mathrm{diag}}}\left(  N\alpha
_{1}\left\vert g_{1}^{\prime}(y_{1})\right\vert ^{2},\cdots,N\alpha_{K_{1}%
}\left\vert g_{K_{1}}^{\prime}(y_{K_{1}})\right\vert ^{2}\right)
\text{,}\label{3.1}\\
&  g_{k}^{\prime}(y_{k})=\frac{\partial g_{k}(y_{k})}{\partial y_{k}}\text{,
}k=1,\cdots,K_{1}\text{,}\label{3.3}\\
&  g_{k}(y_{k})=2\sqrt{\tilde{f}(y_{k};{\boldsymbol{\tilde{\theta}}}_{k}%
)}\text{, }k=1,\cdots,K_{1}\text{.} \label{3.4}%
\end{align}

It is easy to get that%
\begin{align}
\breve{I}_{F}  &  =\frac{1}{2}\sum_{k=1}^{K_{1}}\left\langle \ln\left(
\frac{N\alpha_{k}\left\vert g_{k}^{\prime}(y_{k})\right\vert ^{2}}{2\pi
e}\right)  \right\rangle _{\mathbf{y}}+H(Y)\nonumber\\
&  \leq\frac{1}{2}\sum_{k=1}^{K_{1}}\left\langle \ln\left(  \frac{\left\vert
g_{k}^{\prime}(y_{k})\right\vert ^{2}}{2\pi e}\right)  \right\rangle
_{\mathbf{y}}-\frac{K_{1}}{2}\ln\left(  \frac{K_{1}}{N}\right)  +H(Y)\text{,}
\label{IF<}%
\end{align}
where the equality holds if and only if
\begin{equation}
\alpha_{k}=\frac{1}{K_{1}},k=1,\cdots,K_{1}\text{,} \label{alpha1}%
\end{equation}
which is consistent with Eq.\ (\ref{A}).

On the other hand, it follows from the Jensen's inequality that%
\begin{align}
\breve{I}_{F}  &  =\left\langle \ln\left(  p\left(  \mathbf{y}\right)
^{-1}\det\left(  \frac{\mathbf{\breve{J}}(\mathbf{y})}{2\pi e}\right)
^{1/2}\right)  \right\rangle _{\mathbf{y}}\nonumber\\
&  \leq\ln\int\det\left(  \dfrac{\mathbf{\breve{J}}(\mathbf{y})}{2\pi
e}\right)  ^{1/2}d\mathbf{y}\text{,} \label{3.6}%
\end{align}
where the equality holds if and only if $p\left(  \mathbf{y}\right)  ^{-1}%
\det\left(  \mathbf{\breve{J}}(\mathbf{y})\right)  ^{1/2}$ is a constant,
which implies that%
\begin{equation}
p\left(  \mathbf{y}\right)  =\frac{\det\left(  \mathbf{\breve{J}}%
(\mathbf{y})\right)  ^{1/2}}{\int\det\left(  \mathbf{\breve{J}}(\mathbf{y}%
)\right)  ^{1/2}d\mathbf{y}}=\frac{%
{\textstyle\prod_{k=1}^{K_{1}}}
\left\vert g_{k}^{\prime}(y_{k})\right\vert }{\int%
{\textstyle\prod_{k=1}^{K_{1}}}
\left\vert g_{k}^{\prime}(y_{k})\right\vert d\mathbf{y}}\text{.} \label{3.7}%
\end{equation}
From (\ref{3.6}) and (\ref{3.7}), maximizing $\tilde{I}_{F}$\ yields
\begin{equation}
p\left(  y_{k}\right)  =\frac{\left\vert g_{k}^{\prime}(y_{k})\right\vert
}{\int\left\vert g_{k}^{\prime}(y_{k})\right\vert dy_{k}}\text{, }%
k=1,\cdots,K_{1}\text{.} \label{py}%
\end{equation}
We assume that (\ref{py}) holds, at least approximately. Hence we can let the
peak of $g_{k}^{\prime}(y_{k})$ be at $y_{k}=\left\langle y_{k}\right\rangle
_{y_{k}}=0$ and $\left\langle y_{k}^{2}\right\rangle _{y_{k}}=\sigma_{y_{k}%
}^{2}=\left\Vert \mathbf{\tilde{w}}_{k}\right\Vert ^{2}$. Then combining
(\ref{3.3}), (\ref{3.6}) and (\ref{py}) we find the optimal parameters
${\boldsymbol{\tilde{\theta}}}_{k}$ for the nonlinear functions $\tilde
{f}(y_{k};{\boldsymbol{\tilde{\theta}}}_{k})$, $k=1,\cdots,K_{1}$.

\subsubsection{The Final Objective Function}

\label{final}

In the preceding sections we have obtained the initial optimal solutions by
maximizing $I\left(  X;\breve{Y}\right)  $\ and $I(Y;R)$. In this section, we
will discuss how to find the final optimal $\mathbf{V}_{0}$ and other
parameters by maximizing $I(X;R)$ from the initial optimal solutions.

First, we have
\begin{equation}
\mathbf{y}=\mathbf{\tilde{W}}^{T}\mathbf{\tilde{x}}=a\mathbf{\hat{y}}\text{,}
\label{ya}%
\end{equation}
where $a$ is given in (\ref{a}) and
\begin{align}
&  \mathbf{\hat{y}}=(\hat{y}_{1},\cdots,\hat{y}_{K_{1}})^{T}={\mathbf{C}}%
^{T}\mathbf{\hat{x}}={\mathbf{\check{C}}}^{T}\mathbf{\check{x}}\text{,}%
\label{y^}\\
&  \mathbf{\hat{x}}=\mathbf{\Sigma}_{0}^{-1/2}\mathbf{U}_{0}^{T}%
\mathbf{x}\text{,}\label{x^}\\
&  {\mathbf{C}}={\mathbf{V}_{0}^{T}}\in%
\mathbb{R}
^{K_{0}\times K_{1}}\text{,}\label{C}\\
&  \mathbf{\check{x}}=\mathbf{U}_{0}\mathbf{\Sigma}_{0}^{-1/2}\mathbf{U}%
_{0}^{T}\mathbf{x=\mathbf{U}_{0}\hat{x}}\text{, }\label{xv}\\
&  {\mathbf{\check{C}}}=\mathbf{U}_{0}{\mathbf{C}}=[\mathbf{\check{c}}%
_{1},\cdots,\mathbf{\check{c}}_{K_{1}}]\text{.} \label{Cv}%
\end{align}

It follows that%
\begin{align}
&  I(X;R)=I\left(  \tilde{X};R\right)  \simeq\tilde{I}_{G}=\frac{1}%
{2}\left\langle \ln\left(  \det\left(  \frac{\mathbf{G}(\mathbf{\hat{x}}%
)}{2\pi e}\right)  \right)  \right\rangle _{\mathbf{\hat{x}}}+H(\tilde
{X})\text{,}\label{Ia3}\\
&  \mathbf{G}(\mathbf{\hat{x}})=N\mathbf{\hat{W}\boldsymbol{\hat{\Phi}}\hat
{W}}^{T}+\mathbf{I}_{K}\text{,}\label{Gx1}\\
&  \mathbf{\hat{W}}={\boldsymbol{\Sigma}^{1/2}\mathbf{U}^{T}}\mathbf{WA}%
^{1/2}=a\sqrt{K_{1}^{-1}}\mathbf{I}_{K_{0}}^{K}{\mathbf{C}}=\sqrt{K_{0}^{-1}%
}\mathbf{I}_{K_{0}}^{K}{\mathbf{C}}\text{,} \label{W2}%
\end{align}
where $\mathbf{I}_{K_{0}}^{K}$ is a $K\times K_{0}$\ diagonal matrix with
value $1$ on the diagonal and
\begin{align}
&  \boldsymbol{\hat{\Phi}}=\boldsymbol{\Phi}^{2}\text{,}\label{final.4}\\
&  \boldsymbol{\Phi}={{\mathrm{diag}}}\left(  \phi(\hat{y}_{1}),\cdots
,\phi(\hat{y}_{K_{1}})\right)  \text{,}\label{final.5}\\
&  \phi(\hat{y}_{k})=a^{-1}\left\vert \frac{\partial g_{k}(\hat{y}_{k}%
)}{\partial\hat{y}_{k}}\right\vert \text{,}\label{final.6}\\
&  g_{k}(\hat{y}_{k})=2\sqrt{\tilde{f}(\hat{y}_{k};{\boldsymbol{\tilde{\theta
}}}_{k})}\text{,}\label{final.7}\\
&  \hat{y}_{k}=a^{-1}y_{k}=\mathbf{c}_{k}^{T}\mathbf{\hat{x}}\text{,
}k=1,\cdots,K_{1}\text{.} \label{final.8}%
\end{align}
Then we have
\begin{equation}
\det\left(  \mathbf{G}(\mathbf{\hat{x}})\right)  =\det\left(  NK_{0}%
^{-1}\mathbf{C\boldsymbol{\hat{\Phi}}C}^{T}+\mathbf{I}_{K_{0}}\right)
\text{.} \label{Gx2}%
\end{equation}
For large $N$ and $K_{0}/N\rightarrow0$, we have
\begin{equation}
\det\left(  \mathbf{G}(\mathbf{\hat{x}})\right)  \approx\det\left(
\mathbf{J}(\mathbf{\hat{x}})\right)  =\det\left(  NK_{0}^{-1}%
\mathbf{C\boldsymbol{\hat{\Phi}}C}^{T}\right)  \text{,} \label{Gx3}%
\end{equation}
\begin{align}
&  \tilde{I}_{G}\approx\tilde{I}_{F}=-Q-\frac{K}{2}\ln\left(  2\pi e\right)
-\frac{K_{0}}{2}\ln\left(  \varepsilon\right)  +H(\tilde{X})\text{,}%
\label{Ia4}\\
&  Q=-\frac{1}{2}\left\langle \ln\left(  \det\left(
{\mathbf{C\boldsymbol{\hat{\Phi}}C}}^{T}\right)  \right)  \right\rangle
_{\mathbf{\hat{x}}}\text{,}\label{Q1}\\
&  \varepsilon=\frac{K_{0}}{N}\text{.} \label{eps1}%
\end{align}
Hence we can state the optimization problem as:%
\begin{align}
&  \mathsf{minimize}\;\;{Q\left[  {\mathbf{C}}\right]  =}-\frac{1}%
{2}\left\langle \ln\left(  \det\left(  {\mathbf{C\boldsymbol{\hat{\Phi}}C}%
}^{T}\right)  \right)  \right\rangle _{\mathbf{\hat{x}}}\text{,}\label{QC}\\
&  \mathsf{subject\;to}\;{\mathbf{CC}}^{T}=\mathbf{I}_{K_{0}}\text{.}
\label{CC}%
\end{align}
The gradient from (\ref{QC}) is given by:%
\begin{equation}
\frac{dQ{\left[  {\mathbf{C}}\right]  }}{d{\mathbf{C}}}=-\left\langle \left(
{\mathbf{C\boldsymbol{\hat{\Phi}}C}}^{T}\right)  ^{-1}\mathbf{C}%
\boldsymbol{\hat{\Phi}}+\mathbf{\hat{x}}\boldsymbol{\omega}^{T}\right\rangle
_{\mathbf{\hat{x}}}\text{,} \label{dQ}%
\end{equation}
where ${\mathbf{C}}=[\mathbf{c}_{1},\cdots,\mathbf{c}_{K_{1}}]$,
$\boldsymbol{\omega}=\left(  \omega_{1}{,}\cdots,\omega{_{K_{1}}}\right)
^{T}$, and
\begin{equation}
\omega_{k}=\phi(\hat{y}_{k})\phi^{\prime}(\hat{y}_{k})\mathbf{c}_{k}%
^{T}\left(  {\mathbf{C\boldsymbol{\hat{\Phi}}C}}^{T}\right)  ^{-1}%
\mathbf{c}_{k}\text{, }k=1,\cdots,K_{1}\text{.} \label{omegak}%
\end{equation}

In the following we will discuss how to get the optimal solution of
$\mathbf{C}$ for two specific cases.

\subsection{Algorithms for Optimization Objective Function}

\label{Algs}

\subsubsection{Algorithm 1: ${K_{0}}=K_{1}$}

\label{Alg.1}

Now ${\mathbf{CC}}^{T}={\mathbf{C}}^{T}{\mathbf{C}}=\mathbf{I}_{K_{1}}$, then
by Eq. (\ref{QC}) we have%
\begin{align}
&  Q_{1}{\left[  {\mathbf{C}}\right]  }=-\left\langle \sum_{k=1}^{K_{1}}%
\ln\left(  \phi(\hat{y}_{k})\right)  \right\rangle _{\mathbf{\hat{x}}}\text{,
}\label{Q0}\\
&  \frac{dQ_{1}{\left[  {\mathbf{C}}\right]  }}{d{\mathbf{C}}}=-\left\langle
\mathbf{\hat{x}}\boldsymbol{\omega}^{T}\right\rangle _{\mathbf{\hat{x}}%
}\text{,}\label{dQC}\\
&  \omega_{k}=\frac{\phi^{\prime}(\hat{y}_{k})}{\phi(\hat{y}_{k})}\text{,
}k=1,\cdots,K_{1}\text{.} \label{omg1}%
\end{align}
Under the orthogonality constraints (Eq. \ref{CC}), we can use the following
update rule for learning $\mathbf{C}$
\citep{Edelman(1998-geometry),Amari(1999-natural)}:%
\begin{align}
\mathbf{C}^{t+1}  &  =\mathbf{C}^{t}+\mu_{t}\frac{d\mathbf{C}^{t}}{dt}%
\text{,}\label{Ct}\\
\frac{d\mathbf{C}^{t}}{dt}  &  =-\frac{dQ_{1}{\left[  {\mathbf{C}}^{t}\right]
}}{d{\mathbf{C}}^{t}}+{\mathbf{C}}^{t}\left(  \frac{dQ_{1}{\left[
{\mathbf{C}}^{t}\right]  }}{d{\mathbf{C}}^{t}}\right)  ^{T}{\mathbf{C}}%
^{t}\text{,} \label{dC}%
\end{align}
where the learning rate parameter $\mu_{t}$ changes with the iteration count
$t$, $t=1,\cdots,t_{\max}$. Here we can use the empirical average to
approximate the integral in (\ref{dQC}) (see Eq. \ref{Ia_1}). We can also
apply stochastic gradient descent (SGD) method for online updating of
$\mathbf{C}^{t+1}$ in (\ref{Ct}).

The orthogonality constraint (Eq. \ref{CC}) can accelerate the convergence
rate. In practice, the orthogonal constraint (\ref{CC}) for objective function
(\ref{QC}) is not strictly necessary in this case. We can completely discard
this constraint condition and consider
\begin{equation}
\mathsf{minimize}\;{Q_{2}\left[  {\mathbf{C}}\right]  =}-\left\langle
\sum_{k=1}^{K_{1}}\ln\left(  \phi\left(  \hat{y}_{k}\right)  \right)
\right\rangle _{\mathbf{\hat{x}}}-\frac{1}{2}\ln\left(  \det\left(
{\mathbf{C}}^{T}\mathbf{C}\right)  \right)  \text{,} \label{QC1}%
\end{equation}
where we assume $\mathrm{rank}\left(  {\mathbf{C}}\right)  =K_{1}={K_{0}}$. If
we let
\begin{equation}
\frac{d\mathbf{C}}{dt}=-\mathbf{CC}^{T}\frac{d{Q_{2}\left[  {\mathbf{C}%
}\right]  }}{d{\mathbf{C}}}\text{,} \label{dCdt}%
\end{equation}
then%
\begin{equation}
\mathrm{Tr}\left(  \frac{d{Q_{2}\left[  {\mathbf{C}}\right]  }}{d{\mathbf{C}}%
}\frac{d\mathbf{C}^{T}}{dt}\right)  =-\mathrm{Tr}\left(  \mathbf{C}^{T}%
\frac{d{Q_{2}\left[  {\mathbf{C}}\right]  }}{d{\mathbf{C}}}\frac
{d{Q_{2}\left[  {\mathbf{C}}\right]  }}{d{\mathbf{C}}^{T}}\mathbf{C}\right)
\leq0\text{.} \label{trdQ}%
\end{equation}
Therefore we can use an update rule similar to Eq. \ref{Ct} for learning
$\mathbf{C}$. In fact, the method can also be extended to the case ${K_{0}%
}>K_{1}$ by using the same objective function (\ref{QC1}).

The learning rate parameter $\mu_{t}$ (see \ref{Ct}) is updated adaptively, as
follows. First, calculate
\begin{align}
\mu_{t}  &  =\frac{{v}_{t}}{\kappa_{t}}\text{, }t=1,\cdots,t_{\max}%
\text{,}\label{mut}\\
\kappa_{t}  &  =\frac{1}{K_{1}}\sum_{k=1}^{K_{1}}\frac{\left\Vert
\nabla\mathbf{C}^{t}(:,k)\right\Vert }{\left\Vert \mathbf{C}^{t}%
(:,k)\right\Vert }\text{,} \label{dt}%
\end{align}
and $\mathbf{C}^{t+1}$ by (\ref{Ct}) and (\ref{dC}), then calculate the value
$Q_{1}{\left[  {\mathbf{C}}^{t+1}\right]  }$. If $Q_{1}{\left[  {\mathbf{C}%
}^{t+1}\right]  }<Q_{1}{\left[  {\mathbf{C}}^{t}\right]  }$, then let
${v}_{t+1}\leftarrow{v}_{t}$, continue for the next iteration; otherwise, let
${v}_{t}\leftarrow\tau v_{t}$, $\mu_{t}\leftarrow{v}_{t}/\kappa_{t}$ and
recalculate $\mathbf{C}^{t+1}$ and $Q_{1}{\left[  {\mathbf{C}}^{t+1}\right]
}$. Here $0<{v}_{1}<1$ and $0<\tau<1$\ are set as constants. After getting
$\mathbf{C}^{t+1}$ for each update, we employ a Gram--Schmidt
orthonormalization process for matrix $\mathbf{C}^{t+1}$, where the
orthonormalization process can accelerate the convergence. However, we can
discard the Gram--Schmidt orthonormalization process after iterative $t_{0}$
($>1$) epochs for more accurate optimization solution $\mathbf{C}$. In this
case, the objective function is given by the Eq. (\ref{QC1}). We can also
further optimize parameter $b$ by gradient descent.

When $K_{0}=K_{1}$, the objective function ${Q_{2}\left[  {\mathbf{C}}\right]
}$ in Eq.~(\ref{QC1}) without constraint is the same as the objective function
of infomax ICA (IICA) \citep{Bell(1995-information),Bell(1997-independent)},
and as a consequence we should get the same optimal solution ${\mathbf{C}}$.
Hence, in this sense, the IICA may be regarded as a special case of our
method. Our method has a wider range of applications and can handle more
generic situations. Our model is derived by neural populations with a huge
number of neurons and it is not restricted to additive noise model. Moreover,
our method has a faster convergence rate during training than IICA (see
Section \ref{results}).

\subsubsection{Algorithm 2: ${K_{0}}\leq K_{1}$}

\label{Alg.2}

In this case, it is computationally expensive to update $\mathbf{C}$ by using
the gradient of $Q$ (see Eq. \ref{dQ}), since it needs to compute the inverse
matrix for every $\mathbf{\hat{x}}$. Here we provide an alternative method for
learning the optimal $\mathbf{C}$. First, we consider the following inequalities.

\begin{proposition}
\label{Proposition 2} The following inequations hold,%
\begin{align}
\frac{1}{2}\left\langle \ln\left(  \det\left(  {\mathbf{C\boldsymbol{\hat
{\Phi}}}}\mathbf{\mathbf{C}}^{T}\right)  \right)  \right\rangle _{\mathbf{\hat
{x}}}  &  \leq\frac{1}{2}\ln\left(  \det\left(  {\mathbf{C}}\left\langle
{\mathbf{\boldsymbol{\hat{\Phi}}}}\right\rangle _{\mathbf{\hat{x}}%
}\mathbf{\mathbf{C}}^{T}\right)  \right)  \text{,}\label{Cs0}\\
\left\langle \ln\left(  \det\left(  {\mathbf{C}}\boldsymbol{\Phi
}\mathbf{\mathbf{C}}^{T}\right)  \right)  \right\rangle _{\mathbf{\hat{x}}}
&  \leq\ln\left(  \det\left(  \mathbf{C}\left\langle \boldsymbol{\Phi
}\right\rangle _{\mathbf{\hat{x}}}\mathbf{\mathbf{C}}^{T}\right)  \right)
\label{Cs1}\\
&  \leq\frac{1}{2}\ln\left(  \det\left(  {\mathbf{C}}\left\langle
\boldsymbol{\Phi}\right\rangle _{\mathbf{\hat{x}}}^{2}\mathbf{\mathbf{C}}%
^{T}\right)  \right) \label{Cs1a}\\
&  \leq\frac{1}{2}\ln\left(  \det\left(  {\mathbf{C}}\left\langle
{\mathbf{\boldsymbol{\hat{\Phi}}}}\right\rangle _{\mathbf{\hat{x}}%
}\mathbf{\mathbf{C}}^{T}\right)  \right)  \text{,}\label{Cs1b}\\
\ln\left(  \det\left(  {\mathbf{C}}\boldsymbol{\Phi}\mathbf{\mathbf{C}}%
^{T}\right)  \right)   &  \leq\frac{1}{2}\ln\left(  \det\left(
{\mathbf{C\boldsymbol{\hat{\Phi}}}}\mathbf{\mathbf{C}}^{T}\right)  \right)
\text{,} \label{Cs2}%
\end{align}
where $\mathbf{C}\in%
\mathbb{R}
^{K_{0}\times K_{1}}$, $K_{0}\leq K_{1}$, and ${\mathbf{CC}}^{T}%
=\mathbf{I}_{K_{0}}$.
\end{proposition}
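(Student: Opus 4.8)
The plan is to establish the chain of inequalities \eqref{Cs0}--\eqref{Cs2} by repeatedly invoking two elementary facts: (i) Jensen's inequality applied to the concave function $\mathbf{M}\mapsto\ln\det(\mathbf{M})$ on the cone of positive definite matrices, and (ii) the AM--type comparison $\langle g^2\rangle\geq\langle g\rangle^2$ applied entrywise to the diagonal entries of $\boldsymbol{\Phi}$. Throughout I will use that $\boldsymbol{\hat{\Phi}}=\boldsymbol{\Phi}^2$ is diagonal with nonnegative entries $\phi(\hat{y}_k)^2$, that $\mathbf{C}$ has full row rank $K_0$ with $\mathbf{CC}^T=\mathbf{I}_{K_0}$, and hence that $\mathbf{C}\mathbf{D}\mathbf{C}^T$ is positive definite for any positive definite diagonal $\mathbf{D}$; this keeps every determinant strictly positive so all logarithms are well defined.

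For \eqref{Cs0}: the map $\mathbf{D}\mapsto\ln\det(\mathbf{C}\mathbf{D}\mathbf{C}^T)$ is concave in the diagonal matrix $\mathbf{D}$ (composition of the linear map $\mathbf{D}\mapsto\mathbf{C}\mathbf{D}\mathbf{C}^T$ with $\ln\det$), so Jensen over $\mathbf{\hat{x}}$ with $\mathbf{D}=\boldsymbol{\hat{\Phi}}$ gives $\langle\ln\det(\mathbf{C}\boldsymbol{\hat{\Phi}}\mathbf{C}^T)\rangle_{\mathbf{\hat x}}\leq\ln\det(\mathbf{C}\langle\boldsymbol{\hat{\Phi}}\rangle_{\mathbf{\hat x}}\mathbf{C}^T)$, and dividing by $2$ yields \eqref{Cs0}. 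The same concavity argument applied with $\mathbf{D}=\boldsymbol{\Phi}$ gives \eqref{Cs1}. For the step \eqref{Cs1}$\to$\eqref{Cs1a}, note $\langle\boldsymbol{\Phi}\rangle_{\mathbf{\hat x}}$ is diagonal with entries $\langle\phi(\hat y_k)\rangle$, so $\det(\mathbf{C}\langle\boldsymbol{\Phi}\rangle^2\mathbf{C}^T)\ge \det(\mathbf{C}\langle\boldsymbol{\Phi}\rangle\mathbf{C}^T)^2$? — this is not automatic, so instead I would argue: by concavity of $\ln\det$ again, $\tfrac12\ln\det(\mathbf{C}\langle\boldsymbol{\Phi}\rangle^2\mathbf{C}^T)\geq\ln\det(\mathbf{C}\langle\boldsymbol{\Phi}\rangle\mathbf{C}^T)$ would follow from a Jensen-type bound relating the arithmetic "average" of $\langle\boldsymbol{\Phi}\rangle^2$ and $\langle\boldsymbol{\Phi}\rangle$ applied to the two "points" $\langle\boldsymbol{\Phi}\rangle^2$ and $\mathbf{I}$ — more precisely, $\mathbf{C}\langle\boldsymbol{\Phi}\rangle^2\mathbf{C}^T\succeq(\mathbf{C}\langle\boldsymbol{\Phi}\rangle\mathbf{C}^T)^2$ whenever $\mathbf{CC}^T=\mathbf{I}_{K_0}$, which is exactly a consequence of the operator Jensen / Kadison inequality for the convex function $t\mapsto t^2$ with the unital positive map $\mathbf{M}\mapsto\mathbf{C}\mathbf{M}\mathbf{C}^T$. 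Then monotonicity of $\ln\det$ on positive definite matrices gives \eqref{Cs1a}. The step \eqref{Cs1a}$\to$\eqref{Cs1b} is monotonicity of $\ln\det$ together with $\langle\phi(\hat y_k)\rangle^2\leq\langle\phi(\hat y_k)^2\rangle=\langle\boldsymbol{\hat\Phi}\rangle_{kk}$ entrywise, i.e. $\mathbf{C}\langle\boldsymbol{\Phi}\rangle^2\mathbf{C}^T\preceq\mathbf{C}\langle\boldsymbol{\hat\Phi}\rangle\mathbf{C}^T$. Finally \eqref{Cs2} is the pointwise (in $\mathbf{\hat x}$) version of the operator Jensen inequality just used: $\mathbf{C}\boldsymbol{\Phi}^2\mathbf{C}^T\succeq(\mathbf{C}\boldsymbol{\Phi}\mathbf{C}^T)^2$, hence $\tfrac12\ln\det(\mathbf{C}\boldsymbol{\hat\Phi}\mathbf{C}^T)\geq\ln\det(\mathbf{C}\boldsymbol{\Phi}\mathbf{C}^T)$.

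The main obstacle is the operator-convexity step: establishing $\mathbf{C}\mathbf{D}^2\mathbf{C}^T\succeq(\mathbf{C}\mathbf{D}\mathbf{C}^T)^2$ for diagonal $\mathbf{D}\succeq 0$ and $\mathbf{C}\mathbf{C}^T=\mathbf{I}_{K_0}$. I would prove this directly rather than cite operator-theory machinery: writing $\mathbf{P}=\mathbf{C}^T\mathbf{C}$, which is an orthogonal projection (since $\mathbf{P}^2=\mathbf{C}^T\mathbf{CC}^T\mathbf{C}=\mathbf{P}$), one has $\mathbf{C}\mathbf{D}^2\mathbf{C}^T-(\mathbf{C}\mathbf{D}\mathbf{C}^T)^2=\mathbf{C}\mathbf{D}(\mathbf{I}-\mathbf{P})\mathbf{D}\mathbf{C}^T=\big((\mathbf{I}-\mathbf{P})\mathbf{D}\mathbf{C}^T\big)^T\big((\mathbf{I}-\mathbf{P})\mathbf{D}\mathbf{C}^T\big)\succeq 0$, using $\mathbf{C}\mathbf{D}\mathbf{C}^T\mathbf{C}\mathbf{D}\mathbf{C}^T=\mathbf{C}\mathbf{D}\mathbf{P}\mathbf{D}\mathbf{C}^T$. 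That identity is the crux; once it is in hand, every remaining inequality is just $\ln\det$ monotonicity/concavity plus the scalar bound $\langle g\rangle^2\leq\langle g^2\rangle$, and the chain \eqref{Cs0}--\eqref{Cs2} assembles immediately. I would also remark that equality throughout characterizes when $\phi(\hat y_k)$ is $\mathbf{\hat x}$-independent and when $\mathbf{D}$ commutes with $\mathbf{P}$, which is the degenerate/whitened case, though this is not needed for the statement.
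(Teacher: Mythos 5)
Your proof is correct, and for the crux of the proposition it takes a genuinely different route from the paper. For \eqref{Cs0} and \eqref{Cs1} you and the paper rely on the same underlying fact (concavity of $\ln\det$ composed with the linear map $\mathbf{D}\mapsto\mathbf{C}\mathbf{D}\mathbf{C}^{T}$, hence Jensen over $\mathbf{\hat{x}}$); the paper outsources this to a concavity result cited from \cite{Huang(2016-information)}, whereas you make it self-contained. The real divergence is in \eqref{Cs1a} and \eqref{Cs2}: the paper proves the determinant inequality $\det(\mathbf{C}\boldsymbol{\Phi}\mathbf{C}^{T})^{2}\leq\det(\mathbf{C}\boldsymbol{\hat{\Phi}}\mathbf{C}^{T})$ by an SVD of $\mathbf{C}\boldsymbol{\Phi}$ followed by Hadamard's inequality and Cauchy--Schwarz applied to the entries of $\mathbf{C}\mathbf{\ddot{V}}_{1}$, while you prove the strictly stronger operator inequality $\mathbf{C}\mathbf{D}^{2}\mathbf{C}^{T}\succeq(\mathbf{C}\mathbf{D}\mathbf{C}^{T})^{2}$ via the identity $\mathbf{C}\mathbf{D}^{2}\mathbf{C}^{T}-(\mathbf{C}\mathbf{D}\mathbf{C}^{T})^{2}=\bigl((\mathbf{I}-\mathbf{P})\mathbf{D}\mathbf{C}^{T}\bigr)^{T}\bigl((\mathbf{I}-\mathbf{P})\mathbf{D}\mathbf{C}^{T}\bigr)$ with $\mathbf{P}=\mathbf{C}^{T}\mathbf{C}$ an orthogonal projection, then apply monotonicity of $\det$ on the Loewner order; instantiating $\mathbf{D}=\boldsymbol{\Phi}$ gives \eqref{Cs2} and $\mathbf{D}=\langle\boldsymbol{\Phi}\rangle_{\mathbf{\hat{x}}}$ gives \eqref{Cs1a}, and your treatment of \eqref{Cs1b} (entrywise $\langle\phi\rangle^{2}\leq\langle\phi^{2}\rangle$ plus $\ln\det$ monotonicity) matches the paper's. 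Your projection identity is cleaner and arguably more robust: it is a one-line algebraic verification that sidesteps the SVD bookkeeping (and the somewhat delicate application of Hadamard's inequality to a non-Hermitian matrix in the paper's version), it yields a matrix inequality rather than only a scalar determinant inequality, and it makes the equality case ($\mathbf{D}$ commuting with $\mathbf{P}$, i.e.\ $K_{0}=K_{1}$ or range compatibility) transparent. The only hygiene point, shared with the paper, is that you should note $\phi(\hat{y}_{k})>0$ so that $\mathbf{C}\boldsymbol{\Phi}\mathbf{C}^{T}$ is positive definite and all logarithms are finite.
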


\begin{proof}
Functions $\ln\left(  \det\left(  {\mathbf{C}}\left\langle
{\mathbf{\boldsymbol{\hat{\Phi}}}}\right\rangle _{\mathbf{\hat{x}}%
}\mathbf{\mathbf{C}}^{T}\right)  \right)  $ and $\ln\left(  \det\left(
\mathbf{C}\left\langle \boldsymbol{\Phi}\right\rangle _{\mathbf{\hat{x}}%
}\mathbf{\mathbf{C}}^{T}\right)  \right)  $ are concave functions about
$p\left(  \mathbf{\hat{x}}\right)  $
\citep[see the proof of Proposition 5.2. in][]{Huang(2016-information)}, which
fact establishes inequalities (\ref{Cs0}) and (\ref{Cs1}).

Next we will prove the inequality (\ref{Cs2}). By SVD, we have
\begin{equation}
{\mathbf{C}}\boldsymbol{\Phi}=\mathbf{\ddot{U}\ddot{D}\ddot{V}}^{T}\text{,}
\label{CP}%
\end{equation}
where $\mathbf{\ddot{U}}$ is a $K_{0}\times K_{0}$ unitary orthogonal matrix,
$\mathbf{\ddot{V}}=[\mathbf{\ddot{v}}_{1},\mathbf{\ddot{v}}_{2},\cdots
,\mathbf{\ddot{v}}_{K_{1}}]\ $is an $K_{1}\times K_{1}$\ unitary orthogonal
matrix, and $\mathbf{\ddot{D}}$ is an $K_{0}\times K_{1}$ rectangular diagonal
matrix with $K_{0}$ positive real numbers on the diagonal. {By the matrix
Hadamard's inequality and Cauchy--Schwarz inequality we have }%
\begin{align}
&  \det\left(  {\mathbf{C}}\boldsymbol{\Phi}\mathbf{\mathbf{C}}^{T}%
{\mathbf{C}}\boldsymbol{\Phi}\mathbf{\mathbf{C}}^{T}\right)  \det\left(
{\mathbf{C\boldsymbol{\hat{\Phi}}C}}^{T}\right)  ^{-1}\nonumber\\
&  =\det\left(  \mathbf{\ddot{D}\ddot{V}}^{T}\mathbf{\mathbf{C}}%
^{T}{\mathbf{C}}\mathbf{\ddot{V}\ddot{D}}^{T}\left(  \mathbf{\ddot{D}\ddot{D}%
}^{T}\right)  ^{-1}\right) \nonumber\\
&  =\det\left(  \mathbf{\ddot{V}}_{1}^{T}\mathbf{\mathbf{C}}^{T}{\mathbf{C}%
}\mathbf{\ddot{V}}_{1}\right) \nonumber\\
&  =\det\left(  {\mathbf{C}}\mathbf{\ddot{V}}_{1}\right)  ^{2}\nonumber\\
&  \leq\prod_{k=1}^{K_{0}}\left(  {\mathbf{C}}\mathbf{\ddot{V}}_{1}\right)
_{k,k}^{2}\nonumber\\
&  \leq\prod_{k=1}^{K_{0}}\left(  {\mathbf{CC}}^{T}\right)  _{k,k}^{2}\left(
\mathbf{\ddot{V}}_{1}^{T}\mathbf{\ddot{V}}_{1}\right)  _{k,k}^{2}\nonumber\\
&  =1\text{,} \label{=1}%
\end{align}
where $\mathbf{\ddot{V}}_{1}=[\mathbf{\ddot{v}}_{1},\mathbf{\ddot{v}}%
_{2},\cdots,\mathbf{\ddot{v}}_{K_{0}}]\in%
\mathbb{R}
^{K_{1}\times K_{0}}$. The last equality holds because of ${\mathbf{CC}}%
^{T}=\mathbf{I}_{K_{0}}$ and $\mathbf{\ddot{V}}_{1}^{T}\mathbf{\ddot{V}}%
_{1}=\mathbf{I}_{K_{0}}$. This establishes inequality (\ref{Cs2}) and the
equality holds if and only if $K_{0}=K_{1}$ or$\ {\mathbf{C}}\mathbf{\ddot{V}%
}_{1}=\mathbf{I}_{K_{0}}$.

Similarly, we get inequality (\ref{Cs1a}):
\begin{equation}
\ln\left(  \det\left(  \mathbf{C}\left\langle \boldsymbol{\Phi}\right\rangle
_{\mathbf{\hat{x}}}\mathbf{\mathbf{C}}^{T}\right)  \right)  \leq\frac{1}{2}%
\ln\left(  \det\left(  {\mathbf{C}}\left\langle \boldsymbol{\Phi}\right\rangle
_{\mathbf{\hat{x}}}^{2}\mathbf{\mathbf{C}}^{T}\right)  \right)  \text{.}
\label{pcs1a}%
\end{equation}
By Jensen's inequality, we have
\begin{equation}
\left\langle \phi\left(  \hat{y}_{k}\right)  \right\rangle _{\mathbf{\hat{x}}%
}^{2}\leq\left\langle \phi\left(  \hat{y}_{k}\right)  ^{2}\right\rangle
_{\mathbf{\hat{x}}}\text{, }\forall k=1,\cdots,K_{1}\text{.} \label{pcs1b}%
\end{equation}
Then it follows from (\ref{pcs1b})\ that inequality (\ref{Cs1b}) holds:%
\begin{equation}
\frac{1}{2}\ln\left(  \det\left(  {\mathbf{C}}\left\langle \boldsymbol{\Phi
}\right\rangle _{\mathbf{\hat{x}}}^{2}\mathbf{\mathbf{C}}^{T}\right)  \right)
\leq\frac{1}{2}\ln\left(  \det\left(  {\mathbf{C}}\left\langle
{\mathbf{\boldsymbol{\hat{\Phi}}}}\right\rangle _{\mathbf{\hat{x}}%
}\mathbf{\mathbf{C}}^{T}\right)  \right)  \text{.} \label{pcs1c}%
\end{equation}
This completes the proof of \textbf{Proposition \ref{Proposition 2}}.
\end{proof}

By \textbf{Proposition \ref{Proposition 2}, }if $K_{0}=K_{1}$ then we get%
\begin{align}
\frac{1}{2}\left\langle \ln\left(  \det\left(  {\mathbf{\boldsymbol{\hat{\Phi
}}}}\right)  \right)  \right\rangle _{\mathbf{\hat{x}}}  &  \leq\frac{1}{2}%
\ln\left(  \det\left(  \left\langle {\mathbf{\boldsymbol{\hat{\Phi}}}%
}\right\rangle _{\mathbf{\hat{x}}}\right)  \right)  \text{,}\label{case2.1}\\
\left\langle \ln\left(  \det\left(  \boldsymbol{\Phi}\right)  \right)
\right\rangle _{\mathbf{\hat{x}}}  &  \leq\ln\left(  \det\left(  \left\langle
\boldsymbol{\Phi}\right\rangle _{\mathbf{\hat{x}}}\right)  \right)
\label{case2.2}\\
&  =\frac{1}{2}\ln\left(  \det\left(  \left\langle \boldsymbol{\Phi
}\right\rangle _{\mathbf{\hat{x}}}^{2}\right)  \right) \label{case2.3}\\
&  \leq\frac{1}{2}\ln\left(  \det\left(  \left\langle
{\mathbf{\boldsymbol{\hat{\Phi}}}}\right\rangle _{\mathbf{\hat{x}}}\right)
\right)  \text{,}\label{case2.4}\\
\ln\left(  \det\left(  \boldsymbol{\Phi}\right)  \right)   &  =\frac{1}{2}%
\ln\left(  \det\left(  {\mathbf{\boldsymbol{\hat{\Phi}}}}\right)  \right)
\text{.} \label{case2.5}%
\end{align}
On the other hand, it follows from (\ref{Q1}) and \textbf{Proposition
\ref{Proposition 2}} that
\begin{align}
\left\langle \ln\left(  \det\left(  {\mathbf{C}}\boldsymbol{\Phi
}\mathbf{\mathbf{C}}^{T}\right)  \right)  \right\rangle _{\mathbf{\hat{x}}}
&  \leq-Q\leq\frac{1}{2}\ln\left(  \det\left(  {\mathbf{C}}\left\langle
{\mathbf{\boldsymbol{\hat{\Phi}}}}\right\rangle _{\mathbf{\hat{x}}%
}\mathbf{\mathbf{C}}^{T}\right)  \right)  \text{,}\label{case2.6}\\
\left\langle \ln\left(  \det\left(  {\mathbf{C}}\boldsymbol{\Phi
}\mathbf{\mathbf{C}}^{T}\right)  \right)  \right\rangle _{\mathbf{\hat{x}}}
&  \leq-\hat{Q}\leq\frac{1}{2}\ln\left(  \det\left(  {\mathbf{C}}\left\langle
{\mathbf{\boldsymbol{\hat{\Phi}}}}\right\rangle _{\mathbf{\hat{x}}%
}\mathbf{\mathbf{C}}^{T}\right)  \right)  \text{.} \label{case2.7}%
\end{align}
Hence we can see that $\hat{Q}$ is close to $Q\ $(see \ref{Q1}). Moreover, it
follows from the Cauchy--Schwarz inequality that%
\begin{equation}
\left\langle \left(  \boldsymbol{\Phi}\right)  _{k,k}\right\rangle
_{{\mathbf{\hat{x}}}}=\left\langle \phi\left(  \hat{y}_{k}\right)
\right\rangle _{\hat{y}_{k}}\leq\left(  \int\phi\left(  \hat{y}_{k}\right)
^{2}d\hat{y}_{k}\int p\left(  \hat{y}_{k}\right)  ^{2}d\hat{y}_{k}\right)
^{1/2}\text{,} \label{gk'<}%
\end{equation}
where $k=1,\cdots,K_{1}$, the equality holds if and only if the following
holds:%
\begin{equation}
p\left(  \hat{y}_{k}\right)  =\frac{\phi\left(  \hat{y}_{k}\right)  }{\int%
\phi\left(  \hat{y}_{k}\right)  d\hat{y}_{k}}\text{, }k=1,\cdots,K_{1}\text{,}
\label{pyk}%
\end{equation}
which is the similar to Eq. (\ref{py}).

Since $I(X;R)=I(Y;R)$ (see \textbf{Proposition }\ref{Proposition 1}), by
maximizing $I(X;R)$ we hope the equality in inequality (\ref{3.6}) and
equality (\ref{py}) hold, at least approximatively. On the other hand, let
\begin{align}
{\mathbf{C}}^{opt}  &  =\arg\min_{\mathbf{C}}Q{\left[  {\mathbf{C}}\right]
}=\arg\max_{\mathbf{C}}\left(  \left\langle \ln\left(  \det(\mathbf{C}%
\boldsymbol{\hat{\Phi}}{\mathbf{C}}^{T})\right)  \right\rangle _{\mathbf{\hat
{x}}}\right)  \text{,}\label{Copt^}\\
{\mathbf{\hat{C}}}^{opt}  &  =\arg\min_{\mathbf{C}}\hat{Q}{\left[
{\mathbf{C}}\right]  }=\arg\max_{\mathbf{C}}\left(  \ln\left(  \det\left(
{\mathbf{C}}\left\langle \boldsymbol{\Phi}\right\rangle _{\mathbf{\hat{x}}%
}^{2}{\mathbf{C}}^{T}\right)  \right)  \right)  \text{,} \label{Copt}%
\end{align}
${\mathbf{C}}^{opt}$ and ${\mathbf{\hat{C}}}^{opt}$ make (\ref{py}) and
(\ref{pyk}) to hold true, which implies that they are the same optimal
solution: ${\mathbf{C}}^{opt}={\mathbf{\hat{C}}}^{opt}$.

Therefore, we can use the following objective function $\hat{Q}{\left[
{\mathbf{C}}\right]  }$ as a substitute for $Q{\left[  {\mathbf{C}}\right]  }$
and write the optimization problem as:
\begin{align}
&  \mathsf{minimize}\;\;\hat{Q}{\left[  {\mathbf{C}}\right]  =}-\frac{1}{2}%
\ln\left(  \det\left(  {\mathbf{C}}\left\langle \boldsymbol{\Phi}\right\rangle
_{\mathbf{\hat{x}}}^{2}{\mathbf{C}}^{T}\right)  \right)  \text{,}%
\label{obj2}\\
&  \mathsf{subject\;to}\;{\mathbf{CC}}^{T}=\mathbf{I}_{K_{0}}\text{.}
\label{cons2}%
\end{align}
The update rule (\ref{Ct}) may also apply here and a modified algorithm
similar to Algorithm 1 may be used for parameter learning.

\subsection{Supplementary Experiments}

\label{SupExp}

\subsubsection{Quantitative Methods for Comparison}

\label{quant}

To quantify the efficiency of learning representations by the above
algorithms, we calculate the coefficient entropy (CFE) for estimating coding
cost as follows \citep{Lewicki(1999-probabilistic),Lewicki(2000-learning)}:
\begin{align}
\check{y}_{k}  &  =\zeta\mathbf{\check{w}}_{k}^{T}\mathbf{\check{x}}\text{,
}k=1,\cdots,K_{1}\text{,}\label{yk1}\\
\zeta &  =\frac{K_{1}}{\sum_{k=1}^{K_{1}}\left\Vert \mathbf{\check{w}}%
_{k}\right\Vert }\text{,} \label{yk1.a}%
\end{align}
where $\mathbf{\check{x}}$ is defined by Eq. (\ref{xv}), and $\mathbf{\check
{w}}_{k}$ is the corresponding optimal filter. To estimate the probability
density of coefficients $q_{k}(\check{y}_{k})$ ($k=1,\cdots,K_{1}$) from the
$M$ training samples, we apply the kernel density estimation for $q_{k}%
(\check{y}_{k})$ and use a normal kernel with an adaptive optimal window
width. Then we define the CFE $h$ as%
\begin{align}
&  h=\frac{1}{K_{1}}\sum_{k=1}^{K_{1}}H_{k}(\check{Y}_{k})\text{,}\label{h}\\
&  H_{k}(\check{Y}_{k})=-\Delta%
{\textstyle\sum_{n}}
q_{k}(n\Delta)\log_{2}q_{k}(n\Delta)\text{,} \label{HYk}%
\end{align}
where $q_{k}(\check{y}_{k})$ is quantized as discrete $q_{k}(n\Delta)$ and
$\Delta$\ is the step size.

Methods such as IICA and SRBM as well as our methods have feedforward
structures in which information is transferred directly through a nonlinear
function, e.g., the sigmoid function. We can use the amount of transmitted
information to measure the results learned by these methods. Consider a neural
population with $N$ neurons, which is a stochastic system with nonlinear
transfer functions. We chose a sigmoidal transfer function and Gaussian noise
with standard deviation set to $1$ as the system noise. In this case, from
(\ref{Ia}), (\ref{PoissNeuron.2}) and (\ref{GaussNeuron.2}), we see that the
approximate MI $I_{G}$ is equivalent to the case of the Poisson neuron model.
It follows from (\ref{Ia3})--(\ref{eps1}) that
\begin{align}
&  I(X;R)=I\left(  \tilde{X};R\right)  =H(\tilde{X})-H\left(  \tilde
{X}|R\right)  \simeq\tilde{I}_{G}=H(\tilde{X})-h_{1}\text{,}\label{IXR}\\
&  H\left(  \tilde{X}|R\right)  \simeq h_{1}=-\frac{1}{2}\left\langle
\ln\left(  \det\left(  \frac{1}{2\pi e}\left(  NK_{0}^{-1}%
\mathbf{C\boldsymbol{\hat{\Phi}}C}^{T}+\mathbf{I}_{K_{0}}\right)  \right)
\right)  \right\rangle _{\mathbf{\hat{x}}}\text{,} \label{h1}%
\end{align}
where we set $N=10^{6}$. A good representation should make the MI $I(X;R)$ as
big as possible. Equivalently, for the same inputs, a good representation
should make the conditional entropy (CDE) $H\left(  \tilde{X}|R\right)  $ (or
$h_{1}$) as small as possible.

\subfiglabelskip=0pt \begin{figure}[tbh]
\vskip -0.2in \centering
\subfigure[]{\label{Fig3a}
\includegraphics[width= .315\linewidth]{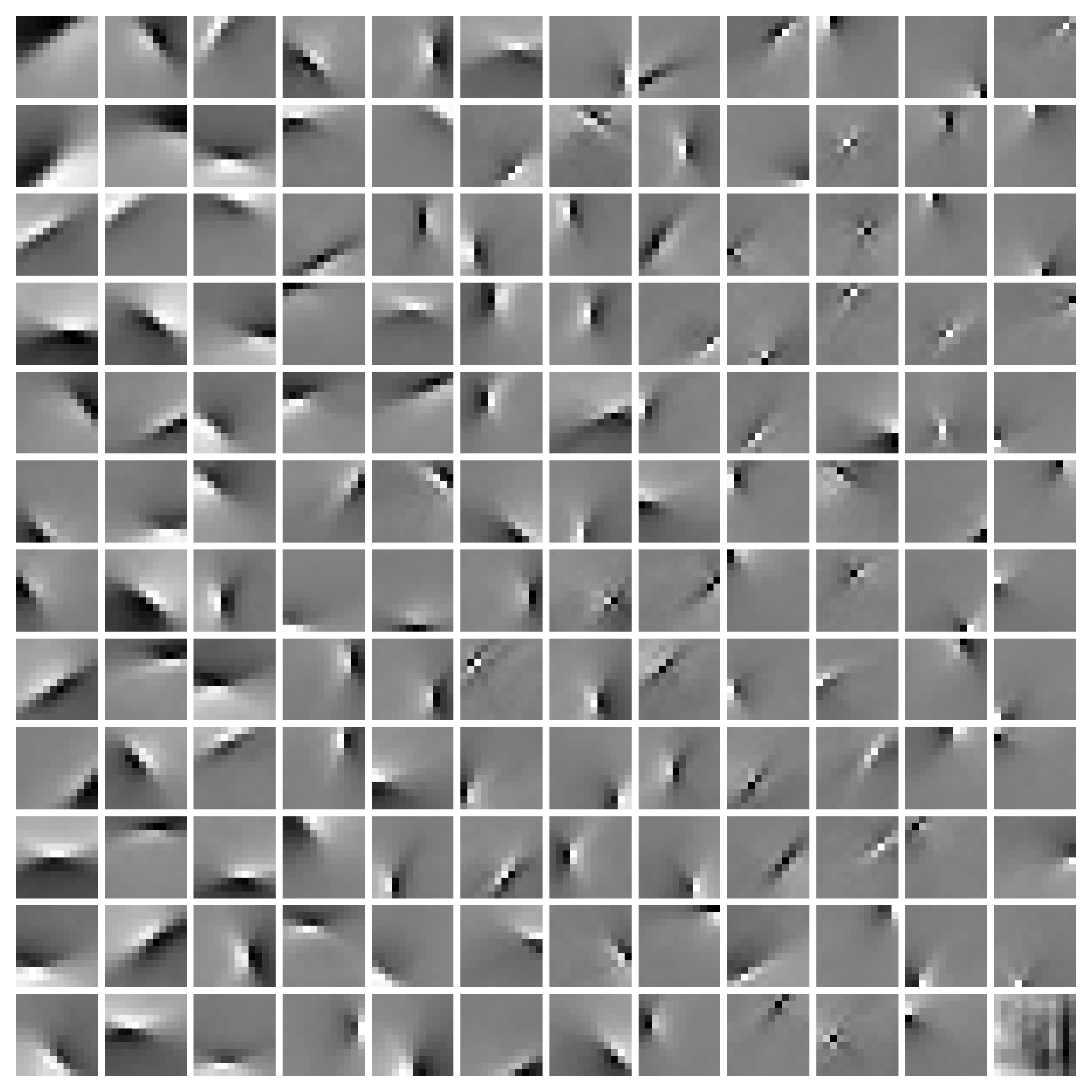}} \hspace{0pt}
\subfigure[]{\label{Fig3b}
\includegraphics[width= .315\linewidth]{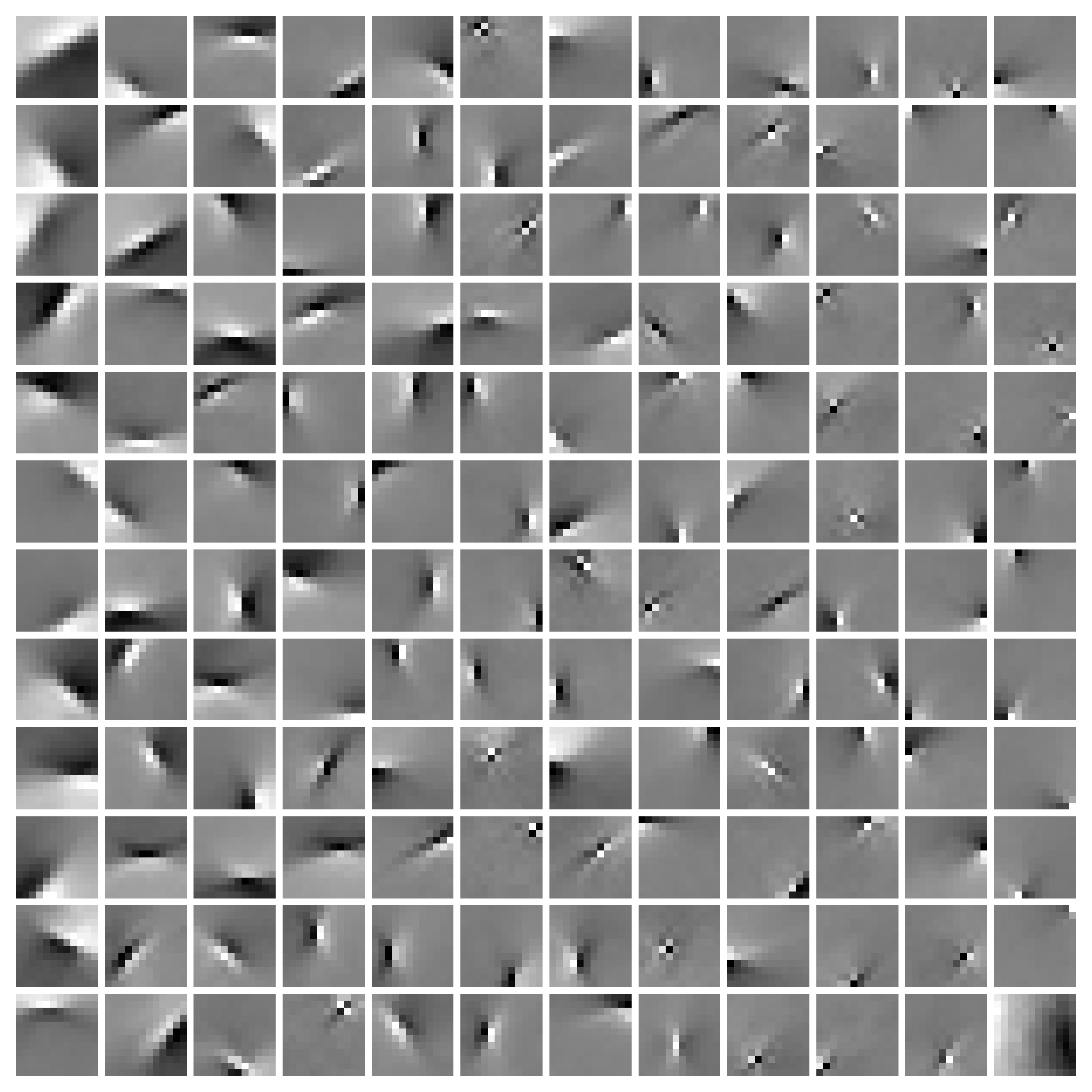}} \hspace{0pt}
\subfigure[]{\label{Fig3c}
\includegraphics[width= .315\linewidth]{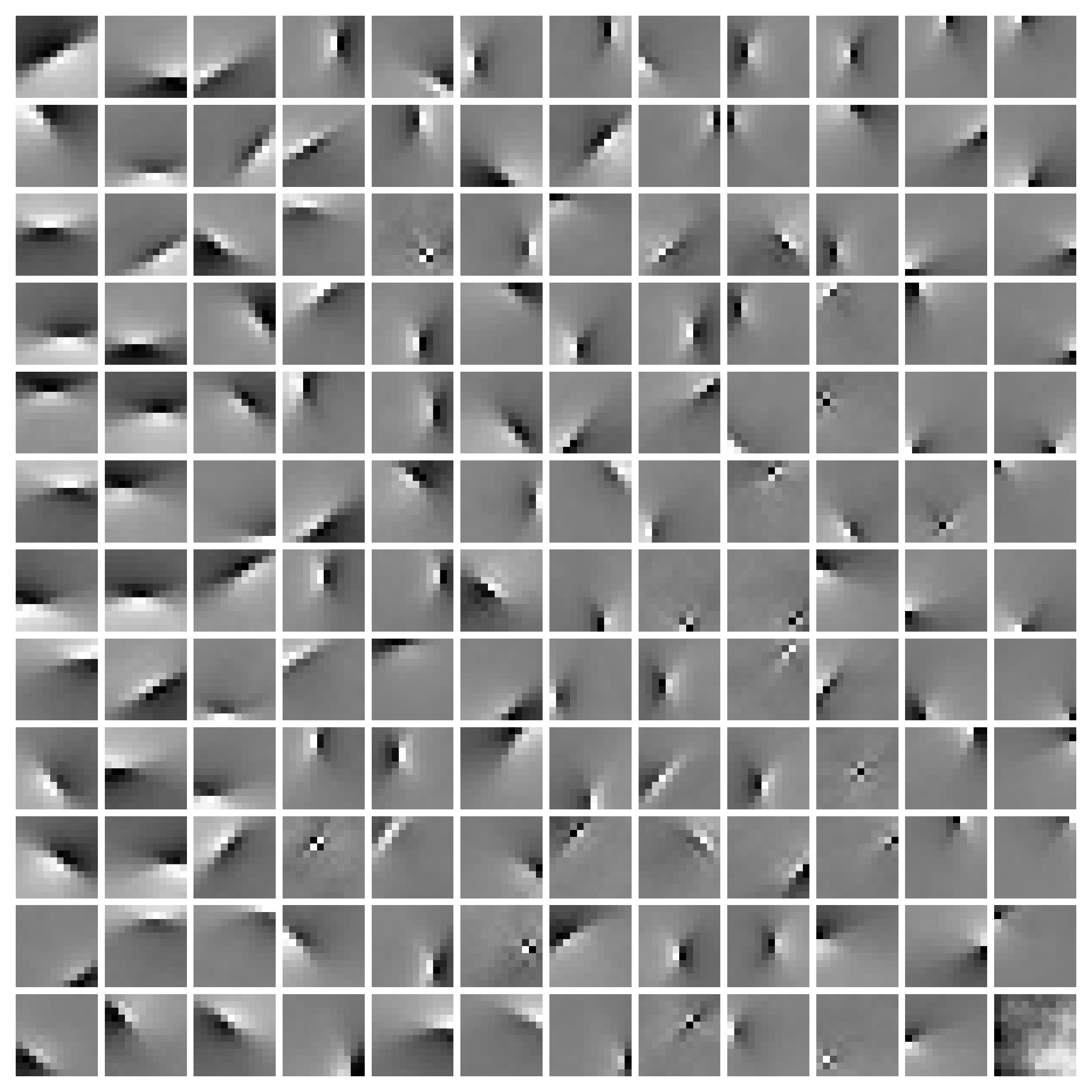}} \hspace{0pt}
\subfigure[]{\label{Fig3d}
\includegraphics[width= .315\linewidth]{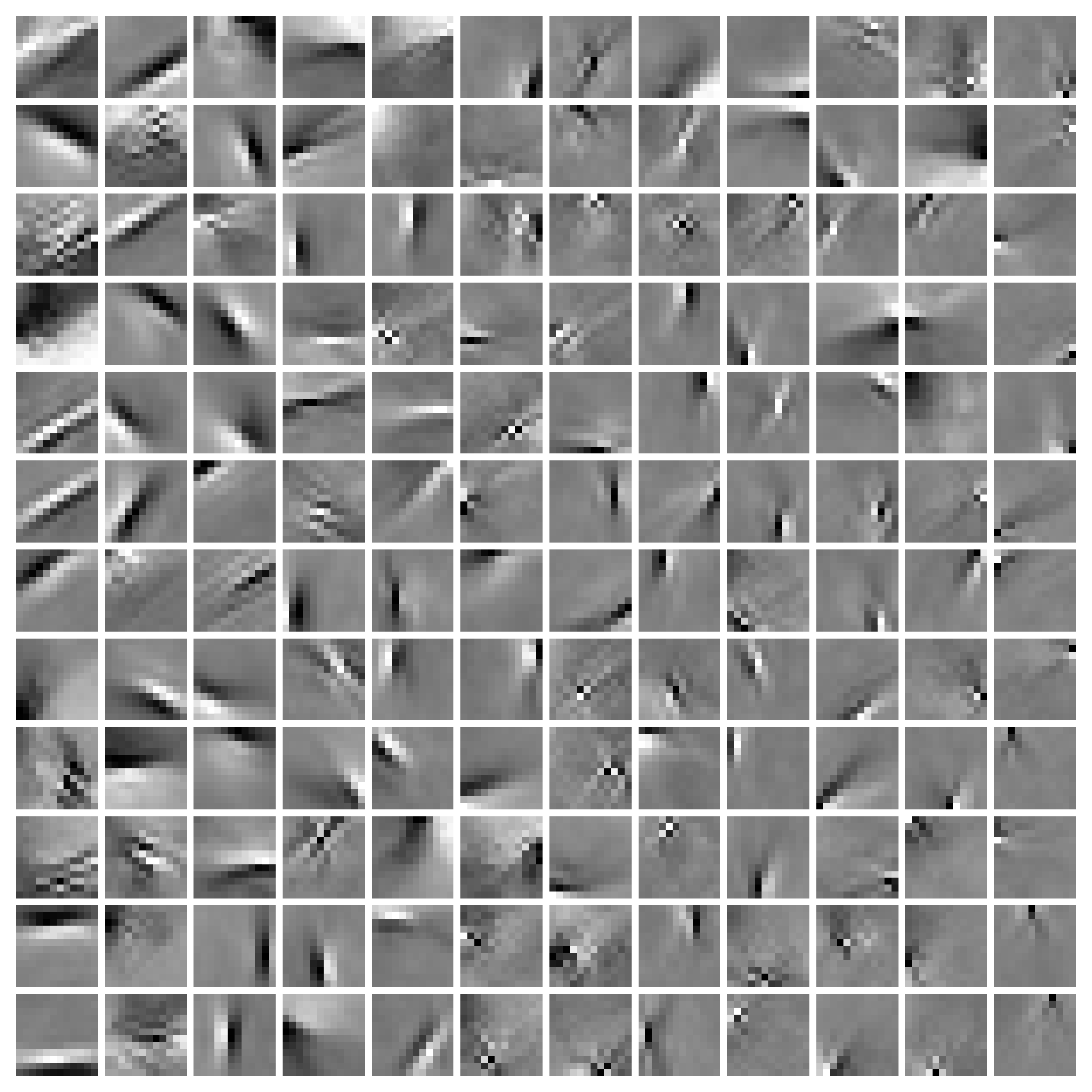}} \hspace{0pt}
\subfigure[]{\label{Fig3e}
\includegraphics[width= .315\linewidth]{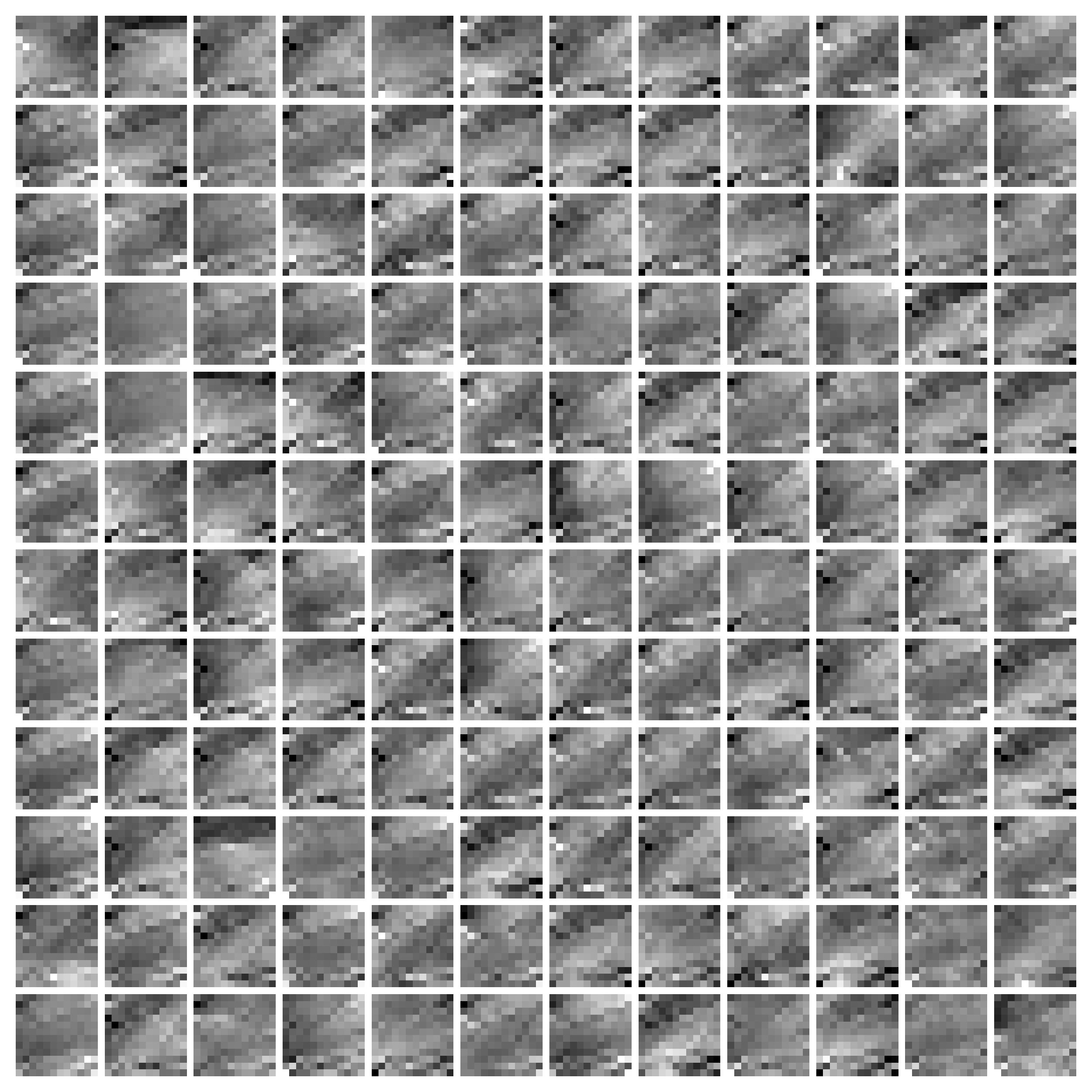}} \hspace{0pt}
\subfigure[]{\label{Fig3f}
\includegraphics[width= .315\linewidth]{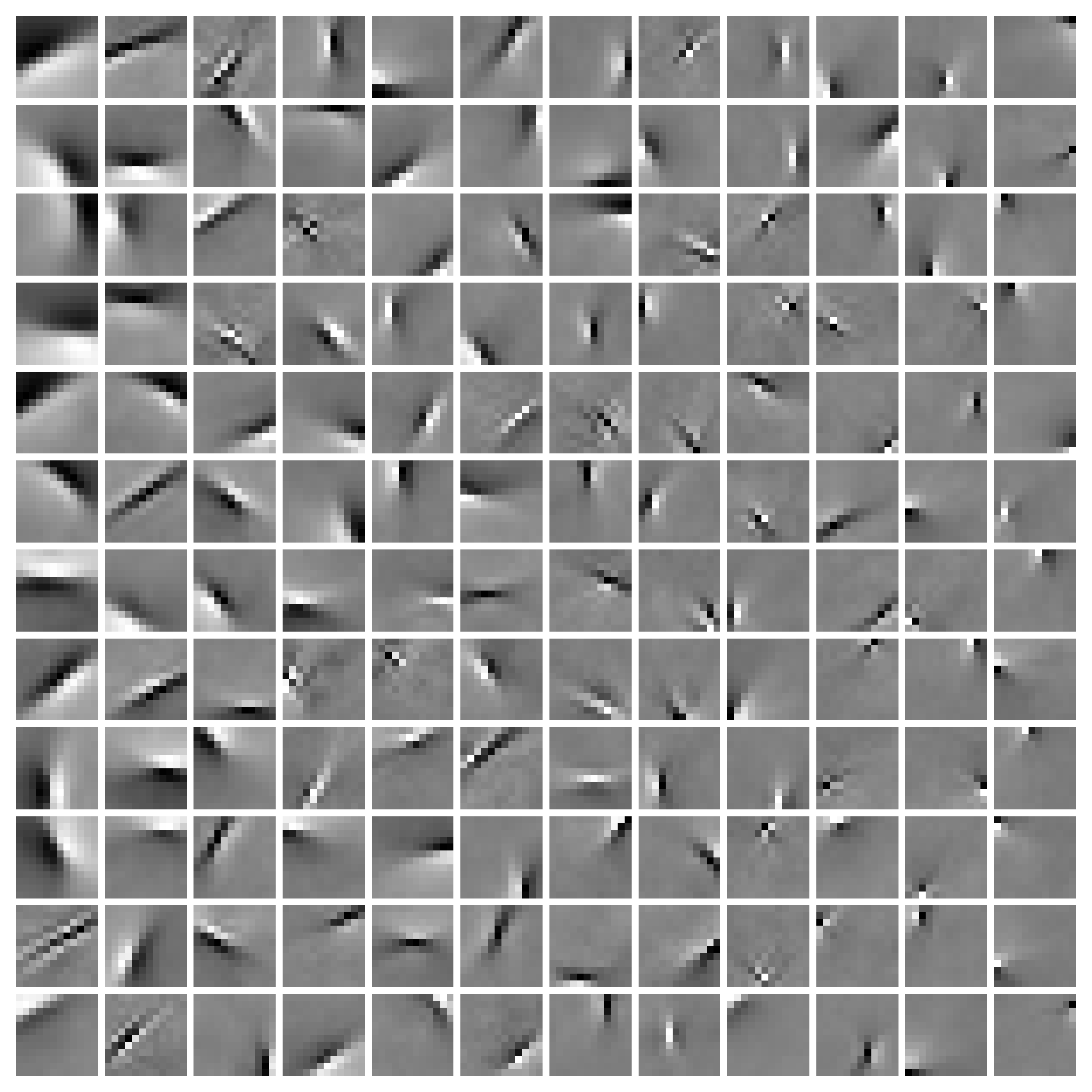}} \hspace{0pt}
\vskip -0.15 in\caption{Comparison of basis vectors obtained by our method and
other methods. Panel (\textbf{a})--(\textbf{e}) correspond to panel
(\textbf{a})--(\textbf{e}) in Figure~\ref{Fig1}, where the basis vectors are
given by (\ref{B_1}). The basis vectors in panel (\textbf{f}) are learned by
MBDL and given by (\ref{B_2}).}%
\label{Fig3}%
\end{figure}

\subsubsection{Comparison of Basis Vectors}

We compared our algorithm with an up-to-date sparse coding algorithm, the
mini-batch dictionary learning (MBDL) as given in
\citep{Mairal(2009-IP-online),Mairal(2010-online)} and integrated in Python
library, i.e. scikit-learn. The input data was the same as the above, i.e.
$10^{5}$ nature image patches preprocessed by the ZCA whitening filters.

We denotes the optimal dictionary learned by MBDL as $\mathbf{\check{B}}\in%
\mathbb{R}
^{K\times K_{1}}$ for which each column represents a basis vector. Now we
have
\begin{align}
\mathbf{x}  &  \approx\mathbf{{\mathbf{U}\boldsymbol{\Sigma}}}^{1/2}%
\mathbf{{\mathbf{U}}}^{T}\mathbf{\check{B}y}=\mathbf{\tilde{B}y}%
\text{,}\label{x.2}\\
\mathbf{\tilde{B}}  &  =\mathbf{{\mathbf{U}\boldsymbol{\Sigma}}}%
^{1/2}\mathbf{{\mathbf{U}}}^{T}\mathbf{\check{B}}\text{,} \label{B_2}%
\end{align}
where $\mathbf{y}=\left(  y_{1},\cdots,y_{K_{1}}\right)  ^{T}$ is the
coefficient vector.

Similarly, we can obtain a dictionary from the filter matrix $\mathbf{C}$.
Suppose $\mathrm{rank}\left(  {\mathbf{C}}\right)  ={K_{0}}\leq K_{1}$, then
it follows from (\ref{ya}) that
\begin{equation}
\mathbf{\hat{x}}=\left(  a{\mathbf{CC}}^{T}\right)  ^{-1}{\mathbf{C}%
}\mathbf{y}\text{.} \label{x^1}%
\end{equation}
By (\ref{x^}) and (\ref{x^1}), we get
\begin{align}
\mathbf{x}  &  \approx\mathbf{By}=a\mathbf{B}{\mathbf{C}}^{T}%
\mathbf{{\boldsymbol{\Sigma}}}_{0}^{-1/2}\mathbf{{\mathbf{U}}}_{0}%
^{T}\mathbf{x}\text{,}\label{x_1}\\
\mathbf{B}  &  =a^{-1}\mathbf{{\mathbf{U}}}_{0}\mathbf{{\boldsymbol{\Sigma}}%
}_{0}^{1/2}\left(  {\mathbf{CC}}^{T}\right)  ^{-1}{\mathbf{C}}=\left[
\mathbf{b}_{1},\cdots,\mathbf{b}_{K_{1}}\right]  \text{,} \label{B_1}%
\end{align}
where $\mathbf{y}=\mathbf{W}^{T}\mathbf{x}=a{\mathbf{C}}^{T}%
\mathbf{{\boldsymbol{\Sigma}}}_{0}^{-1/2}\mathbf{{\mathbf{U}}}_{0}%
^{T}\mathbf{x}$, the vectors $\mathbf{b}_{1},\cdots,\mathbf{b}_{K_{1}}$\ can
be regarded as the basis vectors and the strict equality holds when
$K_{0}=K_{1}=K$. Recall that $\mathbf{X}=[\mathbf{x}_{1}$\textrm{,\thinspace
}$\cdots$\textrm{,\thinspace}$\mathbf{x}_{M}]=\mathbf{US\tilde{V}}^{T}$ (see
Eq. \ref{Xsvd}) and $\mathbf{Y}=[\mathbf{y}_{1}$\textrm{,\thinspace\thinspace
}$\cdots$\textrm{,\thinspace}$\mathbf{y}_{M}]=\mathbf{W}^{T}\mathbf{X}%
=a\sqrt{M-1}{\mathbf{C}}^{T}\mathbf{\tilde{V}}_{0}^{T}$, then we get
$\mathbf{\breve{X}}=\mathbf{BY}=\sqrt{M-1}\mathbf{{\mathbf{U}}}_{0}%
\mathbf{{\boldsymbol{\Sigma}}}_{0}^{1/2}\mathbf{\tilde{V}}_{0}^{T}%
\approx\mathbf{X}$. Hence, Eq. (\ref{x_1}) holds.

The basis vectors shown in Figure~\ref{Fig3a}--\ref{Fig3e} correspond to
filters in Figure~\ref{Fig1a}--\ref{Fig1e}. And Figure~\ref{Fig3f} illustrates
the optimal dictionary $\mathbf{\tilde{B}}$ learned by MBDL, where we set the
regularization parameter as $\lambda=1.2/\sqrt{K}$, the batch size as $50$ and
the total number of iterations to perform as $20000$, which took about $3$
hours for training. From Figure~\ref{Fig3} we see that these basis vectors
obtained by the above algorithms have local Gabor-like shapes except for those
by SRBM. If $\mathrm{rank}(\mathbf{\check{B})}=K=K_{1}$, then the matrix
$\mathbf{\check{B}}^{-T}$\ can be regarded as a filter matrix like matrix
${\mathbf{\check{C}}}$ (see Eq. \ref{Cv}). However, from the column vector of
matrix $\mathbf{\check{B}}^{-T}$ we cannot find any local Gabor-like filter
that resembles the filters shown in Figure~\ref{Fig1}. Our algorithm has less
computational cost and a much faster convergence rate than the sparse coding
algorithm. Moreover, the sparse coding method involves a dynamic generative
model that requires relaxation and is therefore unsuitable for fast inference,
whereas the feedforward framework of our model is easy for inference because
it only requires evaluating the nonlinear tuning functions.

\subsubsection{Learning Overcomplete Bases}

\subfiglabelskip=0pt \begin{figure}[tbh]
\vskip -0.2in \centering
\subfigure[]{\label{Fig4a}
\includegraphics[width= .48\linewidth]{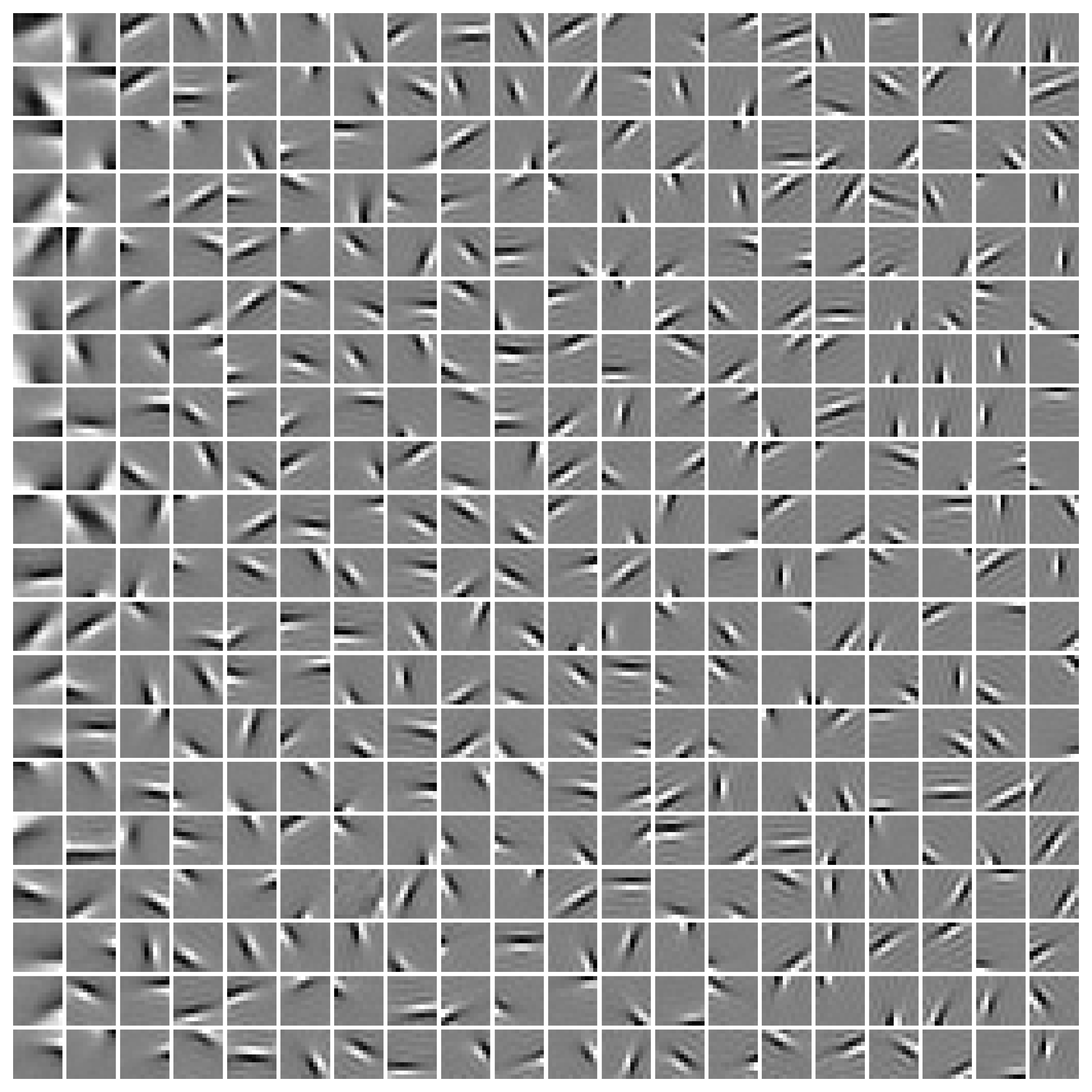}} \hspace{0pt}
\subfigure[]{\label{Fig4b}
\includegraphics[width= .48\linewidth]{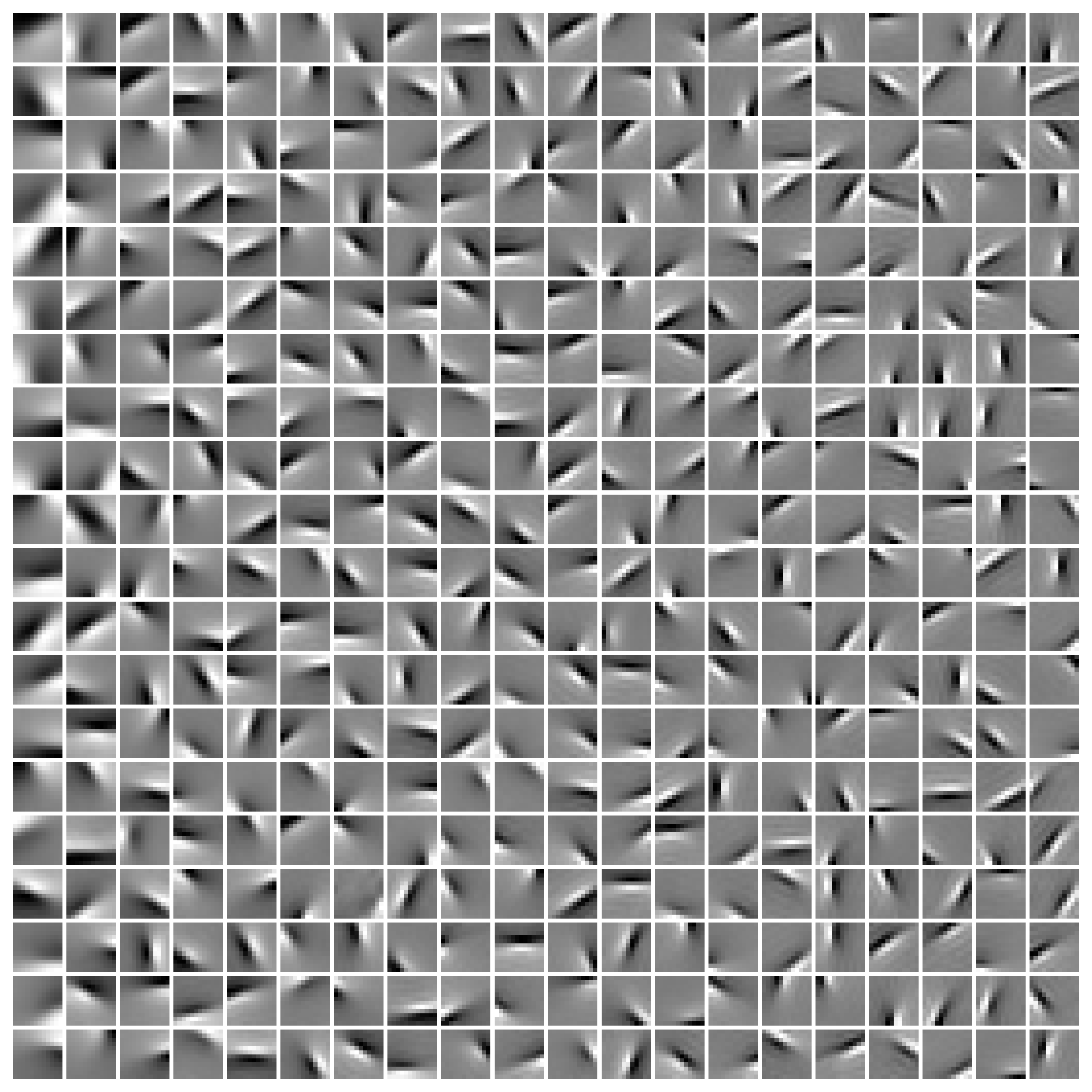}} \hspace{0pt}
\subfigure[]{\label{Fig4c}
\includegraphics[width= .48\linewidth]{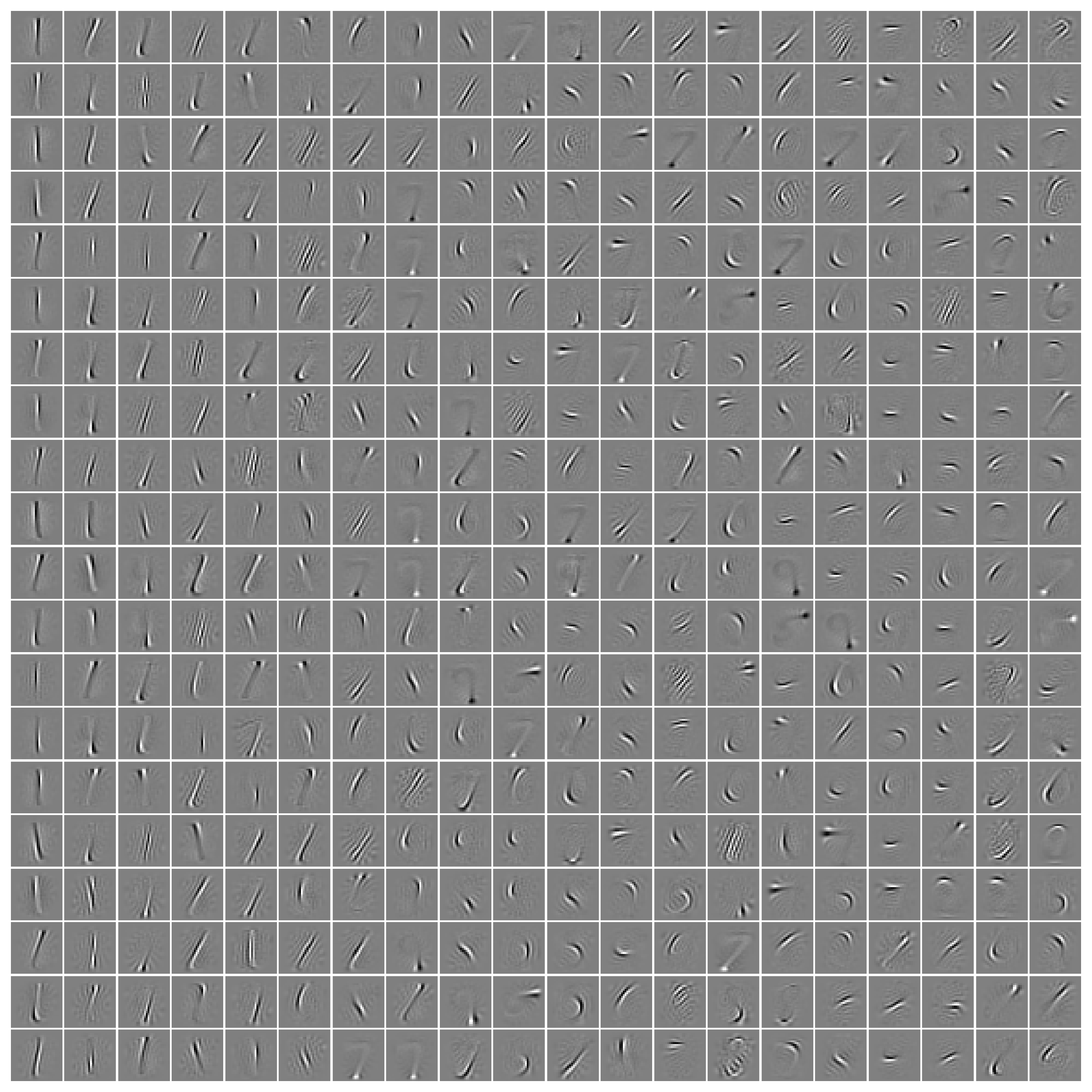}} \hspace{0pt}
\subfigure[]{\label{Fig4d}
\includegraphics[width= .48\linewidth]{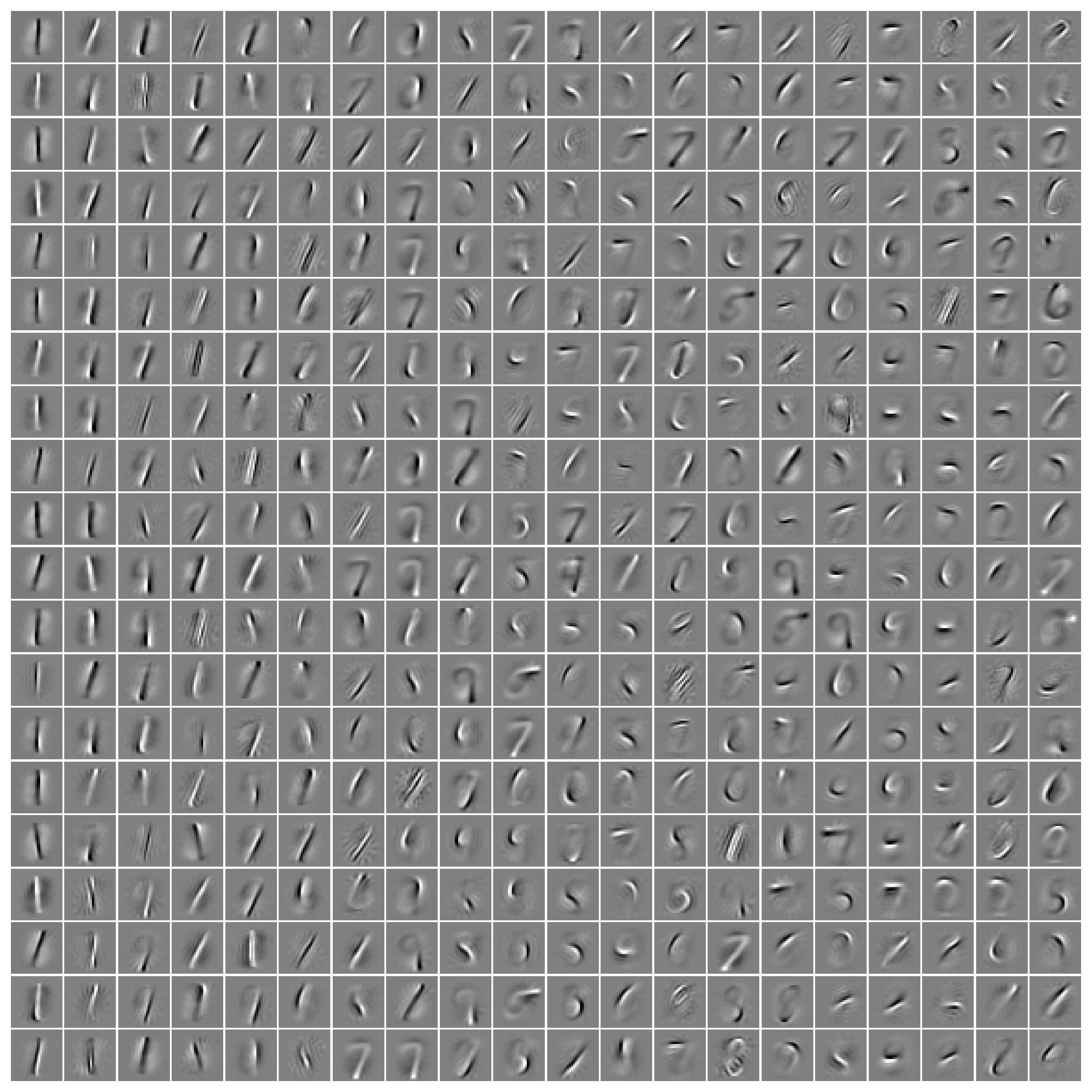}} \hspace{0pt}
\vskip -0.15in\caption{Filters and bases obtained from Olshausen's image
dataset and MNIST dataset by Algorithm 2. (\textbf{a}) and (\textbf{b}):
$400$\ typical filters and the corresponding bases obtained from Olshausen's
image dataset, where $K_{0}=82$ and $K_{1}=1024$. (\textbf{c}) and
(\textbf{d}): $400$\ typical filters and the corresponding bases obtained from
the MNIST dataset, where $K_{0}=183$ and $K_{1}=1024$. }%
\label{Fig4}%
\end{figure}

We have trained our model on the Olshausen's nature image patches with a
highly overcomplete setup by optimizing the objective (\ref{obj2}) by Alg.2
and got Gabor-like filters. The results of $400$\ typical filters chosen from
$1024$ output filters are displayed in Figure~\ref{Fig4a} and corresponding
base (see Eq. \ref{B_1}) are shown in Figure~\ref{Fig4b}. Here the parameters
are $K_{1}=1024$, $t_{\max}=100$, ${v}_{1}=0.4$, $\tau=0.8$, and
$\epsilon=0.98$ (see \ref{K0}), from which we got $\mathrm{rank}\left(
\mathbf{B}\right)  =K_{0}=82$. Compared to the ICA-like results in
Figure~\ref{Fig1a}--\ref{Fig1c}, the average size of Gabor-like filters in
Figure~\ref{Fig4a} is bigger, indicating that the small noise-like local
structures in the images have been filtered out.

We have also trained our model on 60,000 images of handwritten digits from
MNIST dataset \citep{LeCun(1998-gradient)} and the resultant $400$\ typical
optimal filters and bases are shown in Figure~\ref{Fig4c} and
Figure~\ref{Fig4d}, respectively. All parameters were the same as
Figure~\ref{Fig4a} and Figure~\ref{Fig4b}: $K_{1}=1024$, $t_{\max}=100$,
$v_{1}=0.4$, $\tau=0.8$ and $\epsilon=0.98$, from which we got $\mathrm{rank}%
\left(  \mathbf{B}\right)  =K_{0}=183$. From these figures we can see that the
salient features of the input images are reflected in these filters and bases.
We could also get the similar overcomplete filters and bases by SRBM and MBDL.
However, the results depended sensitively on the choice of parameters and the
training took a long time.

\subfiglabelskip=0pt \begin{figure}[t]
\vskip -0.2in \centering
\subfigure[]{\label{Fig5a}
\includegraphics[width= .48\linewidth]{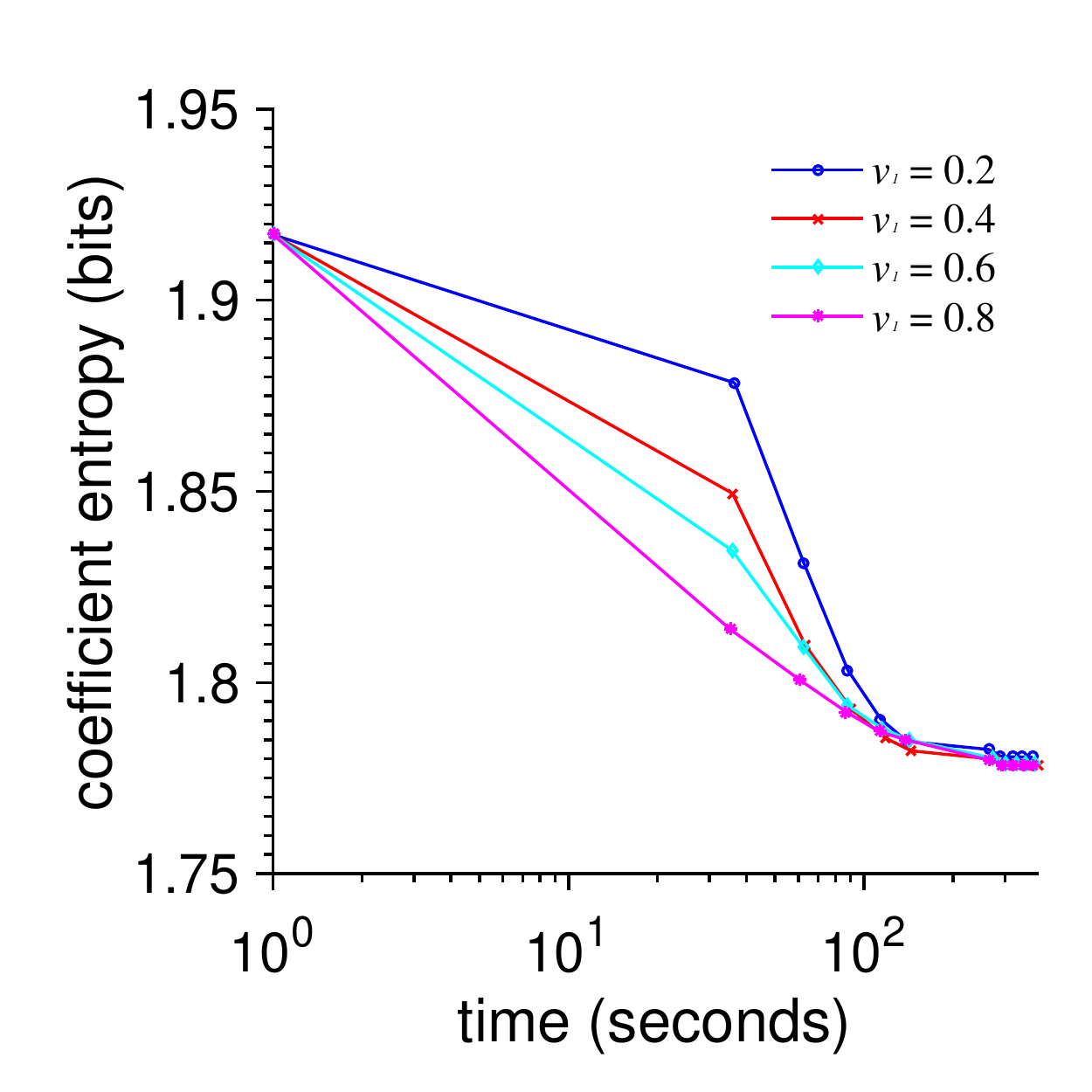}} \hspace{0pt}
\subfigure[]{\label{Fig5b}
\includegraphics[width= .48\linewidth]{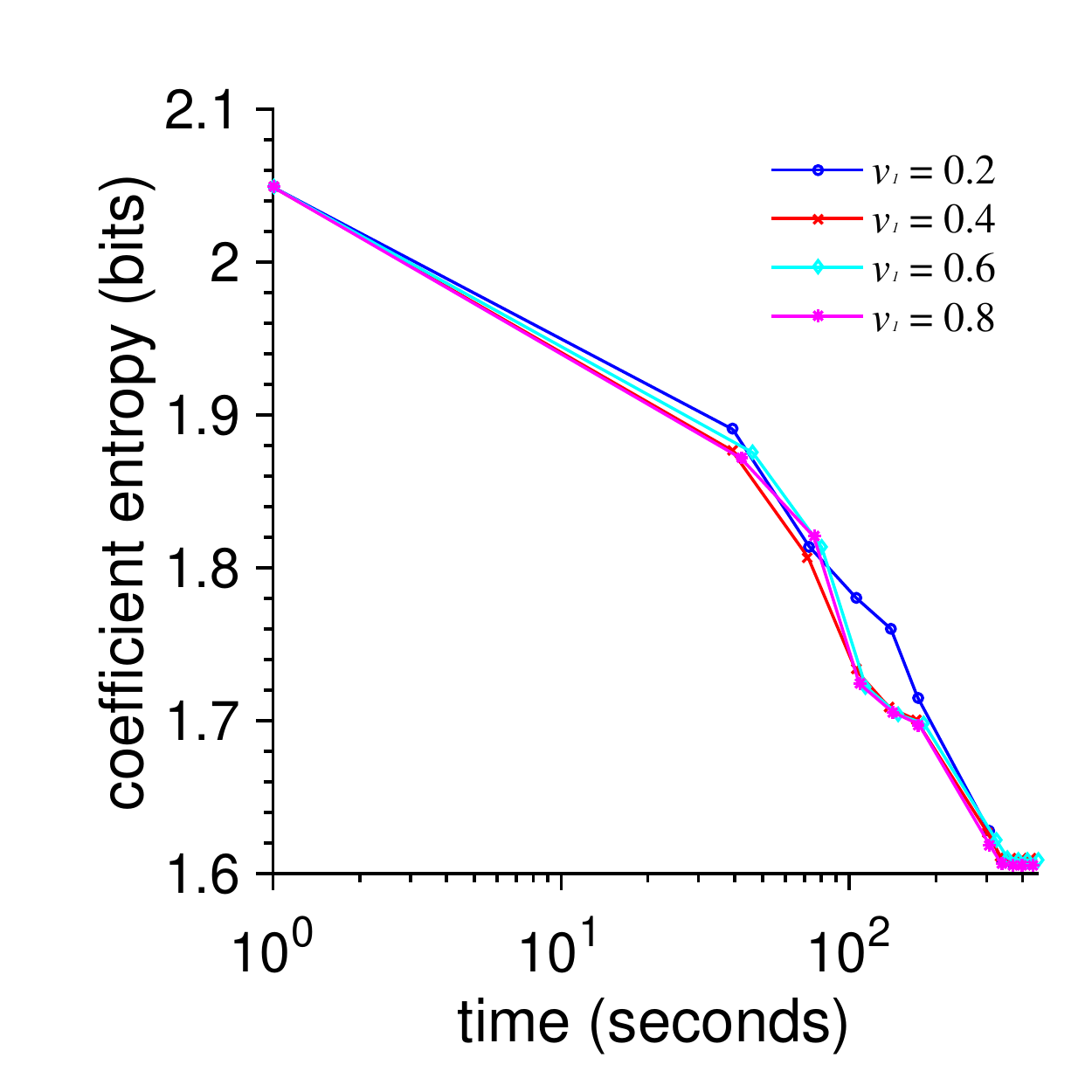}} \hspace{0pt}
\vskip -0.15in\caption{CFE as a function of training time for Alg.2, with
${v}_{1}=0.2$, $0.4$, $0.6$ or $0.8$. In all experiments parameters were set
to $t_{\max}=100$, $t_{0}=50$ and $\tau=0.8$. (\textbf{a}): corresponding to
Figure~\ref{Fig4a} or Figure~\ref{Fig4b}. (\textbf{b}): corresponding to
Figure~\ref{Fig4c} or Figure~\ref{Fig4d}.}%
\label{Fig5}%
\end{figure}

Figure~\ref{Fig5} shows that CFE as a function of training time for Alg.2,
where Figure~\ref{Fig5a} corresponds to Figure~\ref{Fig4a}-\ref{Fig4b} for
learning nature image patches and Figure~\ref{Fig5b} corresponds to
Figure~\ref{Fig4c}-\ref{Fig4d} for learning MNIST dataset. We set parameters
$t_{\max}=100$ and $\tau=0.8$ for all experiments and varied parameter
${v}_{1}$ for each experiment, with ${v}_{1}=0.2$, $0.4$, $0.6$ or $0.8$.
These results indicate a fast convergence rate for training on different
datasets. Generally, the convergence is insensitive to the change of parameter
${v}_{1}$.

We have also performed additional tests on other image datasets and got
similar results, confirming the speed and robustness of our learning method.
Compared with other methods, e.g., IICA, FICA, MBDL, SRBM or sparse
autoencoders etc., our method appeared to be more efficient and robust for
unsupervised learning of representations. We also found that complete and
overovercomplete filters and bases learned by our methods had local Gabor-like
shapes while the results by SRBM or MBDL did not have this property.

\subsubsection{Image Denoising}

Similar to the sparse coding method applied to image denoising
\citep{Elad(2006-image)}, our method (see Eq. \ref{B_1}) can also be applied
to image denoising, as shown by an example in Figure~\ref{Fig6}. The filters
or bases were learned by using $7\times7$ image patches sampled from the left
half of the image, and subsequently used to reconstruct the right half of the
image which was distorted by Gaussian noise. A common practice for evaluating
the results of image denoising is by looking at the difference between the
reconstruction and the original image. If the reconstruction is perfect the
difference should look like Gaussian noise. In Figure~\ref{Fig6c} and
\ref{Fig6d} a dictionary ($100$ bases) was learned by MBDL and orthogonal
matching pursuit was used to estimate the sparse solution. \footnote{Python
source code is available at
http://scikit-learn.org/stable/\_downloads/plot\_image\_denoising.py} For our
method (shown in Figure~\ref{Fig6b}), we first get the optimal filters
parameter $\mathbf{W}$, a low rank matrix ($K_{0}<K$), then from the distorted
image patches $\mathbf{x}_{m}$ ($m=1,\cdots,M$) we get filter outputs
$\mathbf{y}_{m}=\mathbf{W}^{T}\mathbf{x}_{m}$ and the reconstruction
$\mathbf{\breve{x}}_{m}=\mathbf{By}_{m}$ (parameters: $\epsilon=0.975$ and
$K_{0}=K_{1}=14$). As can be seen from Figure~\ref{Fig6}, our method worked
better than dictionary learning, although we only used $14$ bases compared
with $100$ bases used by dictionary learning. Our method is also more
efficient. We can get better optimal bases $\mathbf{B}$ by a generative model
using our infomax approach (details not shown).

\subfiglabelskip=0pt \begin{figure}[t]
\vskip -0.0in \centering
\subfigure[]{\label{Fig6a}
\includegraphics[width= .48\linewidth]{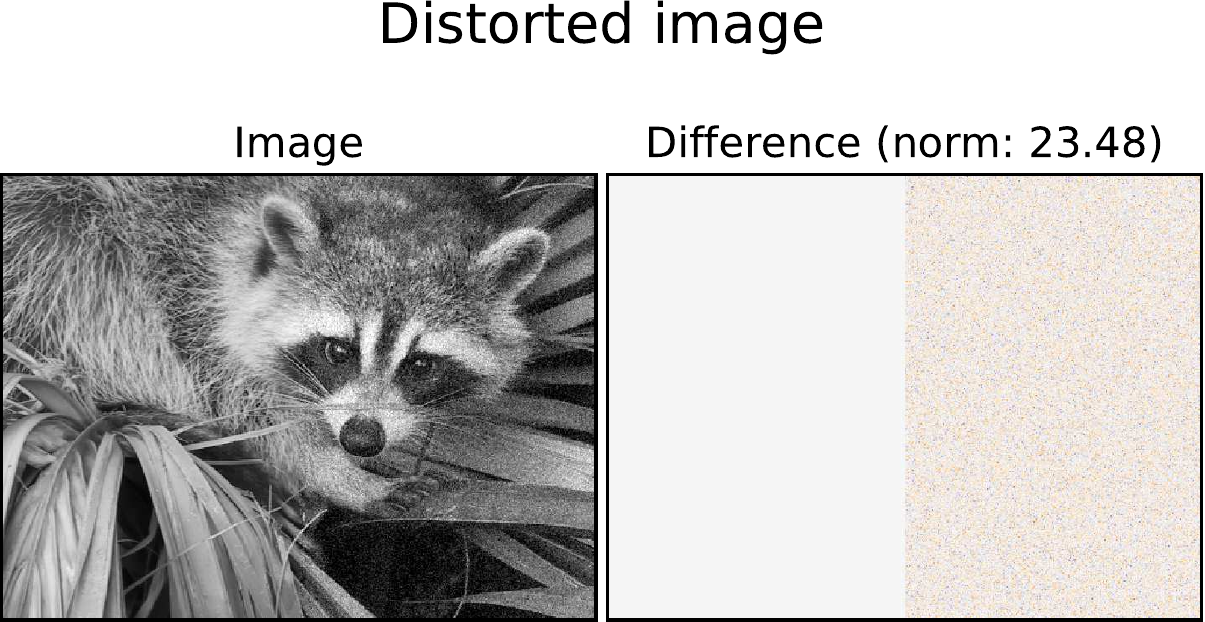}} \hspace{0pt}
\subfigure[]{\label{Fig6b}
\includegraphics[width= .48\linewidth]{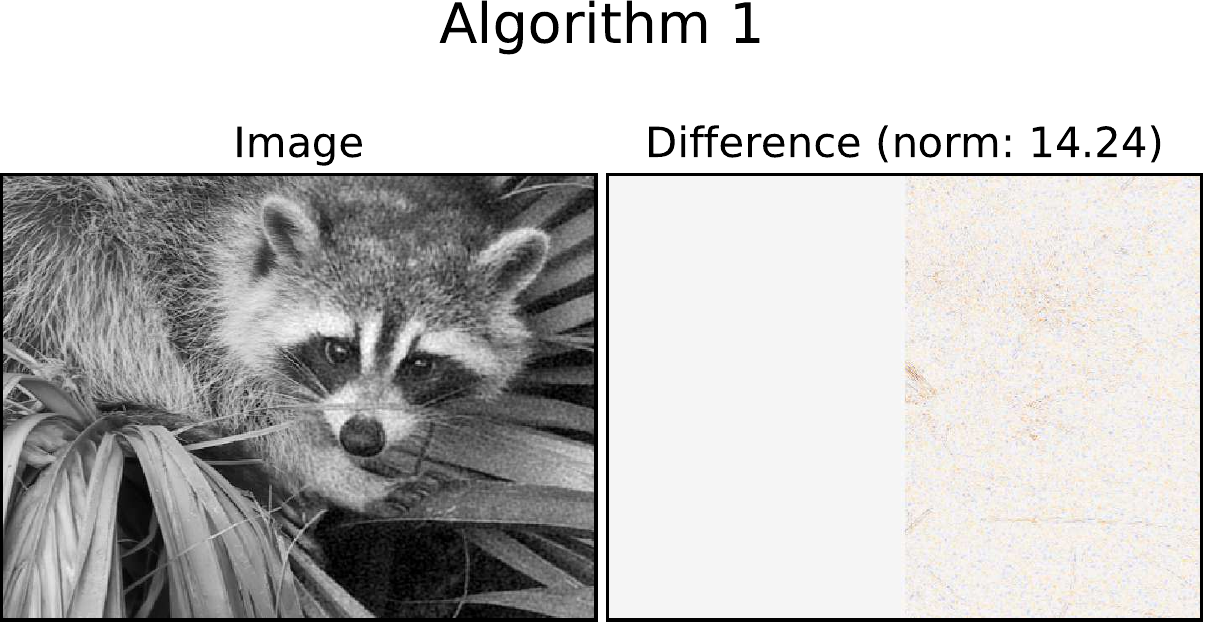}} \hspace{0pt}
\subfigure[]{\label{Fig6c}
\includegraphics[width= .48\linewidth]{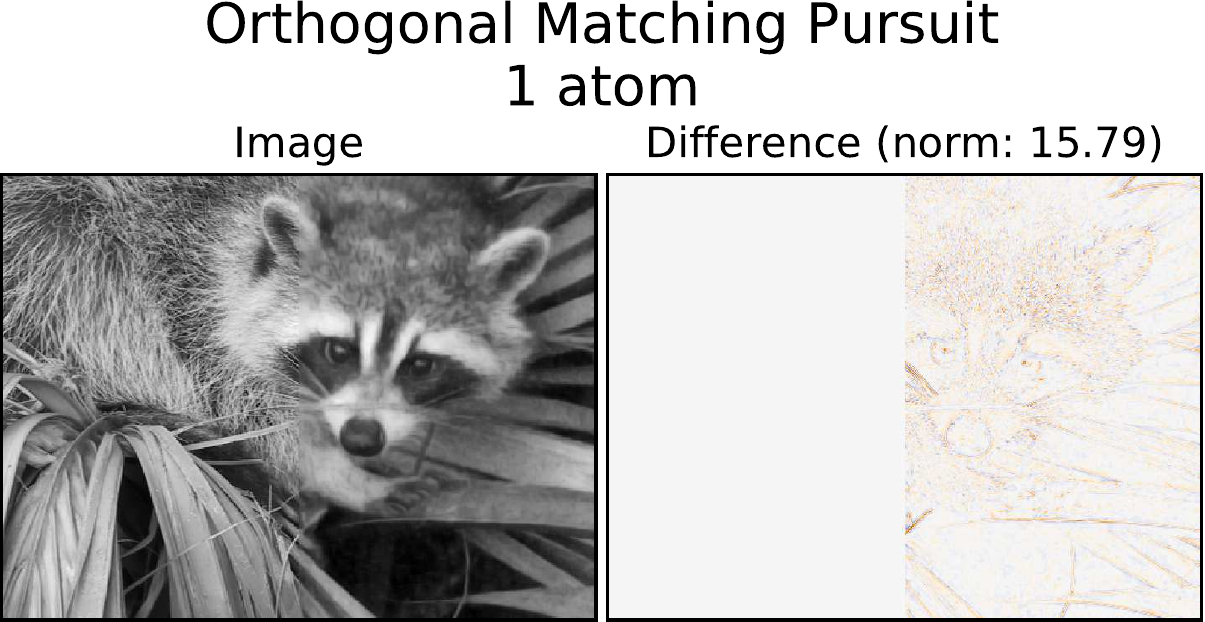}} \hspace{0pt}
\subfigure[]{\label{Fig6d}
\includegraphics[width= .48\linewidth]{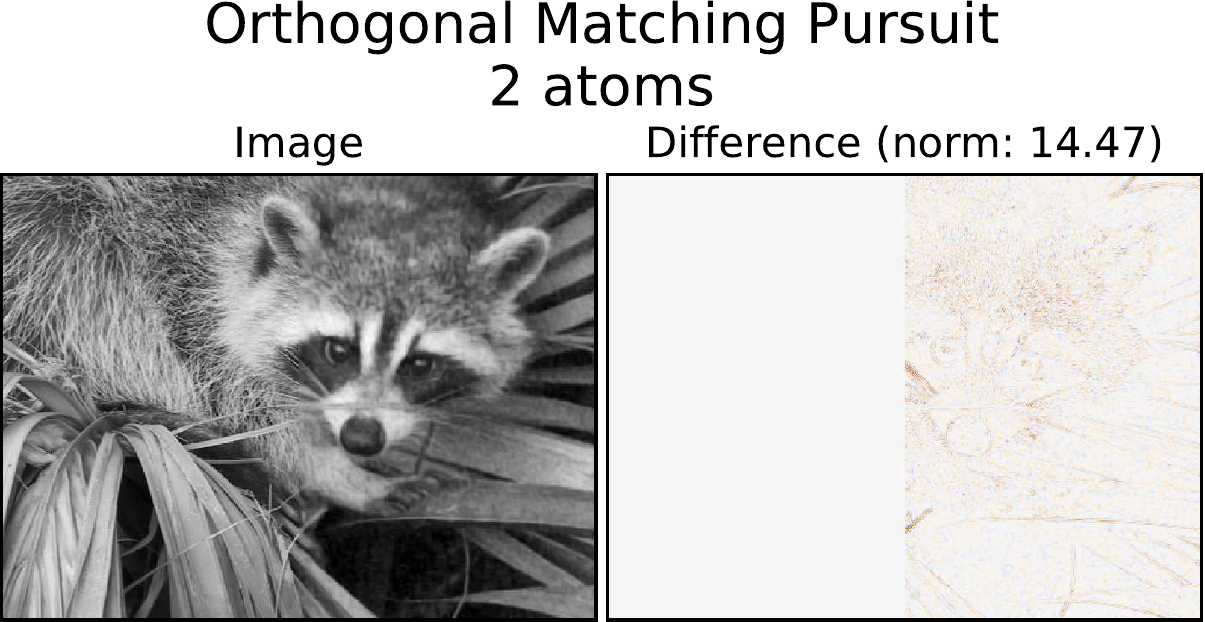}} \hspace{0pt}
\vskip -0.2 in\caption{Image denoising. (\textbf{a}): the right half of the
original image is distorted by Gaussian noise and the norm of the difference
between the distorted image and the original image is $23.48$. (\textbf{b}):
image denoising by our method (Algorithm 1), with $14$ bases used.
(\textbf{c}) and (\textbf{d}): image denoising using dictionary learning, with
$100$ bases used.}%
\label{Fig6}%
\end{figure}

\end{document}